\documentclass{article}
\usepackage{graphicx}
\usepackage{amsmath}
\usepackage{amssymb}
\usepackage{mathtools}
\usepackage{enumitem}
\usepackage{derivative}
\usepackage{amsthm}
\usepackage{thm-restate}
\usepackage[most]{tcolorbox}

\usepackage{amsthm}
\usepackage{algorithm}
\usepackage{algpseudocode}
\definecolor{MyDarkGreen}{RGB}{0, 100, 0}
\definecolor{mypurpose}{RGB}{151, 9, 252}
\definecolor{generativeblue}{RGB}{100, 143, 255}
\definecolor{method_figure_blue}{RGB}{37, 150, 190}
\PassOptionsToPackage{numbers, compress}{natbib}
\usepackage{float}
\usepackage{wrapfig}
\usepackage{bbm}
\usepackage{nicefrac}
\usepackage{algpseudocode}
\usepackage{titletoc}
\usepackage{algorithm}
\usepackage{extarrows}
\algrenewcommand\algorithmicrequire{\textbf{Input:}}
\algrenewcommand\algorithmicforall{\textbf{for all}}
\usepackage[most]{tcolorbox}

\tcbset{
  fluxbox/.style={
    colback=white,
    colframe=black!55,
    boxrule=0.5pt,
    arc=1pt,
    left=6pt,
    right=6pt,
    top=6pt,
    bottom=6pt,
  }
}

\newtcolorbox{graypropbox}{
  colback=gray!10,
  colframe=gray!60,
  boxrule=0.4pt,
  arc=1pt,
  left=3pt,
  right=3pt,
  top=2pt,
  bottom=2pt,
  boxsep=2pt
}
\newtcolorbox{blueemphbox}{
  colback=blue!10,
  colframe=blue!60,
  boxrule=0.4pt,
  arc=1pt,
  left=3pt,
  right=3pt,
  top=2pt,
  bottom=2pt,
  boxsep=2pt
}

\usepackage{caption}
 \usepackage[preprint]{neurips_2026}

\theoremstyle{plain}

\theoremstyle{definition}

\theoremstyle{remark}

\usepackage[utf8]{inputenc} 
\usepackage[T1]{fontenc}    
\usepackage[hidelinks]{hyperref}
\usepackage{url}            
\usepackage{booktabs}       
\usepackage{amsfonts}       
\usepackage{nicefrac}       
\usepackage{microtype}      
\usepackage{xcolor}         
\usepackage[capitalize,noabbrev]{cleveref}

\newcommand{\pdata}{p_{\mathrm{data}}}

\title{Generative Modeling with Flux Matching}

\author{%
  Peter Pao-Huang$^1$ \quad Xiaojie Qiu$^{1,2}$ \quad Stefano Ermon$^1$ \\
  $^1$Department of Computer Science, Stanford University\\
  $^2$Department of Genetics, Stanford University\\
  \texttt{\{peterph,xiaojie,ermon\}@stanford.edu}
}

\begin{document}

\maketitle

\begin{abstract}
  We introduce \textit{Flux Matching}, a new paradigm for generative modeling that generalizes existing score-based models to a broader family of vector fields that need not be conservative. Rather than requiring the model to equal the data score, the Flux Matching objective imposes a weaker condition that admits infinitely many vector fields whose stationary distribution is the data. This flexibility enables a class of generative models that cannot be learned under score matching, in which inductive biases, structural priors, and properties of the dynamics can be directly imposed or optimized. We show that Flux Matching performs strongly on high-dimensional image datasets and, more importantly, that our added freedom unlocks a range of applications including faster sampling, interpretable and mechanistic models, and dynamics that encode directed dependencies between variables. More broadly, Flux Matching opens a new dimension in generative modeling by turning the vector field itself into a design choice rather than a fixed target. Code is available at \url
  {https://github.com/peterpaohuang/flux_matching}.
\end{abstract}

\section{Introduction}
Many different vector fields produce diffusion processes with the same stationary distribution. Modern generative modeling \cite{song2019generative,song2020score,ho2020denoising}, however, canonically targets one particular vector field called the (Stein) score function, typically fit via score matching \cite{hyvarinen2005estimation,hyvarinen2005estimation,song2020score}, whose population loss is the Fisher divergence. Once learned, the score model can be used in Langevin dynamics or other gradient-based Markov chain Monte Carlo (MCMC) methods to generate samples from the target distribution $p_{\mathrm{data}}$. This score-based paradigm dominants current state-of-the-art image generation models \cite{rombach2022high,dhariwal2021diffusion,peebles2023scalable}, protein generation and design models \cite{watson2023novo,abramson2024accurate}, robotics \cite{chi2025diffusion,pearce2023imitating}, and others \cite{ho2022video,kong2020diffwave,gong2022diffuseq,corso2022diffdock,jing2022torsional,xu2022geodiff}.

\begin{wrapfigure}{r}{0.4\textwidth}
  \centering
  \vskip -0.15in
  \includegraphics[width=\linewidth]{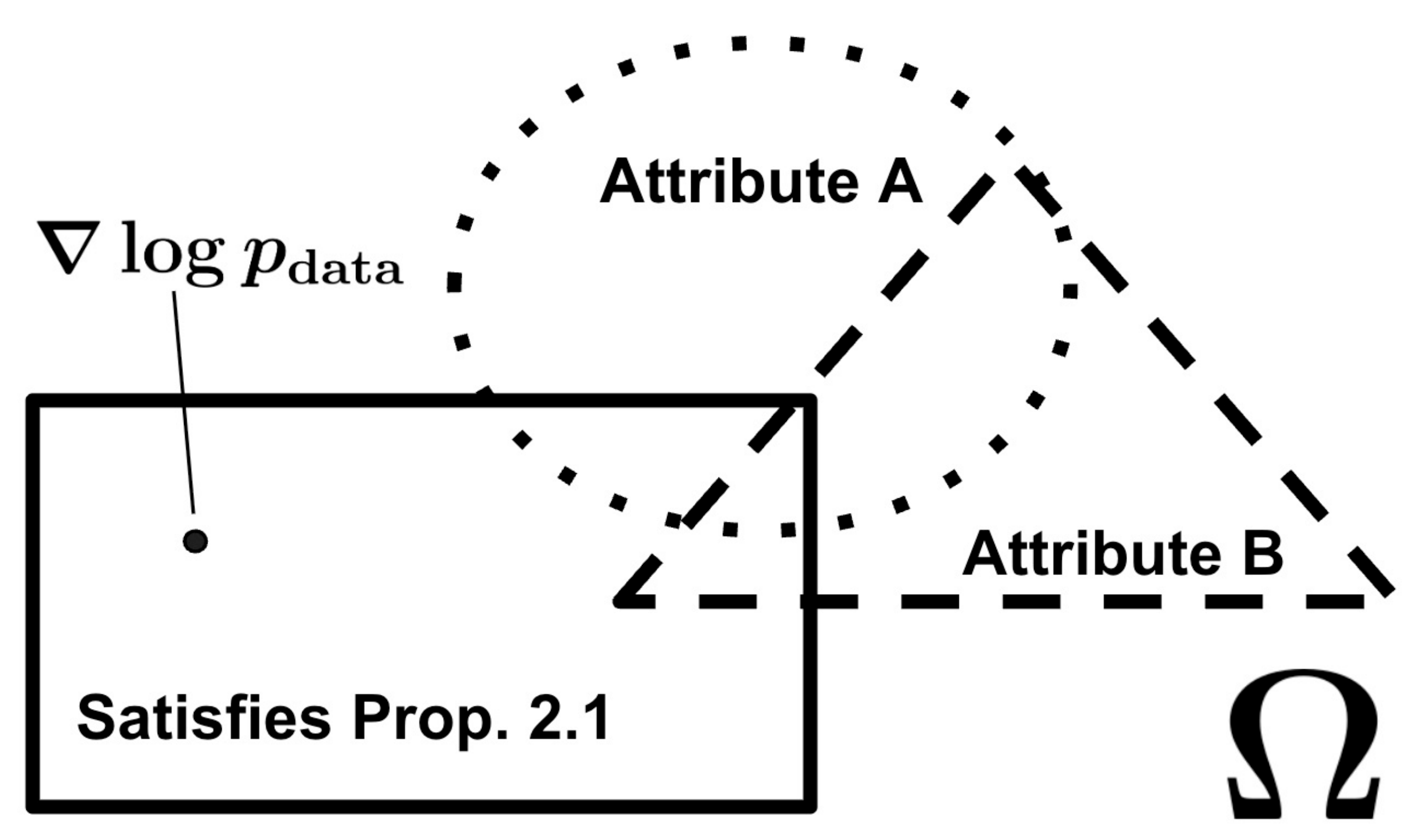}
  \caption{$\Omega$ is the space of vector fields $\in L^2(\pdata)$. Score matching learns $\nabla \log \pdata$, a single point in this space. In contrast, \textit{Flux Matching} can learn any vector field inside the rectangle.}
  \label{intro_figure}
  \vskip -0.1in
\end{wrapfigure}
The narrow focus on the score overlooks a large space of alternative vector fields whose diffusion processes share the same target distribution. We refer to these vector fields as \textit{generative vector fields}, or generative fields for short, with the Fokker–Planck equation (FPE) characterizing the full family \cite{horvat2024gauge}. \Cref{intro_figure} highlights the distinction: in the space of vector fields $\Omega$, score matching picks out a single point, $\nabla \log \pdata$, when any other point in the rectangle of generative fields characterized by the FPE is also equally valid. These non-score generative fields provide an extra degree of freedom for encoding useful attributes, illustrated abstractly by Attributes A and B in \Cref{intro_figure}. Concretely, they can capture directed dependencies between variables, impose mechanistic structure, improve smoothness or mixing, and produce dynamics that are meaningful in their own right rather than merely a means of sampling.

In this work, we propose \textit{Flux Matching}, a novel paradigm for learning generative vector fields beyond the score (aka any point inside the rectangle of \Cref{intro_figure}). Instead of requiring the model to equal \(\nabla\log \pdata\) pointwise, Flux Matching requires a weaker condition that only the divergence of the probability flux matches. This condition guarantees that the field generates the target distribution while leaving a nullspace of infinitely many valid generative fields. In order to compare the flux divergences in the same \(L^2(\pdata)\) geometry as the Fisher divergence---while preserving the non-score degrees
of freedom---we define a new statistical divergence called the
\textit{projected Fisher divergence} and derive a tractable Flux
Matching loss that computes it. 
\begin{figure}
  \centering
  \includegraphics[width=\textwidth]{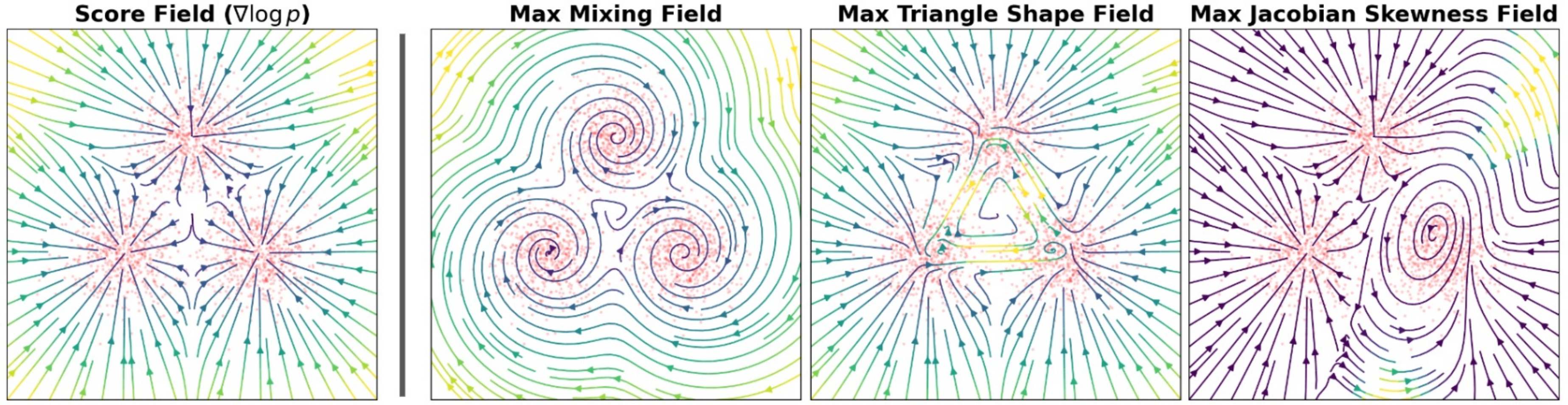}
  \caption{From \Cref{toy_controllable_experiments} where we maximize different vector field attributes that generate the same stationary distribution. (\textbf{Left}) Score function (\textbf{Right}) Alternative vector fields with useful properties.}
  \label{alternative_vector_fields_figure}
  \vskip -0.1in
\end{figure}
We further extend Flux Matching to the noise annealed setting used by diffusion models \cite{song2020score,song2019generative}. Rather than learning one field for the data distribution, we learn a continuum of fields for increasingly noise annealed distributions. Among the many valid vector fields that generate the target distribution, we also show how to select application-specific solutions either through architectural constraints or by adding regularizers that favor desired attributes.

Empirically, we show that Flux Matching is both scalable and useful in varying domains: (1) Flux Matching can be used as a standalone generative objective on high-dimensional image datasets such as CIFAR-10 and CelebA \(64\times 64\); (2) Flux Matching can learn faster mixing fields to accelerate sampling speed; (3) Flux Matching can fit interpretable RNA-velocity in single-cell genomics; and (4) Flux Matching can embed structural priors, such as directed temporal dependencies, directly into the generative field.

To summarize, our contributions are:
\begin{itemize}[leftmargin=2em, itemsep=0pt, topsep=0pt]
    \item We introduce \textit{Flux Matching}, a generative modeling paradigm that learns vector fields beyond the score by matching the divergence of the probability flux.
    \item We derive an efficient Flux Matching loss that preserves the Fisher divergence geometry.
    \item We extend Flux Matching to noise annealed generative modeling and show that it scales to complex image distributions.
    \item We demonstrate new use cases enabled by non-score generative fields, including faster mixing, interpretable fields like RNA velocity, and structured generative dynamics.
\end{itemize}
\section{Preliminaries}
Let $\pdata$ denote an unknown data distribution on $\mathbb{R}^d$, observed only through samples $\{x_i\}_{i=1}^n \sim \pdata$. A key goal of generative modeling is to learn a representation of $\pdata$ that allows us to generate new samples from this distribution. Existing approaches do this by either modeling the density itself \cite{dinh2016density,papamakarios2021normalizing}, an unnormalized density \cite{du2019implicit,lecun2006tutorial,gutmann2010noise,hinton2002training}, or the closely related score function \cite{hyvarinen2005estimation,song2020score,ho2020denoising}.
\subsection{The (Stein) score function $\nabla \log \pdata(x)$}
\textbf{Learning.} If the score was directly available, we could fit a vector field $f_{\theta} : \mathbb{R}^d \to \mathbb{R}^d$ by minimizing the Fisher divergence:
\begin{equation}
\label{fisher_divergence}
\mathcal J(\theta) = \mathbb{E}_{x \sim \pdata} \bigl[ \|f_{\theta}(x) - \nabla \log \pdata(x)\|^2 \bigr].
\end{equation}
However, the score $\nabla \log \pdata(x)$ is typically inaccessible because $\pdata$ itself is unknown. Score-based methods therefore rely on objectives that avoid direct access to this target, including implicit score matching \cite{hyvarinen2005estimation}, denoising score matching \cite{vincent2011connection}, and nonparametric kernel density estimation (KDE) approximations \cite{hyvarinen2005estimation,vincent2011connection}.

\textbf{Sampling.} Once a score estimator $f_{\theta} \approx \nabla \log \pdata$ has been learned, we can sample from $\pdata$ by simulating the diffusion $dx_t = f_{\theta}(x_t)\,dt + \sqrt{2}\,dW_t$, where $W_t$ is standard Brownian motion. In practice, we run gradient-based Markov chain Monte Carlo (MCMC) methods that discretize this diffusion. A standard example is unadjusted Langevin dynamics, $x_{k+1} = x_k + \eta f_\theta(x_k) + \sqrt{2\eta}\,\xi_k$ with $\xi_k \sim \mathcal{N}(0,I)$, which (under mild regularity) converges to $\pdata$ as $\eta \to 0$ and $k \to \infty$. For brevity, we say that vector field $f_{\theta}$ \textit{generates} $\pdata$ (or samples from $\pdata$) as shorthand for when the diffusion with drift $f_{\theta}$ has $\pdata$ as its stationary distribution.

\subsection{The Fokker--Planck equation}
To understand why the score is useful for sampling, it is helpful to look at how densities evolve under stochastic dynamics. Again, consider the diffusion $dx_t = f_{\theta}(x_t)\,dt + \sqrt{2}\,dW_t$. If $p_t$ denotes the time $t$ marginal density of $x_t$, then $p_t$ evolves according to the Fokker--Planck equation:
\begin{equation}
\label{fokker_planck_equation}
\frac{\partial p_t(x)}{\partial t}
=
-\Bigl[ \nabla \cdot \underbrace{\Bigl(p_t(x)f_{\theta}(x)\Bigr)}_{\text{flux of } f_{\theta}} - \nabla \cdot \underbrace{\Bigl(p_t(x)\nabla \log p_t(x)\Bigr)}_{\text{flux of score}}\Bigr].
\end{equation}
 Intuitively, \Cref{fokker_planck_equation} is a continuity equation in which the density evolves under the competing divergences of two \textit{probability fluxes}, the drift flux $p_t(x)f_{\theta}(x)$ carrying mass along the vector field $f_{\theta}$ and the score flux $p_t(x)\nabla \log p_t(x)$ encoding the smoothing effect of Brownian motion. At stationarity, the two forces perfectly balance and the density no longer changes in time, so $\partial_t p_t = 0$, which gives the following:
\begin{graypropbox}
\begin{restatable}{proposition}{propone}[Classical stationary Fokker--Planck characterization, Section 2.4 of \cite{pavliotis2014stochastic}]
\label{prop1}
$\pdata$ is a stationary distribution of the diffusion
$dx_t = f_{\theta}(x_t)\,dt + \sqrt{2}\,dW_t$ iff
\[
\begin{aligned}
\nabla \cdot \bigl(\pdata(x)f_{\theta}(x)\bigr)
- \nabla \cdot \bigl(\pdata(x) \nabla \log \pdata(x)\bigr) = 0 \qquad \text{for all } x.
\end{aligned}
\]
\end{restatable}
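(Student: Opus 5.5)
The plan is to reduce stationarity to the vanishing of the right-hand side of the Fokker--Planck equation \eqref{fokker_planck_equation} evaluated at $p_t \equiv \pdata$. Throughout I will assume the standing regularity conditions implicit in the statement: $\pdata$ is $C^2$ and strictly positive; $f_\theta$ is $C^1$ and grows at most linearly, so the SDE has a unique strong, non-explosive solution; and the Fokker--Planck equation holds classically, with uniqueness of solutions in the class of probability densities. I will also use the identity $\pdata\nabla\log\pdata = \nabla\pdata$, which shows that the second term in the display is just $\Delta\pdata$. Hence the condition reads $-\nabla\cdot(\pdata f_\theta) + \Delta\pdata = 0$, which is exactly $\mathcal{L}^*\pdata = 0$, where $\mathcal{L}^*$ is the formal adjoint of the generator $\mathcal{L}\varphi = f_\theta\cdot\nabla\varphi + \Delta\varphi$.

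\emph{($\Leftarrow$)} Suppose the divergence condition holds for all $x$. Then the constant-in-time path $p_t \equiv \pdata$ is a solution of \eqref{fokker_planck_equation}, since $\partial_t p_t = 0$ and the right-hand side vanishes identically. Start the diffusion at $x_0 \sim \pdata$. The law of $x_t$ is a solution of the same Fokker--Planck equation with the same initial datum, so by uniqueness it equals $\pdata$ for every $t$. This is precisely the statement that $\pdata$ is stationary.

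\emph{($\Rightarrow$)} Suppose $\pdata$ is stationary, so that $x_0 \sim \pdata$ implies $x_t \sim \pdata$ for all $t$. The weak form of the Fokker--Planck equation comes from It\^o's formula: for any $\varphi \in C_c^\infty(\mathbb{R}^d)$,
\[
\tfrac{d}{dt}\mathbb{E}[\varphi(x_t)] = \mathbb{E}[\mathcal{L}\varphi(x_t)].
\]
Here the martingale term has zero expectation because $\nabla\varphi$ is bounded. Stationarity makes the left side zero, so $\int \pdata\,\mathcal{L}\varphi\,dx = 0$ for all test functions $\varphi$. Integrating by parts, with no boundary terms because $\varphi$ has compact support, this becomes $\int \varphi\,\mathcal{L}^*\pdata\,dx = 0$ for all $\varphi$. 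Since $\mathcal{L}^*\pdata$ is continuous under the regularity assumptions, it must vanish pointwise, which gives the displayed identity for all $x$.

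The main obstacle is the $(\Leftarrow)$ direction. A stationary solution of the PDE gives a stationary law of the process only if the Fokker--Planck equation has a unique solution among probability densities; this can fail for non-conservative drifts with bad growth, which can cause explosion or non-uniqueness. I would handle it by invoking the well-posedness results in Section 2.4 of \cite{pavliotis2014stochastic} under the stated growth and regularity conditions, rather than proving uniqueness from scratch. The $(\Rightarrow)$ direction is routine once the test-function argument is in place.
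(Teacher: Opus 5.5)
Your proposal is correct and is essentially the paper's argument. The paper's proof only cites the stationary Fokker--Planck equation $\mathcal{L}^*\pdata=0$ (Eq.~2.37 of \cite{pavliotis2014stochastic}), i.e.\ setting $\partial_t p_t=0$ with $p_t\equiv\pdata$ in \eqref{fokker_planck_equation}, and your generator/It\^o argument unpacks that citation into its two directions. What you add is the hypotheses the paper leaves implicit, notably that $(\Leftarrow)$ needs uniqueness of probability solutions of the Fokker--Planck equation, which your linear-growth and non-explosion assumptions are there to guarantee; without such a condition, e.g.\ for an explosive drift, the unqualified ``iff'' can fail.
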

\end{graypropbox}
We therefore refer to vector fields $f_{\theta}$ satisfying \Cref{prop1} as \textit{generative vector fields}, since simulating the corresponding diffusion produces samples from $\pdata$. The usual choice is $f_{\theta}(x) = \nabla \log \pdata(x)$, for which the condition holds trivially. Importantly, however, \Cref{prop1} shows that the score is \emph{not} the only valid drift. Any vector field of the form
\begin{equation}
\label{flexibility_equation}
f_{\theta}(x) = \nabla \log \pdata(x) + v(x) \quad \text{with} \quad \nabla \cdot \bigl(\pdata(x)v(x)\bigr) = 0,
\end{equation}
has the same stationary distribution $\pdata$. In other words, there is generally a whole family of vector fields that preserve the same target distribution, and the score is only one particular member of this family. Pictorially, some alternative vector fields are shown in \Cref{alternative_vector_fields_figure}.

\section{Flux Matching}
\label{flux_matching_method_section}
We now shift from learning $f_{\theta}$ by matching the score to learning $f_{\theta}$ by matching the divergence of the \emph{probability flux} the score induces, which we call \emph{Flux Matching}. The motivation comes directly from the Fokker--Planck equation, where if our goal is to ensure that $f_\theta$ generates $\pdata$, then it is only necessary to match the flux divergence $\nabla \cdot (\pdata f_\theta)$ (from \Cref{prop1}) rather than to match the vector field pointwise. Consequently, Flux Matching allows learning the family of generative vector fields that need not be the score (e.g. \Cref{alternative_vector_fields_figure}).
\begin{blueemphbox}
\textbf{New Capability: \textit{Same} Distribution, \textit{Many} Dynamics}. Flux Matching enables vector fields to follow any dynamics that generate the target distribution.
\end{blueemphbox}
\subsection{Projected Fisher Divergence}
\label{projected_fisher_subsection}
Matching $\nabla\cdot(\pdata f_\theta)$ and $\nabla\cdot(\pdata \nabla\log \pdata)$ requires a geometry suited to learning vector fields since $f_{\theta}$ is the vector field we optimize. The most direct option, comparing the two flux divergences as scalar fields, is invariant to $\pdata$-preserving dynamics by construction, but moves one derivative beyond the $L^2(\pdata)$ vector field geometry of the Fisher divergence, making the objective sensitive to derivative-level artifacts \cite{chartrand2011numerical}. We therefore seek an objective that matches flux divergences \emph{within} the geometry of the Fisher divergence while remaining invariant to any perturbation $v$ with $\nabla\cdot(\pdata v)=0$. To this end, let $\Pi_{\mathrm{flux}} f$ denote the unique gradient field that satisfies $\nabla\cdot(\pdata\,\Pi_{\mathrm{flux}} f_{\theta})=\nabla\cdot(\pdata f_{\theta})$, and note that since $\nabla\log \pdata$ is already a gradient field, $\Pi_{\mathrm{flux}}(\nabla\log \pdata)=\nabla\log \pdata$. We define the \textit{projected Fisher divergence}:
\begin{equation}
\label{grad_proj_fisher_div}
\widetilde{\mathcal{J}}(\theta)
:=
\mathbb{E}_{x\sim \pdata}
\bigl[
\|\Pi_{\mathrm{flux}}f_\theta(x)-\nabla\log \pdata(x)\|^2
\bigr].
\end{equation}
This objective is invariant to any perturbation $v$ with $\nabla\cdot(\pdata v)=0$ because $\nabla\cdot(\pdata(f_\theta+v))=\nabla\cdot(\pdata f_\theta)$, and hence $\Pi_{\mathrm{flux}}(f_\theta+v)=\Pi_{\mathrm{flux}}f_\theta$ by the definition of $\Pi_{\mathrm{flux}}$. Directly computing \Cref{grad_proj_fisher_div}, however, is intractable in high dimensions.
\subsection{Flux Matching Loss}
\label{flux_loss_learning_objective}
We provide a scalable training objective with the same gradients as the
projected Fisher divergence of \Cref{grad_proj_fisher_div}. Let
\(u_\theta:=f_\theta-\nabla\log\pdata\) and
\(r_\theta:=\pdata^{-1}\nabla\cdot(\pdata u_\theta)
=\nabla\cdot u_\theta+u_\theta\cdot\nabla\log\pdata\). To bridge
\Cref{grad_proj_fisher_div} to our final loss, we show a series of key identities (proven in \Cref{proofs_appendix}):
\begin{equation}
\label{projected_fisher_to_ibp_chain}
\begin{aligned}
\widetilde{\mathcal J}(\theta)
&= \int_0^\infty
\mathbb E_{x_0\sim\pdata,\,x_t|x_0}
\!\left[
r_\theta(x_0)r_\theta(x_t)
\right]\,dt
&& \text{(\textbf{Step 1} via \Cref{lemma1})} \\
&= -\int_0^\infty
\mathbb E_{x_0\sim\pdata,\,x_t|x_0}
\!\left[
u_\theta(x_0)^\top
\frac{\partial x_t}{\partial x_0}^{\top}
\nabla_{x_t}r_\theta(x_t)
\right]\,dt
&& \text{(\textbf{Step 2} via \Cref{lemma2})} .
\end{aligned}
\end{equation}
\begin{wrapfigure}{h}{0.4\textwidth}
  \centering
  \vskip -0.3in
  \includegraphics[width=\linewidth]{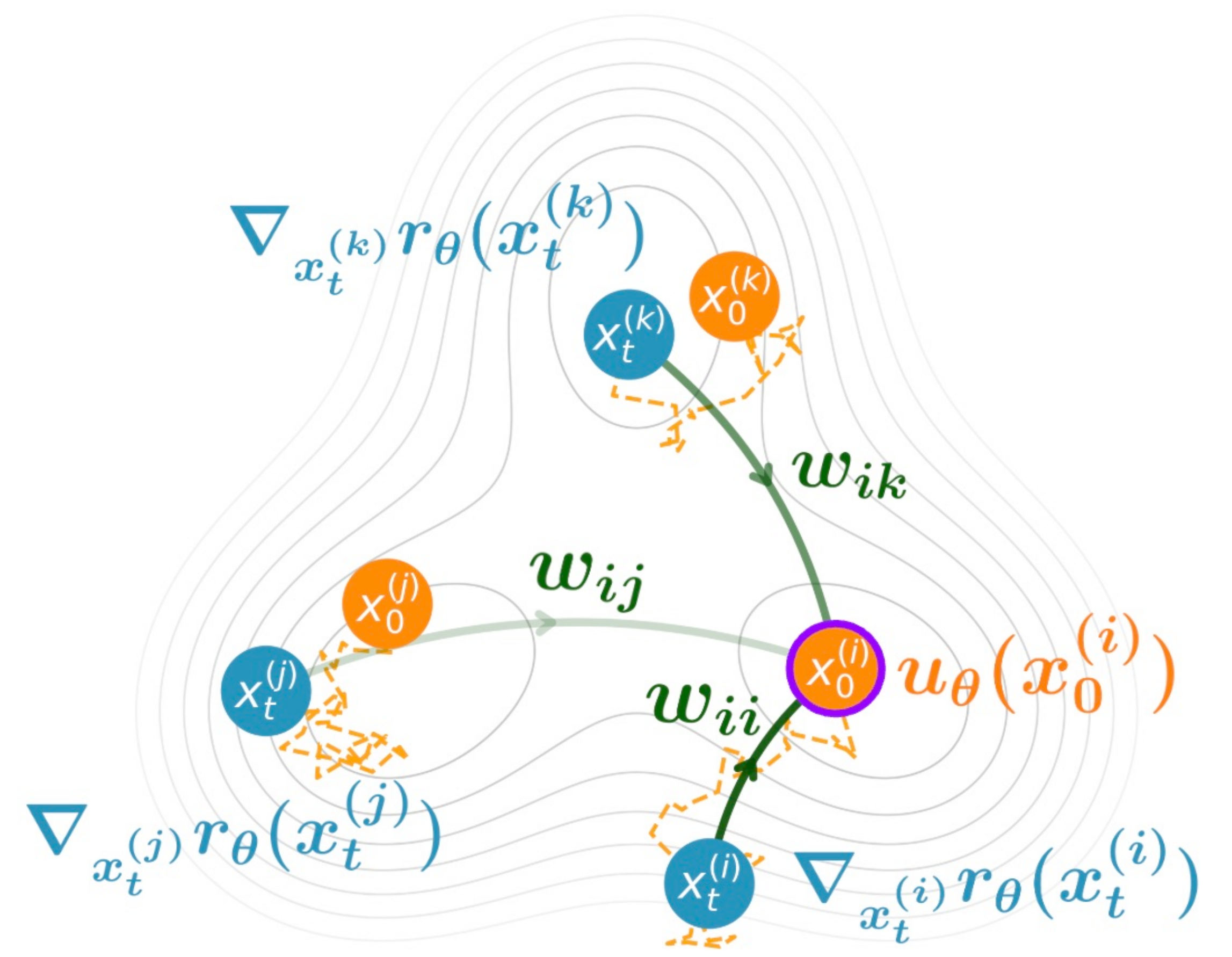}
  \caption{Geometric interpretation of Flux Matching. Colors are detailed in the accompanying \Cref{method_algorithm}.}
  \label{method_geometric}
  \vskip -0.15in
\end{wrapfigure}Intuitively, $r_\theta$ (aka the \textit{Langevin-Stein operator}) is invariant to any \(\pdata\)-preserving dynamics, but applies the local differential operator $\nabla \cdot$, which sits one derivative beyond the Fisher divergence geometry. \textbf{Step 1} closes this gap by using diffusion simulations $dx_t=\nabla\log\pdata(x_t)\,dt+\sqrt{2}\,dW_t$ to propagate the pointwise value of $r_\theta$ across nearby regions of the data distribution. Integrating the autocorrelation $r_\theta(x_0)r_\theta(x_t)$ over time $t$ accumulates these propagated values, effectively "undoing" the $\nabla \cdot$ operator and projecting $r_\theta$ back in the Fisher divergence geometry, as illustrated in \Cref{method_geometric}. \textbf{Step 2} uses integration by parts to convert the autocorrelation $r_\theta(x_0)r_\theta(x_t)$ into components $u_\theta(x_0)$ and $\partial x_t/\partial x_0^\top\,\nabla_{x_t} r_\theta(x_t)$. As a final \textbf{Step 3}, we apply a stop-gradient on $\partial x_t/\partial x_0^\top\,\nabla_{x_t} r_\theta(x_t)$. \Cref{lemma3} shows this preserves the gradient w.r.t.\ $\theta$ (up to a factor of $2$) while eliminating the need to backpropagate through the expensive $\partial x_t/\partial x_0^\top\,\nabla_{x_t} r_\theta(x_t)$ term. Sampling the simulation horizon \(t\sim q\) with $t \in [0,\infty)$ gives the resulting \textit{Flux Matching loss}:
\begin{equation}
\label{flux_matching_loss}
    \mathcal L_{\mathrm{flux}}(\theta)
:=
-\mathbb{E}_{\substack{t\sim q\\x_0\sim \pdata,\,x_t|x_0}}\!\left[
\frac{1}{q(t)}\,
u_\theta(x_0)^\top
\operatorname{sg}\!\left(
\frac{\partial x_t}{\partial x_0}^{\top}\nabla_{x_t}r_\theta(x_t)
\right)
\right] \quad \blacktriangleright\ \textbf{Flux Matching}
\end{equation}
where $\operatorname{sg}$ denotes stop-gradient and $x_t\mid x_0$ denotes a simulation chain from $x_0$ to $x_t$. We formalize the end-to-end connection between
the projected Fisher divergence (\Cref{grad_proj_fisher_div}) and the final
Flux Matching loss (\Cref{flux_matching_loss}) via the following:
\WFclear
\begin{graypropbox}
\begin{restatable}{theorem}{theoremone}
\label{theorem1}
Assume \(\pdata> 0\) on \(\mathbb R^d\) and boundary terms in integration-by-parts arguments vanish. Then,
\[
\nabla_\theta \widetilde{\mathcal{J}}(\theta)=2\,\nabla_\theta \mathcal L_{\mathrm{flux}}(\theta).
\]
\end{restatable}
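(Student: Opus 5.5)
We prove the theorem by chaining the three steps of \Cref{projected_fisher_to_ibp_chain}; the Step~3 gradient identity is treated as the main obstacle.

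\textbf{Step 1 (\Cref{lemma1}).} Let $L g = \nabla\log\pdata\cdot\nabla g + \Delta g$ be the generator of the Langevin diffusion $dx_t=\nabla\log\pdata\,dt+\sqrt2\,dW_t$. It is self-adjoint on $L^2(\pdata)$ and satisfies $\mathbb E_{\pdata}[g\,Lh] = -\mathbb E_{\pdata}[\nabla g\cdot\nabla h]$. Write $\Pi_{\mathrm{flux}}f_\theta-\nabla\log\pdata=\nabla\phi$, with $\nabla\cdot(\pdata\nabla\phi)=\nabla\cdot(\pdata u_\theta)$. This is equivalent to $L\phi=r_\theta$. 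Note that $\mathbb E_{\pdata}[r_\theta]=0$ by the divergence theorem, using the vanishing boundary terms. Then
\[
\widetilde{\mathcal J}=\mathbb E_{\pdata}\|\nabla\phi\|^2=-\mathbb E_{\pdata}[\phi\,r_\theta].
\]
Next represent $-\phi=\int_0^\infty P_t r_\theta\,dt$, where $P_t$ is the Langevin semigroup. This holds because $\frac{d}{dt}P_t\phi=P_tL\phi=P_tr_\theta$ and $P_t\phi\to\mathbb E_{\pdata}\phi$, which we normalize to $0$. Substituting gives $\widetilde{\mathcal J}=\int_0^\infty\mathbb E[r_\theta(x_0)r_\theta(x_t)]\,dt$.

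\textbf{Step 2 (\Cref{lemma2}).} Write $\mathbb E[r_\theta(x_0)r_\theta(x_t)]=\mathbb E_{x_0}[r_\theta(x_0)\,g_t(x_0)]$ with $g_t=P_tr_\theta$. Integrate by parts against $\pdata$: since $\pdata r_\theta=\nabla\cdot(\pdata u_\theta)$, we get $\mathbb E[r_\theta g_t]=-\mathbb E[u_\theta\cdot\nabla g_t]$. Finally, differentiate inside the expectation using pathwise (synchronous-coupling) differentiation:
\[
\nabla g_t(x_0)=\mathbb E\Bigl[\tfrac{\partial x_t}{\partial x_0}^{\!\top}\nabla r_\theta(x_t)\,\Big|\,x_0\Bigr].
\]

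\textbf{Step 3 (\Cref{lemma3}), the main obstacle.} The chain dynamics depend only on $\pdata$, not on $\theta$, so $\theta$ enters Step~2 only through $u_\theta$ and $r_\theta$. Rather than differentiate Step~2 directly, differentiate the symmetric form from Step~1. The map $r\mapsto\int_0^\infty\mathbb E[r(x_0)r(x_t)]\,dt=\langle r,(-L)^{-1}r\rangle_{\pdata}$ is a quadratic form with a self-adjoint operator. Hence
\[
\nabla_\theta\widetilde{\mathcal J}=2\int_0^\infty\mathbb E[\partial_\theta r_\theta(x_0)\,r_\theta(x_t)]\,dt .
\]
Now $\partial_\theta r_\theta=\pdata^{-1}\nabla\cdot(\pdata\,\partial_\theta u_\theta)$, with $\partial_\theta u_\theta=\partial_\theta f_\theta$. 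Repeating the integration by parts of Step~2 with $\partial_\theta u_\theta$ in place of $u_\theta$ yields
\[
-2\int_0^\infty\mathbb E\Bigl[\partial_\theta u_\theta(x_0)^\top\tfrac{\partial x_t}{\partial x_0}^{\!\top}\nabla r_\theta(x_t)\Bigr]dt .
\]
This is exactly $2\nabla_\theta$ of the stop-gradient loss \Cref{flux_matching_loss}, once the time integral is replaced by importance sampling $t\sim q$ with weight $1/q(t)$. This requires $q>0$ on $[0,\infty)$.

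The delicate points concentrate here and in Step~1. First, self-adjointness of $P_t$ in $L^2(\pdata)$ (reversibility) is what produces the factor $2$ instead of two distinct terms. Second, we must justify exchanging $\nabla_\theta$ with $\int_0^\infty$ and with the expectation. This needs integrability of the autocorrelation, i.e.\ decay of $P_tr_\theta$, which we take as part of the regularity and boundary assumptions.
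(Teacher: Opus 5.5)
Your proposal is correct and follows the paper's proof: Steps 1 and 2 are exactly \Cref{lemma1} and \Cref{lemma2}, and your Step 3 uses the same ingredients as \Cref{lemma3}, namely reversibility for the factor $2$ and an integration by parts against $\pdata\,\partial_\theta u_\theta$. The only difference is the order of operations. You differentiate the symmetric form $\langle r_\theta,(-L)^{-1}r_\theta\rangle_{\pdata}$ and use self-adjointness before integrating by parts, whereas the paper differentiates the Lemma~2 form, converts both product-rule terms back into $r$--$r$ correlations, and equates them by reversibility; your ordering never needs the non-stop-gradient loss $\widetilde{\mathcal L}_{\mathrm{Flux}}$.
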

\end{graypropbox}

\subsection{Estimating the Loss in Practice}
\label{loss_in_practice}
\begin{algorithm}[t]
  \caption{One Training Iteration of Flux Matching.}
  \label{method_algorithm}
  
  \vspace{0.2em}
  \footnotesize
  \textcolor{orange}{\(\boldsymbol{x_0}\)} denotes initial samples,
  \textcolor{mypurpose}{\(\boldsymbol{x_0^{(i)}}\)} highlights the selected initial sample,
  \textcolor{method_figure_blue}{\(\boldsymbol{x_t}\)} denotes simulated samples at time \(t\),
  and \textcolor{MyDarkGreen}{\(\boldsymbol{w_{im}}\)} denotes the weight of simulated sample \(m\) on \(x_0^{(i)}\). 
  \textit{(Note: Font colors match the visual nodes and edges in \Cref{method_geometric}).}
  \vspace{0.2em}

  \begin{algorithmic}[1]
    \footnotesize 
    \Require minibatch $\{\textcolor{orange}{\boldsymbol{x_0^{(i)}}}\}_{i=1}^B$, bandwidth $\sigma$, learnable vector field $f_\theta$

    \State Construct $\nabla \log \widehat p_\sigma$ via \Cref{batch_kde_score} and compute $\textcolor{orange}{\boldsymbol{
    u_\theta(x_0^{(i)})
    =
    f_\theta(x_0^{(i)})
    -
    \nabla \log \widehat p_\sigma(x_0^{(i)})
    }}
    \quad \text{for all } i$
    
    \State Sample shared simulation time $t \sim q$ on $[0,T]$ with $T=4\sigma^2$
    
    \State MCMC
    $\{\textcolor{orange}{\boldsymbol{x_0^{(i)}}}\}_{i=1}^B$
    to
    $\{\textcolor{method_figure_blue}{\boldsymbol{x_t^{(i)}}}\}_{i=1}^B$
    with $\nabla \log \widehat p_\sigma$ using \Cref{ou_mcmc}

    \State For selected initial sample $\textcolor{mypurpose}{\boldsymbol{x_0^{(i)}}}$, calculate \Cref{vr_endpoints}:
    $
    \widehat{
    \partial x_t/\partial x_0^{\top}\nabla r_\theta
    }(x_0^{(i)},t)
    :=
    \sum_{m=1}^B
    \textcolor{MyDarkGreen}{\boldsymbol{w_{im}}}
    \textcolor{method_figure_blue}{\boldsymbol{\nabla_{x_t^{(m)}} r_\theta(x_t^{(m)})}}
    $

    \State Apply Step 4 for every initial sample \(i=1,\dots,B\)

    \State Form $\mathcal{L}_{\mathrm{flux}}$ from \Cref{flux_matching_loss} and update $\theta$
  \end{algorithmic}
\end{algorithm}
\textbf{Approximating $\nabla \log \pdata$}. We replace the unknown score $\nabla \log \pdata$ with a nonparametric score approximation \cite{hyvarinen2005estimation} by building a KDE of the minibatch:
\begin{equation}
\label{batch_kde_score}
\nabla \log \widehat p_\sigma(x)
=
\left(\sum_{i=1}^B \exp\!\left(-\|x-x_i\|^2 \big/ 2\sigma^2\right)(x_i-x)\right)
\big/
\left(\sigma^2 \sum_{i=1}^B \exp\!\left(-\|x-x_i\|^2 \big/ 2\sigma^2\right)\right),
\end{equation}
which is asymptotically unbiased in the usual large-batch, vanishing-bandwidth regime and is a common technique used in modern generative models \cite{xu2023stable, song2023consistency, geng2025mean, lai2026unified}.

\textbf{Simulating $x_t$ from $x_0$}. We sample the simulation horizon $t\sim q$ where $q$ is supported on $[0,\infty)$. Simulating arbitrarily large horizons is computationally infeasible, however, so we truncate the support of $q$ to $[0,T]$. In practice, we find that defining $q$ to be either a truncated uniform or exponential and setting $T = 4\sigma^2$ is sufficient (justification is provided in \Cref{appendix_importance_sampler}). Given $t$, we run $4$ MCMC steps with step size $h=\frac{1}{4}t$ starting from $x_0$. To enable stable large-step sampling, we use an exponentially integrated Langevin update:
\begin{equation}
\label{ou_mcmc}
x_{k+1}
=
\mu_k
+
e^{-h} (x_k-\mu_k)
+
\sigma\sqrt{1-e^{-2h}} \xi_k,
\quad
\mu_k
=
x_k+\sigma^2 \nabla \log p_\sigma(x_k),
\quad
\xi_k \sim \mathcal N(0,I).
\end{equation}
Note that the computational cost of calculating \Cref{ou_mcmc} many times is \textit{negligible} compared to running $f_{\theta}$ once since $\nabla \log p_\sigma$ is a closed-form KDE approximation. 

\textbf{Estimating $\smash{\nicefrac{\partial x_t}{\partial x_0}^{\top}}\nabla_{x_t}r_\theta(x_t)$.}
One could backpropagate through each simulated chain to obtain the pathwise term $\smash{\nicefrac{\partial x_t}{\partial x_0}^{\top}}\nabla_{x_t}r_\theta(x_t) = \nabla_{x_0}r_\theta(x_t)$. However, this term appears inside a conditional expectation over $x_t\mid x_0$ (aka an expectation over simulation paths from $x_0$). Therefore, for each initial point $\smash{x_0^{(i)}}$ and time $t$, we can reduce variance by estimating the conditional expectation using all simulated chains $\smash{\{x_t^{(j)}\}_{j=1}^B}$ from the current minibatch (particularly helpful is specific regimes of $\sigma$, which we detail in \Cref{noise_annealed_details_appendix}), rather than only using a single chain $\smash{x_t^{(i)}}$ generated from $\smash{x_0^{(i)}}$. 

Unlike the same chain sensitivity \(\smash{\partial x_t^{(i)}/\partial x_0^{(i)}}\), the cross-chain sensitivity \(\smash{\partial x_t^{(j)}/\partial x_0^{(i)}}\) where $i \neq j$ is unavailable, so we approximate it with Gaussian transition weights \(w_{ij}\) normalized over minibatch endpoints \(j\) (further details in \Cref{appendix_variance_reduction_estimation}). Specifically, for each \(\smash{x_0^{(i)}}\), we replace \(\smash{\partial x_t/\partial x_0^{\top}}\nabla_{x_t}r_\theta(x_t)\) with the variance-reduced estimator:
\begin{equation}
\label{vr_endpoints}
\widehat{(\partial x_t/\partial x_0)^{\top}\nabla r_\theta}(x_0^{(i)},t) := \sum_{j=1}^B w_{ij}(t)\,\nabla_{x_t^{(j)}} r_\theta\!\left(x_t^{(j)}\right).
\end{equation}
Finally, the divergence term in $r_\theta = \nabla\cdot u_\theta + u_\theta\cdot \nabla\log p$ is approximated with a single-sample Hutchinson trace estimator \cite{hutchinson1989stochastic}.

\subsection{Extension to Noise Annealed Generative Fields}
\label{noise_conditioned_section}
Rather than learning a single vector field at the data distribution, diffusion
models \cite{song2020score, song2019generative,lipman2022flow, ho2020denoising, sohl2015deep} learn score fields over a continuum of noise annealed distributions
\(\{p_\sigma\}_{\sigma\sim\mathcal P}\) where \(\mathcal P\) denotes the sampling distribution for noise levels used during training. Here, \(p_\sigma=\pdata*\mathcal N(0,\sigma^2 I)\). Flux Matching extends to this setting
by applying the same objective independently at each noise level. Let
\(f_\theta^\sigma(x):=f_\theta(x,\sigma)\), and let
$q_{\sigma}$ denote the $\sigma$-dependent importance sampler over simulation horizons (can be learned via \Cref{appendix_importance_sampler}).
We write
\begin{equation}
\mathcal L_{\mathrm{flux}}^\sigma(\theta)
:=
\mathcal L_{\mathrm{flux}}
\bigl(
\theta;\,
p_\sigma,\,
q_{\sigma},\,
f_\theta^\sigma
\bigr),
\end{equation}
where the right-hand side denotes \Cref{flux_matching_loss} with
\(p\) replaced by \(p_\sigma\), \(f_\theta\) replaced by
\(f_\theta^\sigma\), \(q\) replaced by $q_{\sigma}$, and all
noise-dependent quantities evaluated at the same \(\sigma\) 
(e.g., the truncation horizon \(T=4\sigma^2\)).
The noise annealed Flux Matching objective is then
\begin{equation}
\label{noise_annealed_flux_matching}
    \mathcal L_{\mathrm{flux\text{-}noise}}(\theta,\eta)
:=
\mathbb E_{\sigma\sim\mathcal P}
\left[
\mathcal L_{\mathrm{flux}}^\sigma(\theta) / \exp(s_\eta(\sigma))
+
s_\eta(\sigma)
\right].
\end{equation}
\(s_\eta(\sigma)\) is a learned normalizer (single-layer MLP) that reweighs losses from
different noise levels to be on comparable scales \cite{karras2024analyzing} and is simultaneously trained with the main network.

\subsection{Sampling \& Likelihood Computation}
\label{sampling_and_likelihood}
Training and sampling are \textit{fully decoupled}. Although \(f_\theta\)
is learned through Flux Matching, at sampling time it can replace the
score term in standard score-based samplers (e.g. unadjusted Langevin dynamics) with no algorithmic changes and additional cost. Similarly, models learned via noise annealed Flux Matching can be used with reverse diffusion and probability-flow ODE samplers by simply replacing the noise conditioned score with \(f_\theta^\sigma\). For probability-flow ODE sampling, likelihoods can
also be computed with the usual instantaneous change-of-variables
formula \cite{chen2018neural}. See \Cref{score_replacement} for a formal statement.

\subsection{Learning Useful Generative Vector Fields}
Flux Matching learns a family of vector fields that generate the same target
distribution. The remaining task is to choose an element of this family with
properties useful for the application. We provide two general strategies:

\textbf{(1) Application Specific Loss.} Augment the Flux Matching
    objective with a loss $L_{\mathrm{app}}$ that encourages the desired properties, e.g.,
    \(\mathcal L_{\mathrm{flux}}+\sum_i\lambda_{\mathrm{app},i}
    \mathcal L_{\mathrm{app},i}\). For example, adding
    \(\lambda_{\mathrm{L2}}\|f_\theta\|_{L^2(p)}^2\) recovers the
    score function when minimized. In the noise annealed setting, nonnegative application losses can be normalized across noise levels using an equivalent learned normalizer as \Cref{noise_annealed_flux_matching}. For signed losses, we instead can normalize within discrete \(\sigma\)-buckets using running statistics, as in \cite{choi2022density}.
    
\textbf{(2) Model Parameterization.} Desired attributes can also be built
    directly into the architecture used to represent \(f_\theta\). For example,
    an attention mask in a transformer can enforce directed relationships among variables \cite{vaswani2017attention}.

\Cref{experiment_section} instantiates these two strategies in different settings, showing how Flux Matching can select useful generative fields while preserving the same
target distribution. 
\section{Applications of Flux Matching}
\label{experiment_section}
We evaluate Flux Matching across five settings that highlight unique applications of the method. The first two experiments use the Flux Matching loss from \Cref{flux_matching_loss}: \Cref{toy_controllable_experiments} isolates the main controllability benefit of Flux Matching on a toy distribution, and \Cref{interpretable_generative_fields} shows that Flux Matching can fit biologically interpretable vector fields. The remaining three experiments use the noise annealed objective from \Cref{noise_annealed_flux_matching}: \Cref{unrestricted_generative_fields} tests Flux Matching as a standalone image generation objective, \Cref{fast_mixing_generative_fields} leverages Flux Matching to optimize fields for faster sampling, and \Cref{structured_generative_fields} uses Flux Matching to impose directed structure between variables.
\subsection{\textit{Controllable} Generative Fields}
\label{toy_controllable_experiments}
\begin{figure}
  \centering
  \includegraphics[width=\textwidth]{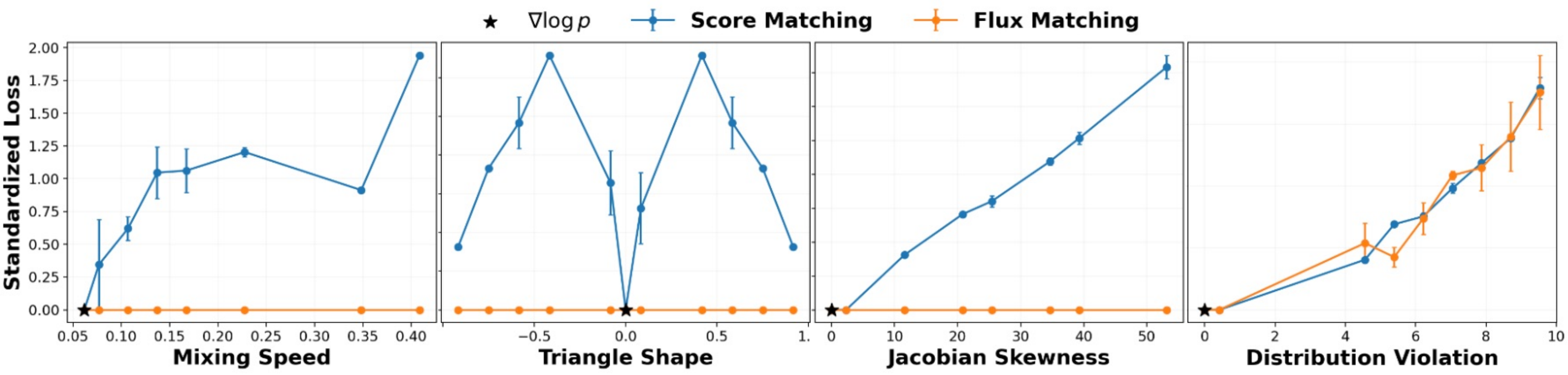}
  \caption{Normalized score matching and Flux Matching losses as we vary properties of the vector field on a Gaussian mixture. The black star denotes the attribute value of the score field $\nabla \log \pdata$. The first three panels vary distribution preserving fields with different values of a chosen attribute; the last panel varies fields that violate the target stationary distribution.}
  \label{app1_figure}
\end{figure}
One advantage of Flux Matching is that it exposes
distribution-preserving degrees of freedom for control. Many vector fields
share the same stationary distribution, and their differences govern
properties of the generative process such as mixing rate, circulation pattern,
and reversibility. Score matching targets $\nabla \log \pdata$ as the unique
correct field and penalizes any deviation, even ones that leave the
distribution unchanged. Flux Matching instead treats the entire
distribution preserving family as equivalent.

\textbf{Setup.} On a 2D three-component Gaussian mixture, we construct three
one-parameter families of distribution preserving perturbations of the score
field, indexed by attributes we call \textit{mixing speed},
\textit{triangle shape}, and \textit{Jacobian skewness}. For each perturbed
field we compute the score matching and Flux Matching losses, alongside a
distribution-violation metric that is zero exactly on the distribution preserving family. Since the two objectives have different raw
scales, we report standardized losses. Full definitions and construction
details appear in \Cref{app1_details_appendix}.

\textbf{Results.} \Cref{app1_figure} displays the outcomes. In the first
three panels, increasing the perturbation magnitude drives the score matching
loss up immediately while the Flux Matching loss stays at exactly zero. Practitioners can therefore tune these degrees of freedom to
shape the dynamics, for example to increase mixing, enforce triangular
circulation, or induce nonreversible structure, without changing the target
density. The fourth panel perturbs the field outside the
distribution preserving family, and the Flux Matching loss now rises with the
degree of distribution violation. Flux Matching is not flat everywhere. It is
flat precisely on the family of vector fields sharing the target stationary
distribution.

\subsection{\textit{Interpretable} Generative Fields}
\label{interpretable_generative_fields}
\begin{figure}
  \centering
  \setlength{\tabcolsep}{3pt}
  \renewcommand{\arraystretch}{1.05}
    \vskip -0.1in
  \begin{minipage}[t]{0.3\textwidth}
    \centering
    \vspace{0pt}
    \par\vspace{0.3em}
    \includegraphics[width=\linewidth]{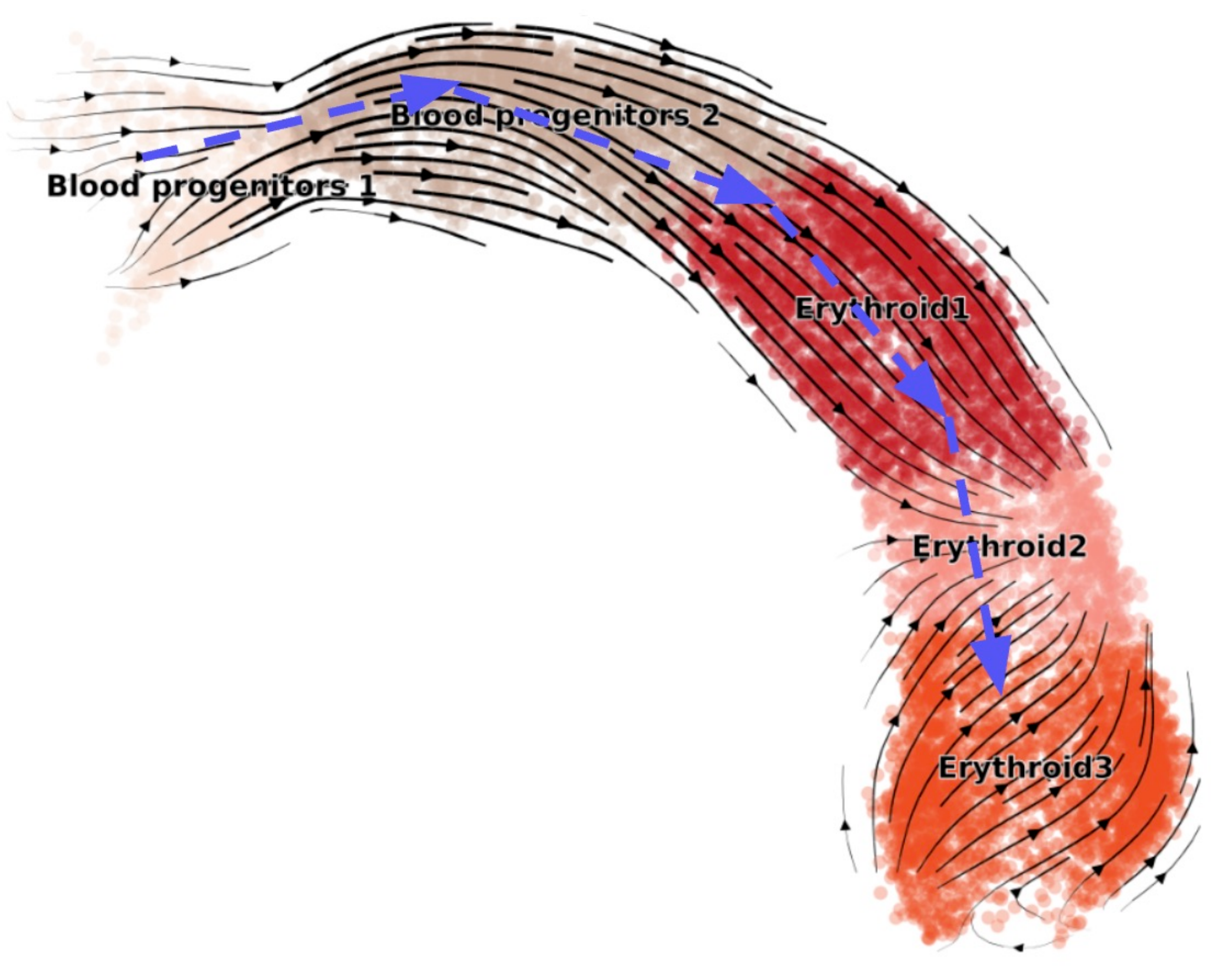}
  \end{minipage}
  \hfill
  \begin{minipage}[t]{0.67\textwidth}
    \centering
    \vspace{0pt}
    \par\vspace{0.3em}

    \footnotesize
    \begin{tabular}{@{}lcc@{}}
    \toprule
    \textbf{Dataset}
    & \textbf{Flux Matching}
    & \textbf{scVelo \cite{bergen2020generalizing}} \\
    & CBC $\uparrow$ / Consist. $\uparrow$
    & CBC $\uparrow$ / Consist. $\uparrow$ \\
    \midrule
    \textbf{Pancreas}     & 0.202 / \textbf{0.972} & \textbf{0.330} / 0.821 \\
    \textbf{Gastrulation} & \textbf{0.611} / \textbf{0.991} & -0.639 / 0.877 \\
    \textbf{Dentategyrus} & \textbf{0.284} / \textbf{0.981} & -0.084 / 0.791 \\
    \textbf{Bone Marrow}  & \textbf{0.177} / \textbf{0.939} & -0.789 / 0.857 \\
    \textbf{Hindbrain}    & \textbf{0.345} / \textbf{0.897} & 0.332 / 0.874 \\
    \bottomrule
    \end{tabular}
  \end{minipage}

  \caption{
  \textbf{(Left)} Learned RNA velocity using Flux Matching; blue arrows indicate ground-truth biological progression between cell-types.
  \textbf{(Right)} CBC and consistency means across datasets.
  }
  \label{fig:rna_velocity_results}
  \vskip -0.1in
\end{figure}
A second practical advantage of Flux Matching is that the vector field $f_\theta$ may
be of any parametric family, including ones whose parameters carry scientific
meaning. This enables \emph{interpretable} generative dynamics. Rather than
learning a black-box neural field, we constrain $f_\theta$ to a structured form
chosen by domain experts and fit its parameters directly from data. We
illustrate this on RNA velocity \cite{la2018rna}, a problem in single-cell biology where the
admissible vector fields are prescribed by a known mechanistic model.

\textbf{RNA velocity.} From a single static snapshot of a cell population, RNA
velocity aims to infer the direction in which each cell is moving through
gene expression space. For each of $G$ genes, two quantities are measured per
cell, namely an immature transcript $u_g$ and its mature form $s_g$. A standard
biophysical model \cite{bergen2020generalizing} prescribes the ordinary differential equation (ODE)
\begin{equation}
\label{rna_kinetic_field}
\frac{d}{d\tau}
\begin{pmatrix} u_g \\ s_g \end{pmatrix}
=
\begin{pmatrix}
\alpha_g(\tau) - \beta_g\, u_g \\
\beta_g\, u_g - \gamma_g\, s_g
\end{pmatrix},
\end{equation}
where the rates $\alpha_g, \beta_g, \gamma_g$ are biologically meaningful
(transcription, splicing, and degradation, respectively). A dominant method, scVelo \cite{bergen2020generalizing}, fits these rates per gene using
an EM-style latent-variable procedure that is known to be sensitive to
initialization.

\textbf{Flux Matching as a drop-in trainer.} We keep the biophysical model
\eqref{rna_kinetic_field} unchanged but replace the bespoke EM fit with
gradient descent on $\mathcal{L}_{\mathrm{flux}}$. Concretely, we concatenate
the per-gene fields across $G=2000$ genes into a full cell-state vector field
and optimize the scalar parameters jointly. The structured ODE restricts the
admissible vector fields, while Flux Matching supplies the training objective.

\textbf{Results.} \Cref{fig:rna_velocity_results} reports two standard RNA
velocity metrics. \emph{Cross-boundary correctness} (CBC) measures how well
predicted velocities align with known transitions between cell types, and
\emph{consistency} measures whether nearby cells receive similar velocity
directions. Across five real single cell datasets, Flux Matching improves consistency on
all five and CBC on four out of five, under the same parametric family as
scVelo. Because the model class is unchanged, the gains are attributable to
the fitting procedure rather than to added expressivity. We foresee that Flux Matching can be applied to other newly developed RNA velocity models with more sophisticated biological parameterizations. 

\subsection{\textit{Unrestricted} Generative Fields}
\label{unrestricted_generative_fields}

Flux Matching's main value is in imposing structure on the learned vector
field, but the Flux Matching loss is also viable as a standalone training objective on
complex high-dimensional distributions. We verify this in the
\textit{unrestricted} (vanilla) setting, where no additional field property is
optimized.

\textbf{Setup.} We evaluate on CIFAR10 ($3 \times 32 \times 32$) and CelebA
($3 \times 64 \times 64$). We train a standard UNet architecture from
\cite{futurexiang2023diffusion} with the noise annealed Flux Matching objective
in \Cref{noise_annealed_flux_matching}, using the EDM noise level distribution
\cite{karras2022elucidating}, for $500{,}000$ steps. As a baseline, we train
noise annealed denoising score matching (DSM) with the same architecture and
hyperparameters.

\textbf{Results.} The top half of \Cref{tab:unrestricted_generation} shows
that Flux Matching performs strongly on both datasets,
\begin{wraptable}{r}{0.5\textwidth}
    \centering
    \footnotesize
    \vskip -0.1in
    \caption{Unconditional generation performance and training-time efficiency.}
    \label{tab:unrestricted_generation}
    \renewcommand{\arraystretch}{0.85} 
    \setlength{\tabcolsep}{4pt}        
    \begin{tabular}{llccc}
        \toprule
        \textbf{Dataset} & \textbf{Model} & \multicolumn{3}{c}{\textbf{Performance}} \\
        \cmidrule(lr){3-5}
        & & \textbf{FID ($\downarrow$)} & \textbf{IS ($\uparrow$)} & \textbf{NLL (bpd, $\downarrow$)} \\
        \midrule
        CIFAR10 & DSM  & 4.74 & 8.52 & 3.16 \\
                & Flux & 9.06 & 8.54 & 3.26 \\
        CelebA  & DSM  & 2.41 & -    & 2.03 \\
                & Flux & 7.07 & -    & 2.17 \\
        \midrule
        \textbf{Dataset} & \textbf{Model} & \multicolumn{3}{c}{\textbf{Efficiency}} \\
        \cmidrule(lr){3-5}
        & & \multicolumn{2}{c}{\textbf{Speed (it/s)}} & \textbf{Memory/GPU (G)} \\
        \midrule
        CIFAR10 & DSM  & \multicolumn{2}{c}{11.63} & 2.79 \\
                & Flux & \multicolumn{2}{c}{4.01}  & 5.69 \\
        CelebA  & DSM  & \multicolumn{2}{c}{7.20}  & 7.79 \\
                & Flux & \multicolumn{2}{c}{1.77}  & 22.67 \\
        \bottomrule
    \end{tabular}
\end{wraptable}demonstrating that the loss alone scales to realistic
high-dimensional image distributions. The remaining FID gap to DSM is
unsurprising since DSM has benefited from many engineering iterations
specifically aimed at optimizing FID, whereas Flux Matching is evaluated here
as a first-pass implementation of a new learning objective. The bottom half
shows that Flux Matching is roughly \(3\)--\(4\times\) slower than DSM during
training and uses about \(2\)--\(3\times\) more memory. This overhead is
incurred only during training. At sampling time, the learned field can be
used in the same samplers as a score model. As noted earlier, the main
motivation for Flux Matching is not to replace DSM in this unrestricted
setting but to enable dynamics with useful properties, as shown in the next
two experiments.

\subsection{\textit{Fast Mixing} Generative Fields for Accelerated Sampling}
\label{fast_mixing_generative_fields}
\begin{figure}
  \centering
  \includegraphics[width=0.95\textwidth]{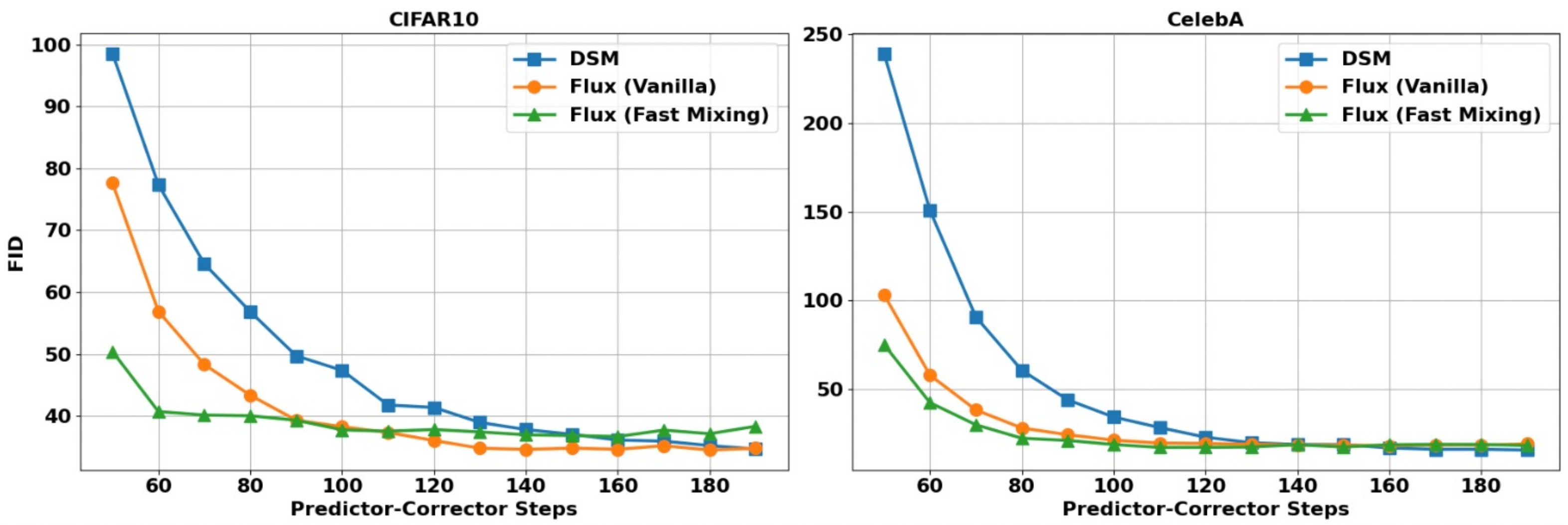}
  \caption{FID (calculated with $1$K generated samples) as a function of the number of sampling steps.}
  \label{fast_mixing_figure}
  \vskip -0.2in
\end{figure}
Fast mixing fields can accelerate sampling by requiring fewer sampling steps to
converge to the target distribution. Score-based Langevin dynamics is reversible
and known to mix slowly \cite{hwang2005accelerating, rey2015irreversible, duncan2016variance},
so score matching cannot exploit this property while Flux Matching can. Since
mixing time itself is intractable to optimize directly, we minimize a proxy
defined in \Cref{app4_details_appendix}.

\textbf{Setup.} We reuse the training and model setup of
\Cref{unrestricted_generative_fields} but add the mixing proxy with weight
$\lambda_{\mathrm{mixing}}=0.01$, giving
$\mathcal{L}_{\mathrm{flux-fast}}=\mathcal{L}_{\mathrm{flux\text{-}noise}} + \lambda_{\mathrm{mixing}}\,\mathcal{L}_{\mathrm{mixing}}$.
After training $f_\theta^\sigma$ with $\mathcal{L}_{\mathrm{flux-fast}}$ on CIFAR10 and CelebA, we evaluate FID on
$1$K generated samples across different numbers of sampling steps and compare
against vanilla Flux Matching and DSM (noise annealed versions).

\textbf{Results.} As shown in \Cref{fast_mixing_figure}, fast-mixing Flux
Matching reaches a reasonable FID with substantially fewer sampling steps than
vanilla Flux Matching and DSM on CIFAR10; on CelebA, the gain over vanilla Flux
Matching is modest. Interestingly, Flux Matching without fast mixing reaches a reasonable FID with fewer sampling steps than DSM on both datasets, suggesting that Flux Matching may already learn fields whose sampling dynamics are easier to mix.

\subsection{\textit{Embedding Structure} in Generative Fields}
\label{structured_generative_fields}

\begin{figure}
  \centering
  \includegraphics[width=1.0\textwidth]{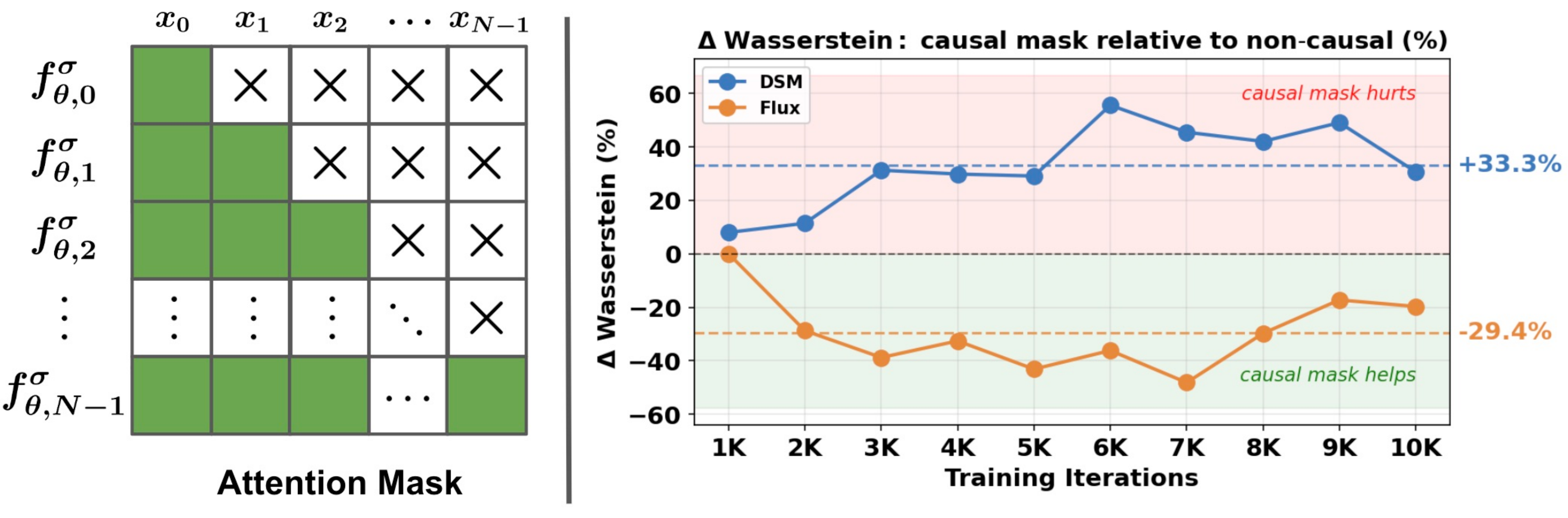}
  \caption{
  \textbf{(Left)} Causal attention mask used for trajectory generation. Rows index the output at each trajectory time, and columns index the input states at each trajectory time. The upper-triangular mask allows $f_{\theta,n}^\sigma$ to depend only on states $x_m$ with $m\le n$, enforcing autoregressive structure while still evaluating all outputs in parallel.
  \textbf{(Right)} Relative change in Wasserstein distance from adding a causal attention mask to $f_{\theta,n}^\sigma$ when training with DSM versus Flux Matching.
  }
  \label{structured_fields_figure}
  \vskip -0.1in
\end{figure}

Many generative problems have known structure among variables, and Flux
Matching lets us encode this structure directly into the architecture
representing the vector field. Score fields are gradient fields with symmetric
Jacobians (by equality of mixed partials), so directed dependencies such as
temporal autoregression are incompatible. Flux
Matching has no such constraint.

\textbf{Setup.} We simulate trajectories of two masses connected by nonlinear
springs (ODE in \Cref{app5_details_appendix}), with simulator state
$x(\tau)=(q_1(\tau),v_1(\tau),q_2(\tau),v_2(\tau))\in\mathbb{R}^4$ giving the
position $q_i$ and velocity $v_i$ of each mass at physical time $\tau$. Each data sample is a
discretized trajectory
$X=(x_0,\ldots,x_{N-1})\in\mathbb{R}^{N\times 4}$ with $x_n:=x(n\Delta\tau)$
and $N=50$. Our goal is to model the distribution over full trajectories.
Temporal order provides a natural inductive bias that later states depend
only on earlier states, which shrinks the hypothesis class and improves data
efficiency \cite{baxter2000model}. Standard diffusion samplers generate all
time points in parallel, but Flux Matching additionally lets us impose a
causal mask on $f_\theta^\sigma$, retaining the autoregressive inductive
bias \textit{without sequential sampling}. We train noise annealed DSM and Flux
Matching, each with and without a causal mask
(left of \Cref{structured_fields_figure}), on $2000$ simulated trajectories.
All four models share the same attention architecture
(\Cref{app5_details_appendix}). We evaluate via the empirical Wasserstein
distance $\mathcal{W}_2$ to the training distribution.

\textbf{Results.} The right side of \Cref{structured_fields_figure} reports
the relative change in $\mathcal{W}_2$ from adding the causal mask. The mask
consistently improves Flux Matching, confirming that the autoregressive
inductive bias helps. The same mask worsens DSM, as expected since DSM forces
the learned field to approximate a conservative score field whose symmetric
Jacobian conflicts with directed temporal dependence.
\section{Conclusion}
In this paper, we presented \textit{Flux Matching}, a new generative modeling paradigm that generalizes score matching to learn any vector field that generates samples from the target distribution. We proposed a scalable learning objective, the Flux Matching loss, together with a noise annealed extension whose learned models can be used out of the box with existing diffusion samplers and likelihood computations. We showed that Flux Matching performs well across a range of applications, including complex, high-dimensional image distributions. Most importantly, these applications demonstrate the flexibility Flux Matching gives practitioners to enforce and optimize attributes of the vector field itself. We showed that this flexibility enables faster samplers, more interpretable models, and generative models with prescribed relationships between variables.

\newpage

\begin{ack}
We thank Eric Ma, Alex Belov, Meihua Dang, Gabe Guo, Jiaqi Han, and Haotian Ye for helpful feedback and discussions. This work was supported by CZ Biohub, ONR Grant N00014-23-1-2159, the Laude Institute Moonshot Seed Grant, the Pantas And Ting Sutardja Foundation, the Wu Tsai Neurosciences Institute Big Ideas in Neuroscience Program, NIH DP2 grant 1DP2OD037052-01, and NIH K99/R00 grant 4K99HG012887-02. PPH acknowledges support from the NSF Graduate Research Fellowship.
\end{ack}

\bibliography{neurips_2026}

\newpage
\appendix
\startcontents[appendices]
\section*{Appendix Overview}
\printcontents[appendices]{l}{1}{\setcounter{tocdepth}{2}}
\newpage
\section{Proofs}
\label{proofs_appendix}

\begin{graypropbox}
\propone*
\end{graypropbox}
\begin{proof}
Follows directly from Equation 2.37 of \cite{pavliotis2014stochastic}.
\end{proof}

\begin{graypropbox}
\begin{restatable}{proposition}{proptwo}
\label{score_replacement}
Assume $p_t > 0$ on $\mathbb{R}^d$. If vector field \(f_t\) satisfies the Flux Matching condition given by \Cref{prop1} that
\begin{equation}
\label{flux_equivalent_score_condition}
\nabla \cdot (f_t(x) p_t(x)) = \nabla \cdot (\nabla \log p_t(x) p_t(x))
\end{equation}
for all $t$ and $x$, then replacing \(\nabla \log p_t\) by \(f_t\) in the sampler leaves the continuous-time marginal density evolution unchanged.
\end{restatable}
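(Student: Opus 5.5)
The plan is to write the continuous-time marginal evolution of each sampler as a Fokker--Planck (continuity) equation in which the score enters only through the flux divergence $\nabla\cdot(p\,\nabla\log p_t)$. Then the hypothesis \eqref{flux_equivalent_score_condition} lets us substitute $f_t$ for $\nabla\log p_t$ inside the divergence with no change to the right-hand side. We treat the three samplers named in \Cref{sampling_and_likelihood} separately, since they differ slightly.

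\textbf{Case 1: unadjusted Langevin, fixed target.} The dynamics are $dx_s=f_t(x_s)\,ds+\sqrt{2}\,dW_s$ with $t$ frozen. By \Cref{prop1}, $p_t$ is stationary for the score drift exactly because $\nabla\cdot(p_t\nabla\log p_t)-\Delta p_t=0$. Under \eqref{flux_equivalent_score_condition} we also have $\nabla\cdot(p_t f_t)-\nabla\cdot(p_t\nabla\log p_t)=0$, so \Cref{prop1} directly gives that $p_t$ is stationary for the $f_t$-driven diffusion. Away from stationarity, however, the marginals of the two chains need not coincide. Here ``leaves the marginal evolution unchanged'' should therefore be read as the statement about what the dynamics preserve, namely the invariant density $p_t$.

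\textbf{Case 2: reverse-time SDE and probability-flow ODE.} Take the forward noising SDE $dx=g(t)\,dW$ (or a VP variant with a linear drift), whose marginals are $p_t$ and satisfy $\partial_t p_t=\tfrac12 g^2\Delta p_t$.

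For the probability-flow ODE with $f_t$ in place of the score, $\dot x=-\tfrac12 g(t)^2 f_t(x)$, the continuity equation for a density $\rho_t$ started at $\rho_T=p_T$ is
\[
\partial_t\rho_t=\tfrac12 g^2\,\nabla\cdot(\rho_t f_t).
\]
We check that $\rho_t=p_t$ solves this. Indeed $\tfrac12 g^2\nabla\cdot(p_t f_t)=\tfrac12 g^2\nabla\cdot(p_t\nabla\log p_t)=\tfrac12 g^2\Delta p_t=\partial_t p_t$, where the first equality is \eqref{flux_equivalent_score_condition}.

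For the reverse SDE $dx=-g^2 f_t\,dt+g\,d\bar W$, run backward in time, the Fokker--Planck equation in reverse time $\tau=T-t$ reads
\[
\partial_\tau\rho=-\nabla\cdot(\rho\, g^2 f_t)+\tfrac12 g^2\Delta\rho .
\]
Substituting $\rho=p_t$ and applying \eqref{flux_equivalent_score_condition}, the drift term becomes $-g^2\Delta p_t$. The right-hand side is then $-\tfrac12 g^2\Delta p_t=-\partial_t p_t=\partial_\tau p_t$, which is consistent. A linear forward drift only adds the same term to both sides and changes nothing.

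\textbf{Uniqueness and likelihoods.} To conclude that the sampler's marginals actually equal $p_t$ rather than merely being consistent with it, we invoke uniqueness of solutions to these linear Fokker--Planck and continuity equations under standard regularity assumptions. These are local Lipschitz continuity of $f_t$ and the decay assumptions already used in \Cref{theorem1}; positivity $p_t>0$ ensures that $\log p_t$ is defined. Since both the score-driven and the $f_t$-driven equations have $p_t$ as a solution from the same initial condition, the marginals agree. The likelihood claim then follows from the instantaneous change-of-variables formula along the ODE, because the density transported by the flow is $p_t$.

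The main obstacle is this uniqueness step. The hypothesis controls only the divergence of $p_t f_t$ evaluated at $p_t$, not the operator acting on arbitrary densities, so the argument must go through ``$p_t$ is a solution'' plus a well-posedness theorem. We will state the needed regularity of $f_t$ explicitly as an assumption.
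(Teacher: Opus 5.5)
Your proposal is correct and follows the same route as the paper: substitute $p_t$ into the Fokker--Planck (or continuity) equation of the modified sampler and observe that the only change is the term $-\bigl(\nabla\cdot(p_t f_t)-\nabla\cdot(p_t\nabla\log p_t)\bigr)$, which vanishes by hypothesis. The paper's one-line proof simply concludes ``the same $p_t$ solves the same marginal evolution'' and leaves implicit three refinements you make explicit: the separate checks for Langevin, the reverse SDE and the probability-flow ODE; the caveat that for fixed-target Langevin only the invariant density is preserved, not transient marginals from other initializations; and the well-posedness step that upgrades ``$p_t$ is a solution'' to ``$p_t$ is the sampler's marginal.''
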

\end{graypropbox}

\begin{proof}
The Fokker--Planck contribution of replacing \(\nabla \log p_t\) by \(f_t\) differs
from the score-based one by
\[
-\Bigl( \nabla \cdot (f_t(x) p_t(x)) - \nabla \cdot (\nabla \log p_t(x) p_t(x)) \Bigr),
\]
which is zero by \eqref{flux_equivalent_score_condition}. Hence the same
\(p_t\) solves the same marginal evolution.
\end{proof}

\begin{graypropbox}
\begin{restatable}{lemma}{lemma1}
\label{lemma1}
Let
\[
g_\theta := \Pi_{\mathrm{flux}} f_\theta - \nabla \log \pdata,
\qquad
u_\theta := f_\theta - \nabla \log \pdata,
\qquad
r_\theta := \frac{1}{\pdata}\nabla\cdot(\pdata\,u_\theta).
\]
Since \(\Pi_{\mathrm{flux}} f_\theta\) and \(\nabla\log \pdata\) are gradient fields, \(g_\theta=\nabla\phi_\theta\) for some potential \(\phi_\theta\), unique up to an additive constant. Assume \(\phi_\theta,r_\theta \in L_0^2(\pdata)\) and vanishing boundary terms. Then,
\[
\mathbb{E}_{x\sim \pdata}\!\bigl[\|g_\theta(x)\|^2\bigr]
=
\int_0^\infty
\mathbb{E}_{x_0\sim \pdata, x_t|x_0}\!\left[r_\theta(x_0)\,r_\theta(x_t)\right]\,dt,
\]
where \((x_t)_{t\ge 0}\) is the stationary Langevin diffusion with generator
\[
L=\Delta+\nabla\log \pdata\cdot\nabla.
\]
\end{restatable}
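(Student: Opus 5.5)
The plan is to express both sides through the generator $L=\Delta+\nabla\log\pdata\cdot\nabla$, which is symmetric and negative semidefinite on $L^2(\pdata)$. The identity then reduces to the standard formula $\int_0^\infty \langle r, e^{tL} r\rangle\,dt=\langle r,(-L)^{-1}r\rangle$.

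\textbf{Step 1: $r_\theta=L\phi_\theta$.} By definition of $\Pi_{\mathrm{flux}}$, we have $\nabla\cdot(\pdata\,\Pi_{\mathrm{flux}}f_\theta)=\nabla\cdot(\pdata f_\theta)$. Subtracting $\nabla\cdot(\pdata\nabla\log\pdata)$ from both sides gives $\nabla\cdot(\pdata g_\theta)=\nabla\cdot(\pdata u_\theta)$. Dividing by $\pdata$ and using $g_\theta=\nabla\phi_\theta$ yields
\[
r_\theta=\frac{1}{\pdata}\nabla\cdot(\pdata\nabla\phi_\theta)=\Delta\phi_\theta+\nabla\log\pdata\cdot\nabla\phi_\theta=L\phi_\theta .
\]

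\textbf{Step 2: rewrite the left-hand side.} Integrating by parts against $\pdata$, with boundary terms vanishing by assumption, gives
\[
\mathbb E_{\pdata}\|g_\theta\|^2=\int \pdata\,\nabla\phi_\theta\cdot\nabla\phi_\theta=-\int\phi_\theta\,\nabla\cdot(\pdata\nabla\phi_\theta)=-\mathbb E_{\pdata}[\phi_\theta\, r_\theta].
\]

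\textbf{Step 3: rewrite the right-hand side.} Let $P_t=e^{tL}$ be the Markov semigroup of the stationary Langevin diffusion. Since $x_0\sim\pdata$, the tower property gives $\mathbb E[r_\theta(x_0)r_\theta(x_t)]=\mathbb E_{\pdata}[r_\theta\,P_tr_\theta]$. Substituting $r_\theta=L\phi_\theta$ inside $P_t$ and using $\frac{d}{dt}P_t\phi=P_tL\phi$, we get
\[
\int_0^T \mathbb E_{\pdata}[r_\theta\,P_tL\phi_\theta]\,dt=\mathbb E_{\pdata}\bigl[r_\theta\,(P_T\phi_\theta-\phi_\theta)\bigr].
\]
Letting $T\to\infty$, the term $\mathbb E_{\pdata}[r_\theta P_T\phi_\theta]$ should tend to $0$. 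The reason is that $P_T\phi_\theta\to\mathbb E_{\pdata}\phi_\theta=0$ in $L^2(\pdata)$ for $\phi_\theta\in L^2_0$, and $r_\theta\in L^2$. The limit is then $-\mathbb E_{\pdata}[r_\theta\phi_\theta]$, which matches Step 2.

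\textbf{Main obstacle.} The hard step is justifying $P_T\phi_\theta\to 0$ in $L^2(\pdata)$, and with it the convergence of the improper integral.

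The spectral theorem handles most of this. Because $L$ is self-adjoint and nonpositive, $\|P_T\phi\|^2=\int e^{-2T\lambda}\,d\mu_\phi(\lambda)$. By dominated convergence this tends to the mass of $\mu_\phi$ at $\lambda=0$, that is, the projection of $\phi$ onto $\ker L$.

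For a positive density on $\mathbb R^d$, $\ker L$ contains only the constants. Indeed, $L h=0$ implies $\int\pdata|\nabla h|^2=0$ after integrating by parts. Since $\phi_\theta$ has zero mean, the limit is zero.

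Convergence of the integral itself can also be read off spectrally. We have $\int_0^\infty\langle L\phi,P_tL\phi\rangle\,dt=\int\lambda\,d\mu_\phi(\lambda)$, which is finite because $\phi\in\mathcal D(L)$ (as $L\phi=r_\theta\in L^2$). No spectral gap is needed, only ergodicity. I would state these functional-analytic facts as part of the ``vanishing boundary terms'' regularity assumptions rather than prove them in full generality.
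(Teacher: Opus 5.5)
Your proposal is correct and follows the paper's proof. Steps 1 and 2 are exactly the paper's identities $r_\theta=L\phi_\theta$ and $\mathbb E_{\pdata}\|g_\theta\|^2=-\mathbb E_{\pdata}[\phi_\theta r_\theta]$, and your Step 3 simply proves the identity $(-L)^{-1}=\int_0^\infty e^{tL}\,dt$ (applied to $r_\theta$) that the paper cites from Pavliotis--Stuart. Your route via $\int_0^T P_tL\phi_\theta\,dt=P_T\phi_\theta-\phi_\theta$ is slightly more careful: it needs no spectral gap, and not even ergodicity, since $r_\theta=L\phi_\theta$ is orthogonal to $\ker L$ and so $\langle r_\theta,P_T\phi_\theta\rangle\to 0$ regardless.
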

\end{graypropbox}

\begin{proof}
Since both \(\Pi_{\mathrm{flux}} f_\theta\) and \(\nabla\log \pdata\) are gradient fields, so is
\[
g_\theta=\Pi_{\mathrm{flux}} f_\theta-\nabla\log \pdata.
\]
Thus \(g_\theta=\nabla\phi_\theta\) for some \(\phi_\theta\), unique up to an additive constant, which we fix by requiring \(\phi_\theta\in L_0^2(\pdata)\). Moreover, by definition of \(\Pi_{\mathrm{flux}}\),
\[
\nabla\cdot\!\bigl(\pdata\,\Pi_{\mathrm{flux}}f_\theta\bigr)=\nabla\cdot(\pdata f_\theta).
\]
Hence
\begin{align*}
r_\theta
&=
\frac{1}{\pdata}\nabla\cdot(\pdata\,u_\theta) \\
&=
\frac{1}{\pdata}\nabla\cdot\!\bigl(\pdata(f_\theta-\nabla\log \pdata)\bigr) \\
&=
\frac{1}{\pdata}\nabla\cdot\!\bigl(\pdata(\Pi_{\mathrm{flux}}f_\theta-\nabla\log \pdata)\bigr) \\
&=
\frac{1}{\pdata}\nabla\cdot(\pdata\,g_\theta) \\
&=
\nabla \cdot g_\theta + g_\theta \cdot \nabla \log \pdata.
\end{align*}
Therefore \(r_\theta\) is equivalently the Langevin Stein operator applied to \(g_\theta\), and since \(g_\theta=\nabla\phi_\theta\),
\[
r_\theta
=
\frac{1}{\pdata}\nabla\cdot(\pdata\nabla\phi_\theta)
=
\Delta\phi_\theta+\nabla\log \pdata\cdot\nabla\phi_\theta
=
L\phi_\theta.
\]
By integration by parts,
\begin{align*}
\mathbb E_{x\sim \pdata}\!\bigl[\|g_\theta(x)\|^2\bigr] &= \int \|\nabla\phi_\theta(x)\|^2 \pdata(x)\,dx \\
&= -\int \phi_\theta(x)\,L\phi_\theta(x)\,\pdata(x)\,dx \\
&= -\int \phi_\theta(x)\,r_\theta(x)\,\pdata(x)\,dx.
\end{align*}
Since \(r_\theta=L\phi_\theta\), we have \(\phi_\theta=-(-L)^{-1}r_\theta,\) and thus
\[
\mathbb E_{x\sim \pdata}\!\bigl[\|g_\theta(x)\|^2\bigr]
=
\int r_\theta(x)\,(-L)^{-1}r_\theta(x)\,\pdata(x)\,dx.
\]
We use the identity \( (-L)^{-1} = \int_0^{\infty} e^{Lt}dt \) (Ch. 11 of \cite{pavliotis2008multiscale}),
\[
\int r_\theta(x)\,(-L)^{-1}r_\theta(x)\,\pdata(x)\,dx
=
\int_0^\infty \int r_\theta(x)\,(e^{tL}r_\theta)(x)\,\pdata(x)\,dx\,dt.
\]
Finally, for the Langevin diffusion with generator \(L\),
\[
(e^{tL}r_\theta)(x)
=
\mathbb{E}\!\left[r_\theta(x_t)\mid x_0=x\right],
\]
where the expectation is over the Langevin diffusion path \((x_s)_{s\ge 0}\) started from \(x_0=x\) (see Ch. 2 of \cite{pavliotis2014stochastic}). Therefore, since \(x_0\sim \pdata\),
\begin{align*}
\int r_\theta(x)\,(e^{tL}r_\theta)(x)\,\pdata(x)\,dx
&=
\mathbb E_{x_0\sim \pdata}\!\left[
r_\theta(x_0)\,\mathbb E[r_\theta(x_t)\mid x_0]
\right] \\
&=
\mathbb E_{x_0\sim \pdata}\!\left[
\mathbb E[r_\theta(x_0)r_\theta(x_t)\mid x_0]
\right] \\
&=
\mathbb E_{x_0\sim \pdata,x_t|x_0}\!\left[r_\theta(x_0)r_\theta(x_t)\right].
\end{align*}
Hence,
\[
\int r_\theta(x)\,(-L)^{-1}r_\theta(x)\,\pdata(x)\,dx
=
\int_0^\infty \mathbb E_{x_0\sim \pdata,x_t|x_0}\!\left[r_\theta(x_0)r_\theta(x_t)\right]\,dt.
\]
\end{proof}

\begin{graypropbox}
\begin{restatable}{lemma}{lemma2}
\label{lemma2}
Let
\[
u_\theta := f_\theta - \nabla \log \pdata,
\qquad
r_\theta := \frac{1}{\pdata}\nabla\cdot(\pdata\,u_\theta).
\]
Assume \(u_\theta\) and \(e^{tL}r_\theta\) are sufficiently regular and vanishing boundary terms hold. Then, for every \(t\ge 0\),
\[
\mathbb{E}_{x_0\sim \pdata, x_t|x_0}\!\left[r_\theta(x_0)\,r_\theta(x_t)\right]
=
-\mathbb{E}_{x_0\sim \pdata}\!\left[
u_\theta(x_0)^\top \nabla (e^{tL}r_\theta)(x_0)
\right],
\]
where \((x_t)_{t\ge 0}\) is the stationary Langevin diffusion with generator
\[
L=\Delta+\nabla\log \pdata\cdot\nabla,
\]
and \(\partial x_t/\partial x_0\) denotes the Jacobian of the associated Langevin path. Equivalently,
\[
\mathbb{E}_{x_0\sim \pdata, x_t|x_0}\!\left[r_\theta(x_0)\,r_\theta(x_t)\right]
=
-\mathbb{E}_{x_0\sim \pdata, x_t|x_0}\!\left[
u_\theta(x_0)^\top
\partial x_t/\partial x_0^{\top}\nabla_{x_t} r_\theta(x_t)
\right].
\]
\end{restatable}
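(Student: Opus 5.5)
The plan is to prove the first identity by conditioning on \(x_0\) and then integrating by parts against \(\pdata\). The second, pathwise form will follow by differentiating the Markov semigroup with respect to the initial point.

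\textbf{Step 1 (conditioning).} By the Markov property and the representation \((e^{tL}r_\theta)(x)=\mathbb E[r_\theta(x_t)\mid x_0=x]\), already used in the proof of \Cref{lemma1}, the tower property gives
\[
\mathbb{E}_{x_0\sim \pdata, x_t|x_0}\!\left[r_\theta(x_0)\,r_\theta(x_t)\right]
=\int r_\theta(x)\,(e^{tL}r_\theta)(x)\,\pdata(x)\,dx .
\]

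\textbf{Step 2 (integration by parts).} Set \(h_t:=e^{tL}r_\theta\). Substitute \(r_\theta\pdata=\nabla\cdot(\pdata u_\theta)\), which is the definition of \(r_\theta\). Then
\[
\int h_t\,\nabla\cdot(\pdata u_\theta)\,dx=-\int \pdata\, u_\theta^\top \nabla h_t\,dx+\text{boundary},
\]
and the boundary term vanishes by assumption. This is precisely the first claimed identity,
\[
-\mathbb E_{x_0\sim\pdata}\bigl[u_\theta(x_0)^\top\nabla(e^{tL}r_\theta)(x_0)\bigr].
\]
The regularity assumptions on \(u_\theta\) and \(e^{tL}r_\theta\) are what justify differentiating \(h_t\) and discarding the surface term.

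\textbf{Step 3 (pathwise gradient).} It remains to identify \(\nabla_{x}\,\mathbb E[r_\theta(x_t)\mid x_0=x]\) with \(\mathbb E\bigl[(\partial x_t/\partial x_0)^\top\nabla_{x_t}r_\theta(x_t)\mid x_0=x\bigr]\).

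\begin{itemize}
\item Realize the Langevin diffusion \(dx_t=\nabla\log\pdata(x_t)\,dt+\sqrt2\,dW_t\) as a stochastic flow \(x_t=\Phi_t(x_0,W)\) driven by a fixed Brownian path. Under standard smoothness and growth conditions on \(\nabla\log\pdata\), this flow is differentiable in \(x_0\).
\item Its Jacobian \(J_t=\partial x_t/\partial x_0\) solves the linearized equation \(dJ_t=\nabla^2\log\pdata(x_t)J_t\,dt\) with \(J_0=I\).
\item Since the noise is additive, \(J_t\) is a random matrix with no stochastic integral, and the chain rule gives \(\nabla_{x_0}r_\theta(x_t)=J_t^\top\nabla_{x_t}r_\theta(x_t)\) pathwise.
\item Exchange gradient and expectation by dominated convergence, using integrability of \(\|J_t\|\,\|\nabla r_\theta(x_t)\|\). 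This yields \(\nabla h_t(x_0)=\mathbb E[J_t^\top\nabla r_\theta(x_t)\mid x_0]\).
\item Substituting into Step 2 and applying the tower property once more gives the equivalent joint-expectation form.
\end{itemize}

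\textbf{Main obstacle.} The delicate part is Step 3: justifying differentiability of the flow and the interchange of \(\nabla\) with the conditional expectation. I would treat this as covered by the lemma's "sufficiently regular" hypothesis and cite standard stochastic-flow results, for instance Kunita-type theorems for SDEs with \(C^2\) drift and additive noise.

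Step 2 also needs care on all of \(\mathbb R^d\). I would handle it by the usual cutoff argument: integrate against a smooth compactly supported cutoff and let it expand. The vanishing-boundary assumption is exactly what kills the resulting surface integral.
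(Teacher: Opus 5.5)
Your proposal is correct and follows the paper's proof step for step: you condition on $x_0$ to get $\int r_\theta\,(e^{tL}r_\theta)\,\pdata\,dx$, integrate by parts using $r_\theta\pdata=\nabla\cdot(\pdata u_\theta)$, and identify $\nabla(e^{tL}r_\theta)(x_0)$ with $\mathbb{E}\bigl[(\partial x_t/\partial x_0)^\top\nabla_{x_t}r_\theta(x_t)\mid x_0\bigr]$ before applying the tower property. Your extra justification of that last identification, via the linearized flow $dJ_t=\nabla^2\log\pdata(x_t)J_t\,dt$ and dominated convergence, fills in a step the paper states without argument.
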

\end{graypropbox}

\begin{proof}
For the Langevin diffusion with generator \(L\),
\[
(e^{tL}r_\theta)(x)
=
\mathbb{E}\!\left[r_\theta(x_t)\mid x_0=x\right],
\]
where the expectation is over diffusion paths \((x_s)_{s\ge 0}\) started from \(x_0=x\) (Ch. 2 of \cite{pavliotis2014stochastic}). Hence,
\begin{align*}
\mathbb{E}_{x_0\sim \pdata, x_t|x_0}\!\left[r_\theta(x_0)\,r_\theta(x_t)\right]
&=
\mathbb{E}_{x_0\sim \pdata}\!\left[
r_\theta(x_0)\,\mathbb{E}[r_\theta(x_t)\mid x_0]
\right] \\
&=
\mathbb{E}_{x_0\sim \pdata}\!\left[
r_\theta(x_0)\,(e^{tL}r_\theta)(x_0)
\right].
\end{align*}
Using \(r_\theta = \pdata^{-1}\nabla\cdot(\pdata\,u_\theta)\), we have
\begin{align*}
\mathbb{E}_{x_0\sim \pdata}\!\left[
r_\theta(x_0)\,(e^{tL}r_\theta)(x_0)
\right]
&=
\int \nabla\cdot\!\bigl(\pdata(x)u_\theta(x)\bigr)\,(e^{tL}r_\theta)(x)\,dx \\
&=
-\int \pdata(x)\,u_\theta(x)^\top \nabla (e^{tL}r_\theta)(x)\,dx
\qquad
\text{(integration by parts)} \\
&= 
-\mathbb{E}_{x_0\sim \pdata}\!\left[
u_\theta(x_0)^\top \nabla (e^{tL}r_\theta)(x_0)
\right].
\end{align*}
Finally, let \(\partial x_t/\partial x_0\) denote the Jacobian of the path generated by the Langevin diffusion. Since
\[
\nabla (e^{tL}r_\theta)(x_0)
=
\nabla_{x_0}\mathbb{E}\!\left[r_\theta(x_t)\mid x_0\right]
=
\mathbb{E}\!\left[
\partial x_t/\partial x_0^{\top}\nabla_{x_t}r_\theta(x_t)
\mid x_0
\right],
\]
we obtain
\[
\mathbb{E}_{x_0\sim \pdata, x_t|x_0}\!\left[r_\theta(x_0)\,r_\theta(x_t)\right]
=
-\mathbb{E}_{x_0\sim \pdata, x_t|x_0}\!\left[
u_\theta(x_0)^\top
\partial x_t/\partial x_0^{\top}\nabla_{x_t}r_\theta(x_t)
\right].
\]
\end{proof}

\begin{graypropbox}
\begin{restatable}{lemma}{lemma3}
\label{lemma3}
Let \(q\) be any strictly positive probability density on \([0,\infty)\), let
\(\partial x_t/\partial x_0\), and define
\[
\widetilde{\mathcal L}_{\mathrm{Flux}}(\theta)
:=
-\mathbb{E}_{\substack{t\sim q\\x_0\sim \pdata,\,x_t|x_0}}\!\left[
q(t)^{-1}\,
u_\theta(x_0)^\top \partial x_t/\partial x_0^{\top}\nabla_{x_t} r_\theta(x_t)
\right],
\]
\[
\mathcal L_{\mathrm{Flux}}(\theta)
:=
-\mathbb{E}_{\substack{t\sim q\\x_0\sim \pdata,\,x_t|x_0}}\!\left[
q(t)^{-1}\,
u_\theta(x_0)^\top
\operatorname{sg}\!\bigl(\partial x_t/\partial x_0^{\top}\nabla_{x_t} r_\theta(x_t)\bigr)
\right].
\]
Assume differentiation and expectation may be interchanged, and that the stationary Langevin diffusion is reversible with respect to \(\pdata\). Then,
\[
\nabla_\theta \widetilde{\mathcal L}_{\mathrm{Flux}}(\theta)
=
2\,\nabla_\theta \mathcal L_{\mathrm{Flux}}(\theta).
\]
\end{restatable}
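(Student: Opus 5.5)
The plan is to reduce both losses, via Lemma~\ref{lemma2}, to a symmetric bilinear form in $r_\theta$, and then use symmetry to show that the gradient of the full form is twice the gradient of the stop-gradient version. First I will use Lemma~\ref{lemma2} to rewrite the un-stopped objective at each fixed $t$. The inner conditional expectation of $\partial x_t/\partial x_0^{\top}\nabla_{x_t}r_\theta(x_t)$ given $x_0$ equals $\nabla(e^{tL}r_\theta)(x_0)$. Hence
\[
\widetilde{\mathcal L}_{\mathrm{Flux}}(\theta)=\int_0^\infty B_t(\theta,\theta)\,dt,\qquad B_t(\theta,\theta'):=-\mathbb E_{x\sim\pdata}\!\left[u_\theta(x)^\top\nabla(e^{tL}r_{\theta'})(x)\right],
\]
where the factor $q(t)^{-1}$ cancels the sampling density of $t$. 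By the same integration by parts as in Lemma~\ref{lemma2}, with the first argument carrying $u_\theta$, we get $B_t(\theta,\theta')=\langle r_\theta,e^{tL}r_{\theta'}\rangle_{L^2(\pdata)}$.

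Next I will identify the stop-gradient loss with freezing the second argument. The stop-gradient blocks $\theta$-dependence in the second factor only. Therefore $\nabla_\theta\mathcal L_{\mathrm{Flux}}(\theta)=\nabla_{\theta}\big|_{\theta'=\theta}\int_0^\infty B_t(\theta,\theta')\,dt$, the derivative taken in the first slot only. By the chain rule, $\nabla_\theta\widetilde{\mathcal L}_{\mathrm{Flux}}$ equals the sum of the derivatives in the first and second slots. Here I use the assumption that differentiation commutes with the $t$-integral and with the path expectations, including differentiating inside $e^{tL}r_{\theta'}$.

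The key step, and the main obstacle, is showing that the two slot-derivatives coincide. Reversibility of the Langevin diffusion with respect to $\pdata$ means $e^{tL}$ is self-adjoint on $L^2(\pdata)$. Therefore $B_t(\theta,\theta')=\langle r_\theta,e^{tL}r_{\theta'}\rangle=\langle e^{tL}r_\theta,r_{\theta'}\rangle=B_t(\theta',\theta)$, so $B_t$ is symmetric. For a symmetric bilinear-type form, $\partial_{\theta'}B_t(\theta,\theta')|_{\theta'=\theta}=\partial_{\theta}B_t(\theta,\theta')|_{\theta'=\theta}$. This gives $\nabla_\theta\widetilde{\mathcal L}_{\mathrm{Flux}}=2\nabla_\theta\mathcal L_{\mathrm{Flux}}$.

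The care needed is in justifying that the stop-gradient pathwise object $\operatorname{sg}(\partial x_t/\partial x_0^\top\nabla r_\theta(x_t))$ contributes exactly the first-slot derivative in expectation. Its conditional mean given $x_0$ is $\nabla(e^{tL}r_\theta)(x_0)$ and does not depend on the $\theta$ being differentiated. So the gradient of $\mathcal L_{\mathrm{Flux}}$ is $-\mathbb E[q^{-1}(\nabla_\theta u_\theta)^\top\nabla(e^{tL}r_\theta)]$. I then integrate by parts once more, as in Lemma~\ref{lemma2}, with boundary terms assumed to vanish. This converts that expression to $\int_0^\infty\langle\nabla_\theta r_\theta,e^{tL}r_\theta\rangle\,dt$, which is the first-slot derivative of the symmetric form.
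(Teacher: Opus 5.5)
Your proposal is correct and is essentially the paper's argument. The paper also splits $\partial_{\theta_j}\widetilde{\mathcal L}_{\mathrm{Flux}}$ by the product rule into the term through $u_\theta(x_0)$, which is exactly what survives the stop-gradient, and the term through $r_\theta(x_t)$; it converts each via the integration by parts of Lemma~\ref{lemma2} into $\mathbb E[q^{-1}\partial_{\theta_j}r_\theta(x_0)\,r_\theta(x_t)]$ and $\mathbb E[q^{-1}r_\theta(x_0)\,\partial_{\theta_j}r_\theta(x_t)]$, and equates the two by reversibility, while your two-slot form $B_t(\theta,\theta')=\langle r_\theta,e^{tL}r_{\theta'}\rangle_{L^2(\pdata)}$ with symmetry from self-adjointness of $e^{tL}$ is a cleaner packaging of the same steps that also makes explicit that Lemma~\ref{lemma2} is used with mismatched arguments, which the paper covers only with ``similarly''.
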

\end{graypropbox}

\begin{proof}
We differentiate componentwise in \(\theta\). For any parameter \(\theta_j\),
\begin{align*}
\partial_{\theta_j}\widetilde{\mathcal L}_{\mathrm{Flux}}(\theta)
&=
-\mathbb{E}\!\left[
q(t)^{-1}\,
(\partial_{\theta_j}u_\theta(x_0))^\top
\partial x_t/\partial x_0^{\top}\nabla_{x_t} r_\theta(x_t)
\right] \\
&\qquad
-\mathbb{E}\!\left[
q(t)^{-1}\,
u_\theta(x_0)^\top
\partial x_t/\partial x_0^{\top}\nabla_{x_t}(\partial_{\theta_j}r_\theta(x_t))
\right].
\end{align*}
Since \(r_\theta=\frac1{\pdata}\nabla\cdot(\pdata\,u_\theta)\) depends linearly on \(u_\theta\), we have
\[
\partial_{\theta_j}r_\theta
=
\frac1{\pdata}\nabla\cdot\!\bigl(\pdata\,\partial_{\theta_j}u_\theta\bigr).
\]
Applying Lemma~2 with \(u_\theta\) replaced by \(\partial_{\theta_j}u_\theta\) gives
\[
-\mathbb{E}\!\left[
q(t)^{-1}\,
(\partial_{\theta_j}u_\theta(x_0))^\top
\partial x_t/\partial x_0^{\top}\nabla_{x_t} r_\theta(x_t)
\right]
=
\mathbb{E}\!\left[
q(t)^{-1}\,
(\partial_{\theta_j}r_\theta(x_0))\,r_\theta(x_t)
\right].
\]
Similarly,
\[
-\mathbb{E}\!\left[
q(t)^{-1}\,
u_\theta(x_0)^\top
\partial x_t/\partial x_0^{\top}\nabla_{x_t}(\partial_{\theta_j}r_\theta(x_t))
\right]
=
\mathbb{E}\!\left[
q(t)^{-1}\,
r_\theta(x_0)\,\partial_{\theta_j}r_\theta(x_t)
\right].
\]
By reversibility of the stationary Langevin diffusion, $\mathbb E[\varphi(x_0,x_t)] = \mathbb E[\varphi(x_t,x_0)]$
for any test function \(\varphi\), so the two right-hand sides are equal. Thus,
\[
\partial_{\theta_j}\widetilde{\mathcal L}_{\mathrm{Flux}}(\theta)
=
-2\,\mathbb{E}\!\left[
q(t)^{-1}\,
(\partial_{\theta_j}u_\theta(x_0))^\top
\partial x_t/\partial x_0^{\top}\nabla_{x_t} r_\theta(x_t)
\right].
\]
On the other hand, by definition of stop-gradient,
\begin{align*}
\partial_{\theta_j}\mathcal L_{\mathrm{Flux}}(\theta)
&=
-\mathbb{E}\!\left[
q(t)^{-1}\,
(\partial_{\theta_j}u_\theta(x_0))^\top
\operatorname{sg}\!\bigl(\partial x_t/\partial x_0^{\top}\nabla_{x_t} r_\theta(x_t)\bigr)
\right]\\
&=-\mathbb{E}\!\left[
q(t)^{-1}\,
(\partial_{\theta_j}u_\theta(x_0))^\top
\partial x_t/\partial x_0^{\top}\nabla_{x_t} r_\theta(x_t)
\right].
\end{align*}
Therefore
\[
\partial_{\theta_j}\widetilde{\mathcal L}_{\mathrm{Flux}}(\theta)
=
2\,\partial_{\theta_j}\mathcal L_{\mathrm{Flux}}(\theta)
\qquad
\text{for every }j,
\]
and
\[
\nabla_\theta \widetilde{\mathcal L}_{\mathrm{Flux}}(\theta)
=
2\,\nabla_\theta \mathcal L_{\mathrm{Flux}}(\theta).
\]
\end{proof}

\begin{graypropbox}
\theoremone*
\end{graypropbox}
\begin{proof}
We have
\begin{alignat*}{2}
\mathcal J(\theta)
&=
\mathbb{E}_{x_0\sim \pdata}\!\left[
\|\Pi_{\mathrm{flux}}f_\theta(x_0)-\nabla\log \pdata(x_0)\|^2
\right]
&\qquad& \\
&=
\int_0^\infty
\mathbb{E}_{x_0\sim \pdata,\,x_t|x_0}\!\left[
r_\theta(x_0)\,r_\theta(x_t)
\right]\,dt
&& \text{(\Cref{lemma1})} \\
&=
-\int_0^\infty
\mathbb{E}_{x_0\sim \pdata,\,x_t|x_0}\!\left[
u_\theta(x_0)^\top
\partial x_t/\partial x_0^{\top}\nabla_{x_t} r_\theta(x_t)
\right]\,dt
&& \text{(\Cref{lemma2}).}
\end{alignat*}
Differentiating with respect to \(\theta\) and applying \Cref{lemma3} yields
\[
\nabla_\theta \widetilde{\mathcal{J}}(\theta)
=
-2\,\nabla_\theta
\mathbb{E}_{\substack{t\sim q\\x_0\sim \pdata,\,x_t|x_0}}\!\left[
q(t)^{-1}\,
u_\theta(x_0)^\top
\operatorname{sg}\!\bigl(
\partial x_t/\partial x_0^{\top}\nabla_{x_t} r_\theta(x_t)
\bigr)
\right]
=
2\,\nabla_\theta \mathcal L_{\mathrm{Flux}}(\theta).
\]
\end{proof}

\section{Experiment Details}
\label{appendix_experiment_section}
\subsection{Application 1: \textit{Controllable} Generative Fields}
\label{app1_details_appendix}

\paragraph{Goal.} In this toy setting of a two-dimensional Gaussian mixture, we aim to show that (1) Flux Matching assign $0$ loss to fields that preserve the target distribution but have very different dynamics, while penalizing fields that change the target distribution and (2) score matching assigns high loss to both fields that preserve the distribution (but have non score dynamics) and fields that change the target distribution. Ultimately, the core message of this experiment is that Flux Matching enables control over interesting attributes about the vector field (that still preserve the distribution) while score matching does not. 

\paragraph{Target distribution.}
We use
\begin{equation}
p_\sigma(x)
=
\sum_{k=1}^3
\pi_k\,\mathcal N(x;\mu_k,\nu_\sigma^2 I),
\end{equation}
with
\begin{equation}
\mu_1=(0,\,2.3),\quad
\mu_2=(-1.99,\,-1.15),\quad
\mu_3=(1.99,\,-1.15),\quad
\nu_\sigma^2=\sigma_0^2+\sigma^2=0.625.
\end{equation}
The three modes lie at the vertices of an equilateral triangle. Since \(p_\sigma\) is known analytically, its score is also available in closed form:
\begin{equation}
\label{closed_score_form_toy_experiment}
s_\sigma(x)
=
\nabla \log p_\sigma(x)
=
\frac{m_\sigma(x)-x}{\nu_\sigma^2},
\qquad
m_\sigma(x)
=
\sum_{k=1}^3
\omega_k^\sigma(x)\mu_k,
\end{equation}
where
\begin{equation}
\omega_k^\sigma(x)
=
\frac{
\pi_k\exp\!\left(-\|x-\mu_k\|^2/(2\nu_\sigma^2)\right)
}{
\sum_{\ell=1}^3
\pi_\ell\exp\!\left(-\|x-\mu_\ell\|^2/(2\nu_\sigma^2)\right)
}.
\end{equation}

\paragraph{Vector field families.}
All fields are perturbations of the score,
\begin{equation}
f_\theta(x)=s_\sigma(x)+u_\theta(x),
\qquad
u_\theta(x)
=
\theta_0 c_{\mathrm{rot}}(x)
+
\theta_1 c_{\mathrm{tri}}(x)
+
\theta_2 c_{\mathrm{skew}}(x).
\end{equation}
The three controls (aka attributes) respectively induce global rotation, clockwise circulation around the triangle, and localized Jacobian asymmetry. We will then calculate specific metrics on these vector fields (specified under ``Metrics''), which give us \Cref{alternative_vector_fields_figure} and \Cref{app1_figure}. 

For the distribution-preserving family, we use
\begin{equation}
c_{\mathrm{rot}}(x)\propto J s_\sigma(x),
\qquad
c_{\mathrm{tri}}(x)\propto -\frac{J\nabla\psi_{\mathrm{tri}}(x)}{p_\sigma(x)},
\qquad
c_{\mathrm{skew}}(x)\propto \frac{J\nabla\psi_{\mathrm{skew}}(x)}{p_\sigma(x)},
\end{equation}
where \(J=\begin{psmallmatrix}0&-1\\1&0\end{psmallmatrix}\). The exact forms of \(\psi_{\mathrm{tri}}\) and \(\psi_{\mathrm{skew}}\) are not essential to this experiment; they are smooth templates chosen to produce visually interesting vector fields for \Cref{alternative_vector_fields_figure}. The key structural requirement is that each distribution preserving attribute has the form \(J\nabla\psi/p_\sigma\), which guarantees
\(\nabla\cdot(p_\sigma c)=0\). Within this constraint, we tuned the template parameters to make the induced vector fields visually clear, high-contrast, and qualitatively distinct, where \(\psi_{\mathrm{tri}}\) produces circulation along the triangle and \(\psi_{\mathrm{skew}}\) produces a localized asymmetric perturbation. As a result, the somewhat elaborate formulas below should be viewed as implementation choices for constructing illustrative vector fields with interesting attributes, not as something rigorously motivated.

The scalar function \(\psi_{\mathrm{tri}}\) concentrates near the triangle edges, while \(\psi_{\mathrm{skew}}\) is a localized anisotropic bump. More precisely, let \(v_j\) be the triangle vertices, \(e_j\) the unit direction of edge \(j\), \(n_j\) the outward normal to edge \(j\), and \(\ell_j\) the edge length. For each edge, define the along-edge coordinate and normal distance
\begin{equation}
a_j(x)=e_j^\top(x-v_j),
\qquad
d_j(x)=n_j^\top(x-v_j).
\end{equation}
We define an edge-localized template by
\begin{equation}
R_j(x)
=
\exp\!\left(-\frac{d_j(x)^2}{2w^2}\right)
\operatorname{sigmoid}\!\left(\frac{\kappa a_j(x)}{w}\right)
\operatorname{sigmoid}\!\left(\frac{\kappa(\ell_j-a_j(x))}{w}\right),
\end{equation}
which is large near edge \(j\) and small away from that edge segment. We also define a smooth triangle level function
\begin{equation}
h(x)
=
\frac{1}{\beta}
\log
\sum_{j=1}^3
\exp\!\left(\beta(n_j^\top x-n_j^\top v_j)\right).
\end{equation}
Then,
\begin{equation}
\psi_{\mathrm{tri}}(x)
=
\exp\!\left(
-\frac{\operatorname{ReLU}(h(x))^2}{2\eta^2}
\right)
\left[
\frac{1}{3}\sum_{j=1}^3 R_j(x)
+
\lambda_{\mathrm{int}}\operatorname{sigmoid}(-\beta h(x))
\right].
\end{equation}
In the experiment, we use \(w=0.24\), \(\kappa=5.0\), \(\beta=10.0\),
\(\lambda_{\mathrm{int}}=0.12\), and \(\eta=0.9\).

For the skew template, let
\begin{equation}
d=(\cos\alpha,\sin\alpha),
\qquad
z=\rho R d,
\qquad
d_\perp=Jd,
\end{equation}
where \(R=2.3\), \(\alpha=-20^\circ\), and \(\rho=0.78\). For \(r=x-z\), define
\begin{equation}
r_\parallel=d^\top r,
\qquad
r_\perp=d_\perp^\top r.
\end{equation}
We set
\begin{equation}
\psi_{\mathrm{skew}}(x)
=
\exp\!\left[
-\frac12
\left(
\frac{r_\parallel^2}{a^2}
+
\frac{r_\perp^2}{b^2}
\right)
\right]
\left[
1+
\lambda_{\mathrm{bias}}
\operatorname{sigmoid}
\!\left(
\frac{\gamma d^\top x}{R}
\right)
\right],
\end{equation}
with \(a=0.60\), \(b=1.05\), \(\lambda_{\mathrm{bias}}=0.45\), and
\(\gamma=1.75\).

Each attribute vector field is constructed to be of \(0\) flux divergence (aka should make the Flux Matching loss \(0\)):
\begin{equation}
\nabla\cdot\bigl(p_\sigma(x)c(x)\bigr)=0.
\end{equation}
Hence every linear combination satisfies
\begin{equation}
\nabla\cdot\bigl(p_\sigma(x)u_\theta(x)\bigr)=0,
\end{equation}
so these perturbations change the vector field dynamics without changing the stationary distribution \(p_\sigma\). For comparison, we also construct a distribution-violating family by removing the \(90^\circ\) rotation from the same templates:
\begin{equation}
\tilde c_{\mathrm{rot}}(x)=s_\sigma(x),
\qquad
\tilde c_{\mathrm{tri}}(x)\propto \frac{\nabla\psi_{\mathrm{tri}}(x)}{p_\sigma(x)},
\qquad
\tilde c_{\mathrm{skew}}(x)\propto \frac{\nabla\psi_{\mathrm{skew}}(x)}{p_\sigma(x)}.
\end{equation}
Then
\begin{equation}
u_\theta^{\mathrm{viol}}(x)
=
\theta_0\tilde c_{\mathrm{rot}}(x)
+
\theta_1\tilde c_{\mathrm{tri}}(x)
+
\theta_2\tilde c_{\mathrm{skew}}(x),
\qquad
\nabla\cdot\bigl(p_\sigma u_\theta^{\mathrm{viol}}\bigr)\neq 0
\end{equation}
in general, so these perturbations change the stationary distribution.

Finally, the distribution preserving attributes are rescaled to have unit RMS norm under \(p_\sigma\):
\begin{equation}
c(x)
\leftarrow
\frac{c(x)}
{
\sqrt{\mathbb E_{x\sim p_\sigma}\|c(x)\|_2^2}
}.
\end{equation}
The scale factors are estimated using \(1200\) samples from \(p_\sigma\) and are reused for the corresponding distribution-violating fields, making the coefficients
\(\theta_0,\theta_1,\theta_2\) comparable across control directions.

\paragraph{Evaluation grid.}
All metrics are computed on a \(25\times25\) grid over \([-5.5,5.5]^2\) and let \(\{x_i\}_{i=1}^N\) be the grid points. We use normalized density weights
\begin{equation}
w_i
=
\frac{p_\sigma(x_i)}
{\sum_{j=1}^N p_\sigma(x_j)}.
\end{equation}

\paragraph{Metrics.}
All metrics use the \(25\times25\) grid and density weights \(w_i\propto p_\sigma(x_i)\).

\emph{Mixing speed.}
For observables
\begin{equation}
\Phi=\{x_1,\;x_2,\;\|x\|_2,\;\cos(\angle x),\;\sin(\angle x)\},
\end{equation}
we discretize \(\mathcal L_{f_\theta}=\Delta+f_\theta\cdot\nabla\) and, for each
\(\varphi\in\Phi\), solve the mean-constrained Poisson equation
\begin{equation}
-\mathcal L_{f_\theta}\psi_\varphi
=
\varphi-\mathbb E_w[\varphi].
\end{equation}
We estimate the integrated autocorrelation time by
\begin{equation}
\tau_\varphi(f_\theta)
=
\frac{
2\langle \varphi-\mathbb E_w[\varphi],\,\psi_\varphi\rangle_w
}{
\operatorname{Var}_w(\varphi)
},
\qquad
M_{\mathrm{mix}}(f_\theta)
=
\left(
\frac{1}{|\Phi|}
\sum_{\varphi\in\Phi}\tau_\varphi(f_\theta)
\right)^{-1}.
\end{equation}
Larger \(M_{\mathrm{mix}}\) means faster mixing.

\emph{Triangle shape.}
We measure cosine alignment between the perturbation and the triangle-shape control:
\begin{equation}
M_{\mathrm{tri}}(f_\theta)
=
\frac{
\langle u_\theta,c_{\mathrm{tri}}\rangle_w
}{
\|u_\theta\|_w\,\|c_{\mathrm{tri}}\|_w
}.
\end{equation}

\emph{Jacobian skewness.}
For \(u_\theta=(u_1,u_2)\), we measure the weighted squared antisymmetric Jacobian component:
\begin{equation}
M_{\mathrm{skew}}(f_\theta)
=
\sum_i w_i\,
\frac12
\left[
(\partial_{x_2}u_1)(x_i)
-
(\partial_{x_1}u_2)(x_i)
\right]^2.
\end{equation}

\emph{Distribution violation.}
For the non-preserving family, we measure the stationarity residual
\begin{equation}
r_\theta(x)
=
\nabla\cdot u_\theta^{\mathrm{viol}}(x)
+
u_\theta^{\mathrm{viol}}(x)\cdot s_\sigma(x),
\qquad
V(f_\theta)
=
\left(
\sum_i w_i r_\theta(x_i)^2
\right)^{1/2}.
\end{equation}
Here \(V(f_\theta)=0\) exactly when the perturbation preserves \(p_\sigma\).

\paragraph{Losses.}
The score matching loss is
\begin{equation}
L_{\mathrm{SM}}(f_\theta)
=
\frac12
\mathbb E_{x\sim p_\sigma}
\|f_\theta(x)-s_\sigma(x)\|_2^2
=
\frac12
\mathbb E_{x\sim p_\sigma}
\|u_\theta(x)\|_2^2.
\end{equation}
Note again that $s_\sigma(x)$ is given to us in closed form via \Cref{closed_score_form_toy_experiment}. In the experiment, we approximate this expectation on the same
\(25\times25\) grid used for the metrics. 

The Flux Matching loss is estimated separately by Monte Carlo, not
on the \(25\times25\) grid (since the Flux Matching loss requires short MCMC steps). For each \(\theta\), we sample \(B=256\) samples
\(x_0^{(i)}\sim p_\sigma\) as a minibatch, and compute the Flux Matching loss via \Cref{flux_matching_loss}. We set the horizon sampler distributed to be the truncated uniform, $q =\mathcal{U}[0,T]$. We average this estimate over \(64\) independent minibatches.

\paragraph{Sweep and plotting.}
For \Cref{app1_figure}, we sweep
\begin{equation}
\theta_0,\theta_1,\theta_2\in\{-2.5,0,2.5\},
\end{equation}
giving \(27\) fields. For each field, we record the relevant metric together with the score matching and Flux Matching losses. Since the two losses have different raw scales, each loss is divided by its mean value over the distribution-violating family before plotting. Curves are produced by binning the horizontal axis into \(12\) equally spaced bins and plotting the mean and standard error in each bin.

For \Cref{alternative_vector_fields_figure}, we perform a separate one-dimensional scan along each control axis using \(33\) values in \([-8,8]\). We display the field with the best value of each metric. For the Jacobian-skewness panel, if multiple fields are within \(99.5\%\) of the maximum skewness, we choose the one with the smallest triangle-flow alignment so that the displayed field isolates skewness rather than triangle circulation. The background empirical samples (in red dots) are \(1800\) draws from \(p_\sigma\).

\subsection{Application 2: \textit{Interpretable} Generative Fields}
\label{app2_details_appendix}

\paragraph{Goal.}
The goal of this experiment is to show that Flux Matching can learn interpretable biological vector fields, such as RNA velocity. We use the scVelo dynamical model of \cite{bergen2020generalizing}, but fit its parameters with Flux Matching rather than their EM-style latent-time optimization.

\paragraph{Background on scVelo's Dynamical Model \cite{bergen2020generalizing}.}
The Dynamical model generalizes RNA velocity beyond the original steady-state model
\cite{la2018rna} by fitting a likelihood-based dynamical model of splicing
kinetics. The scVelo dynamical model describes each gene \(g\)'s unspliced and spliced abundances by
\[
\frac{d u_g(\tau)}{d\tau} = \alpha_g(\tau) - \beta_g u_g(\tau),
\qquad
\frac{d s_g(\tau)}{d\tau} = \beta_g u_g(\tau) - \gamma_g s_g(\tau),
\]
where the transcription rate switches between phases \(k \in \{\textsc{Induction}, \textsc{Repression}\}\):
\[
\alpha_g(\tau) =
\begin{cases}
\alpha_g, & k = \textsc{Induction}\ (\tau \leq \tau_s^g),\\
0, & k = \textsc{Repression}\ (\tau > \tau_s^g).
\end{cases}
\]
Induction starts from \((u_g, s_g) = (0, 0)\) at \(\tau = 0\); repression starts from \((u_g(\tau_s^g), s_g(\tau_s^g))\). Closed-form solutions \(\bar u_g(\tau, k; \Theta_g)\) and \(\bar s_g(\tau, k; \Theta_g)\) yield a Gaussian observation model
\[
\log p\bigl(u_{ig}, s_{ig} \mid k, \tau, \Theta_g\bigr) = \log \mathcal{N}\bigl(u_{ig};\, \bar u_g(\tau, k),\, \sigma_{u,g}^2\bigr) + \log \mathcal{N}\bigl(s_{ig};\, \bar s_g(\tau, k),\, \sigma_{s,g}^2\bigr),
\]
with per-gene parameters \(\Theta_g = (\alpha_g, \beta_g, \gamma_g, \tau_s^g, \sigma_{u,g}, \sigma_{s,g})\). The fitting procedure is summarized in Algorithm~\ref{alg:scvelo_em}.
\begin{algorithm}[t]
\caption{scVelo dynamical model fitting \cite{bergen2020generalizing}}
\label{alg:scvelo_em}
\begin{algorithmic}[1]
\Require Unspliced and spliced counts \(\{(u_{ig}, s_{ig})\}_{i,g}\)
\For{each gene \(g\) (independently)}
    \State Initialize \(\Theta_g\).
    \Repeat
        \State \textbf{E-step.} For each cell \(i\):
        \State \quad For each phase \(k\), find the best latent time \(\tau_{ig}^{(k)} = \text{argmax}_\tau \log p(u_{ig}, s_{ig} \mid k, \tau, \Theta_g)\).
        \State \quad Pick the better phase: \(k_{ig} = \text{argmax}_k \log p(u_{ig}, s_{ig} \mid k, \tau_{ig}^{(k)}, \Theta_g)\), and set \(\tau_{ig} = \tau_{ig}^{(k_{ig})}\).
        \State \textbf{M-step.} Update \(\Theta_g \leftarrow \text{argmax}_{\Theta_g} \sum_i \log p(u_{ig}, s_{ig} \mid k_{ig}, \tau_{ig}, \Theta_g)\).
    \Until{convergence}
\EndFor
\end{algorithmic}
\end{algorithm}

Interestingly, Flux Matching removes the need for these EM-style updates. Instead of
alternating between assigning latent times under the current kinetic parameters
and updating the parameters under those assignments, we directly optimize the
parameters of the vector field using the Flux Matching loss.
\paragraph{Datasets.}
We use five RNA-velocity datasets: Bone marrow, Dentate Gyrus, Gastrulation, Hindbrain, and Pancreas. The datasets are loaded as follows
\begin{verbatim}
import scvelo as scv

datasets = {
    "bone_marrow": scv.datasets.bonemarrow(),
    "dentate_gyrus": scv.datasets.dentategyrus(),
    "gastrulation": scv.datasets.gastrulation(),
    "pancreas": scv.datasets.pancreas(),
    "hindbrain": scv.read("path/to/hindbrain.h5ad"),
}
\end{verbatim}
We provide further instructions on obtaining and loading the hindbrain dataset in our code.

\paragraph{Data preprocessing.}
We apply the same preprocessing pipeline to each dataset
\begin{verbatim}
import scanpy as sc
import scvelo as scv

sc.pp.filter_cells(adata, min_counts=1)
scv.pp.filter_and_normalize(adata, min_shared_counts=20)
sc.pp.highly_variable_genes(
    adata,
    n_top_genes=2000,
    flavor="seurat",
    subset=True,
)
sc.pp.pca(adata)
sc.pp.neighbors(adata, n_pcs=30, n_neighbors=30)
scv.pp.moments(adata, n_neighbors=None, n_pcs=None)
\end{verbatim}

\paragraph{Model parameterization.}
The scVelo dynamical model uses gene-specific splicing and degradation dynamics that depend on a discrete latent transcriptional state (the $4$ phases, namely induction initiation, induction saturation, repression initiation, and repression decay, details can be found in \cite{bergen2020generalizing}). Because we optimize using gradient descent, we must use a continuous analogue to the discrete assignment:
\begin{equation}
\label{eq:rna_velocity_vector_field}
\begin{aligned}
\alpha_g(u_g,s_g)
&=
\mathrm{softplus}
\left(
a_g u_g + b_g s_g + c_g
\right),
\\
\frac{d u_g}{d\tau}
&=
\alpha_g(u_g,s_g) - \beta_g u_g,
\\
\frac{d s_g}{d\tau}
&=
\beta_g u_g - \gamma_g s_g .
\end{aligned}
\end{equation}
Here \(\alpha_g(u_g,s_g)>0\) is the learned transcription rate, approximated by a first-order approximation $
\alpha_g(u_g,s_g)
=
\mathrm{softplus}\!\left(
a_g u_g + b_g s_g + c_g
\right),
$ while $\beta, \gamma \geq 0$ are the learned splicing and degradation rates, respectively. The
optimized gene-specific parameters are
\(\tilde{\beta}_g,\tilde{\gamma}_g,a_g,b_g,c_g\). We enforce positivity of the
kinetic rates by setting
\begin{equation}
\label{eq:rna_velocity_positive_rates}
\beta_g = \mathrm{softplus}(\tilde{\beta}_g),
\qquad
\gamma_g = \mathrm{softplus}(\tilde{\gamma}_g).
\end{equation}
The coefficients \(a_g,b_g,c_g\) provide a continuous analogue of
the latent transcriptional-state assignment in the dynamical model. Rather than
switching among discrete transcription rates, the model learns a smooth
state-dependent transcription rate. We initialize \(a_g,b_g,c_g\) at zero, so
the transcription rate is initially constant and the model begins as a simple
linear kinetic ODE.

\paragraph{Training details.}
The RNA velocity experiments on all datasets use the same Flux Matching hyperparameters, summarized in \Cref{tab:app2_hyperparameters}. For the dynamical model, we use the package's default hyperparameters.

\begin{table}[t]
\centering
\caption{Hyperparameters for the RNA-velocity experiment.}
\label{tab:app2_hyperparameters}
\begin{tabular}{lc}
\toprule
Hyperparameter & Value \\
\midrule
Training iterations & \(100\) \\
Optimizer & Adam \\
Learning rate & \(10^{-3}\) \\
$\sigma^2$ & \(10^{-3}\) \\
Horizon distribution & \(q(t)=\mathcal{U}[0,T]\) \\
\bottomrule
\end{tabular}
\end{table}

\paragraph{Visualizing Learned RNA Velocity.}
We provide visualizations of the inferred RNA velocity and ground truth progression of the bone marrow dataset in \Cref{rnavelo_bonemarrow_figure}, the dentate gyrus dataset in \Cref{rnavelo_dentategyrus_figure}, the gastrulation dataset in \Cref{rnavelo_gastrulation_figure}, the hindbrain dataset in \Cref{rnavelo_hindbrain_figure}, and the pancreas dataset in \Cref{rnavelo_pancreas_figure}. 
\begin{figure}
  \centering
  \includegraphics[width=0.9\textwidth]{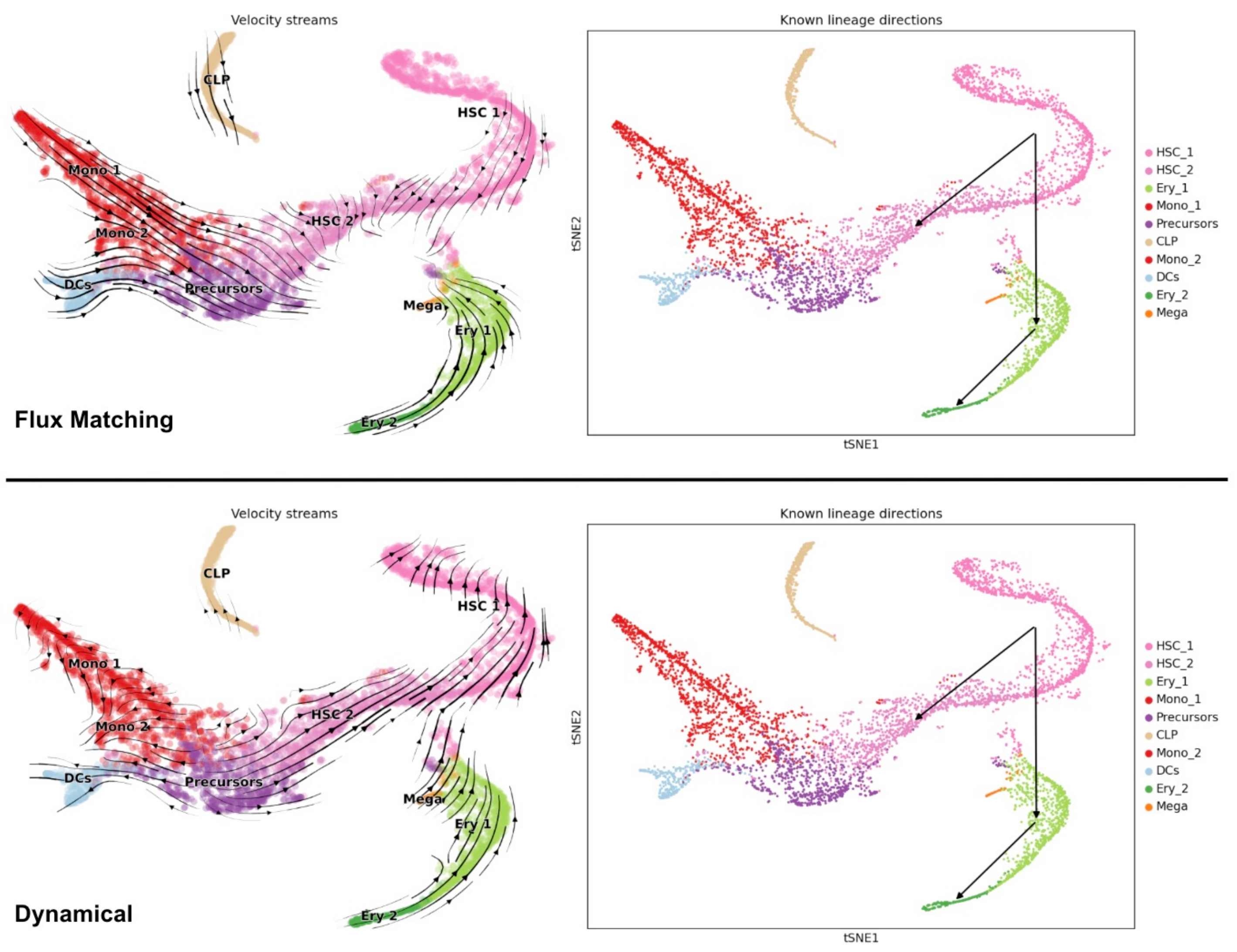}
  \caption{ \textbf{Bone Marrow Dataset}. (Left half) inferred RNA velocity (Right half) ground truth biological progression given by arrows}
  \label{rnavelo_bonemarrow_figure}
\end{figure}
\begin{figure}
  \centering
  \includegraphics[width=0.9\textwidth]{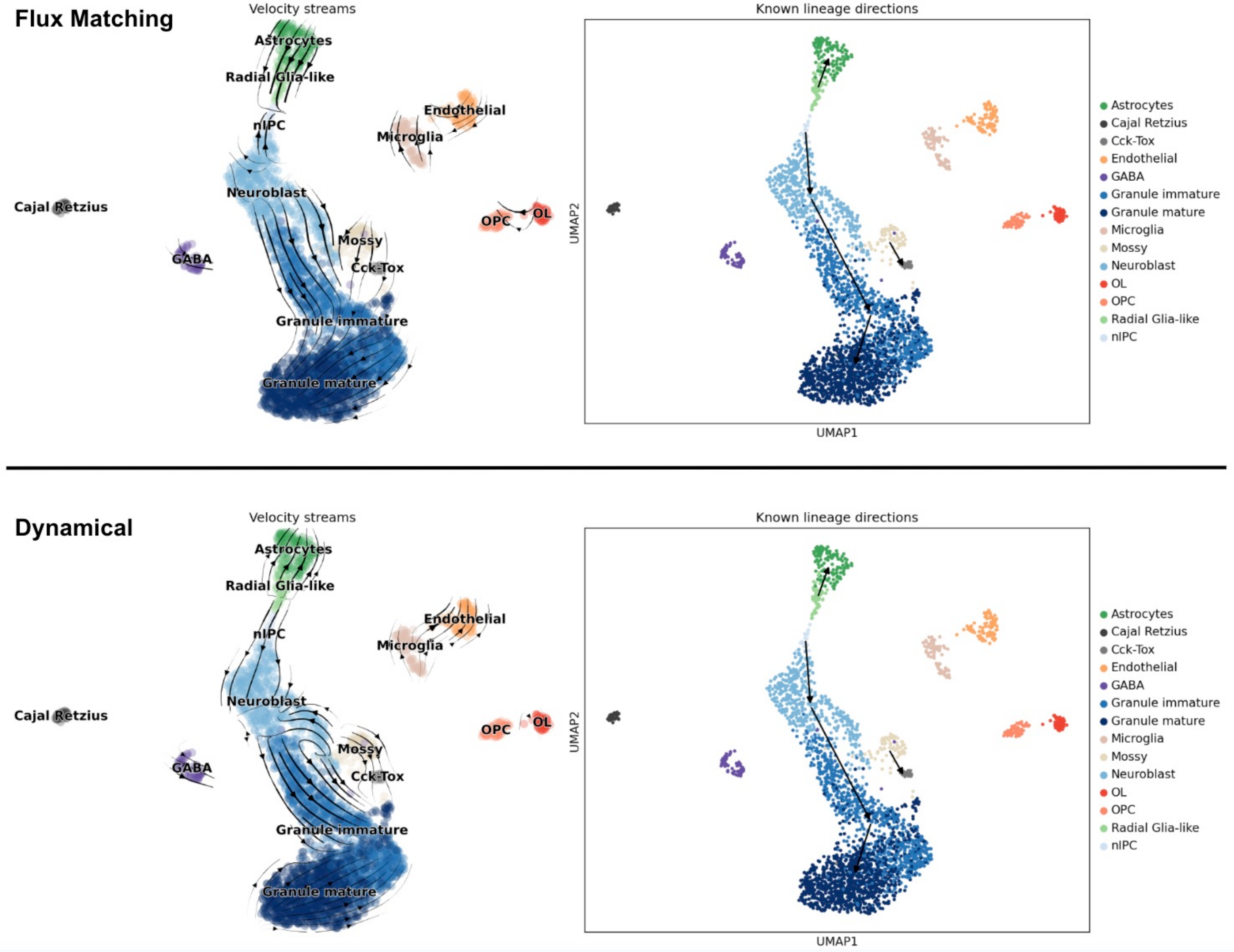}
  \caption{ \textbf{Dentate Gyrus Dataset}. (Left half) inferred RNA velocity (Right half) ground truth biological progression given by arrows}
  \label{rnavelo_dentategyrus_figure}
\end{figure}
\begin{figure}
  \centering
  \includegraphics[width=0.9\textwidth]{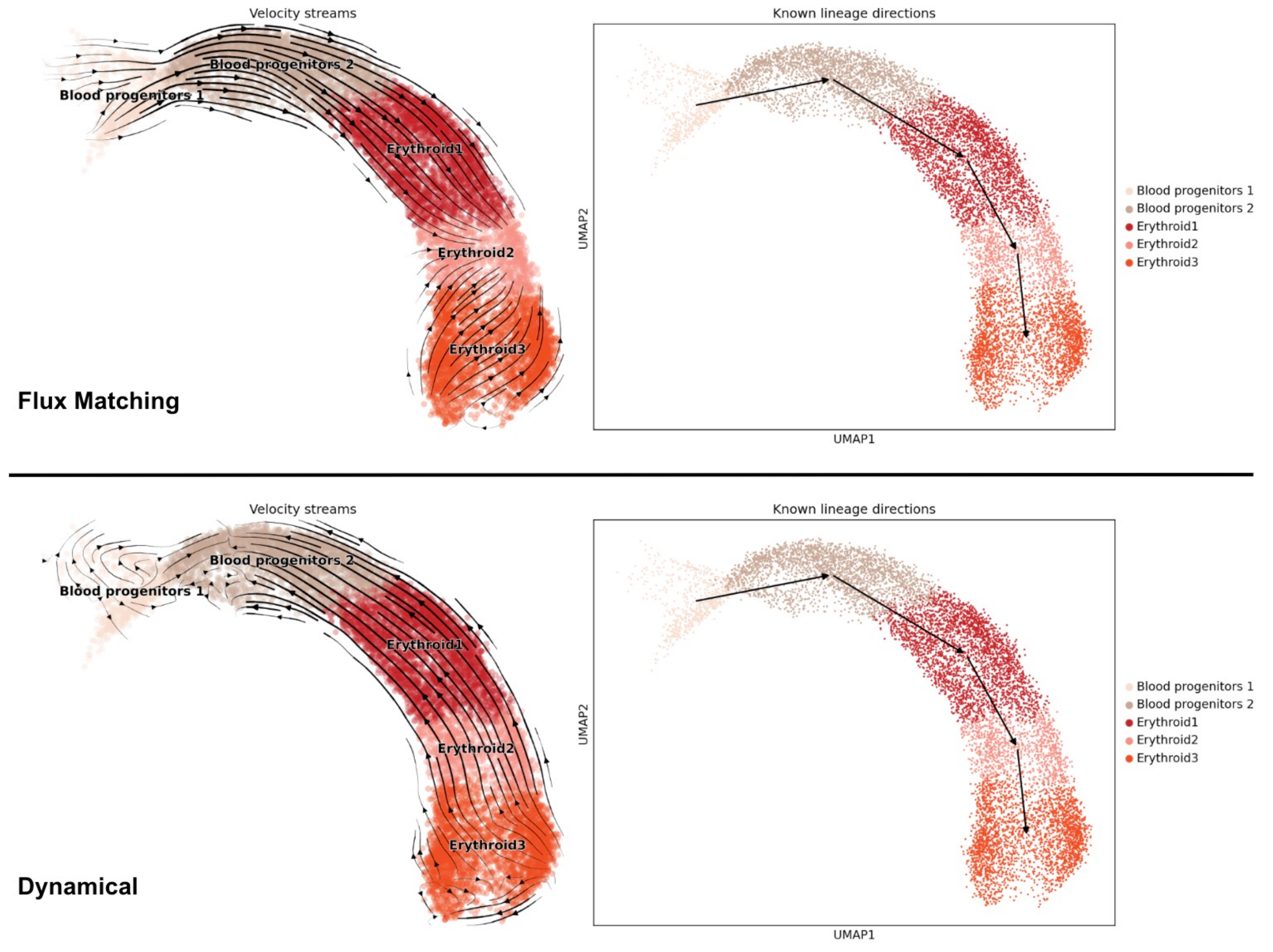}
  \caption{ \textbf{Gastrulation Dataset}. (Left half) inferred RNA velocity (Right half) ground truth biological progression given by arrows}
  \label{rnavelo_gastrulation_figure}
\end{figure}
\begin{figure}
  \centering
  \includegraphics[width=0.9\textwidth]{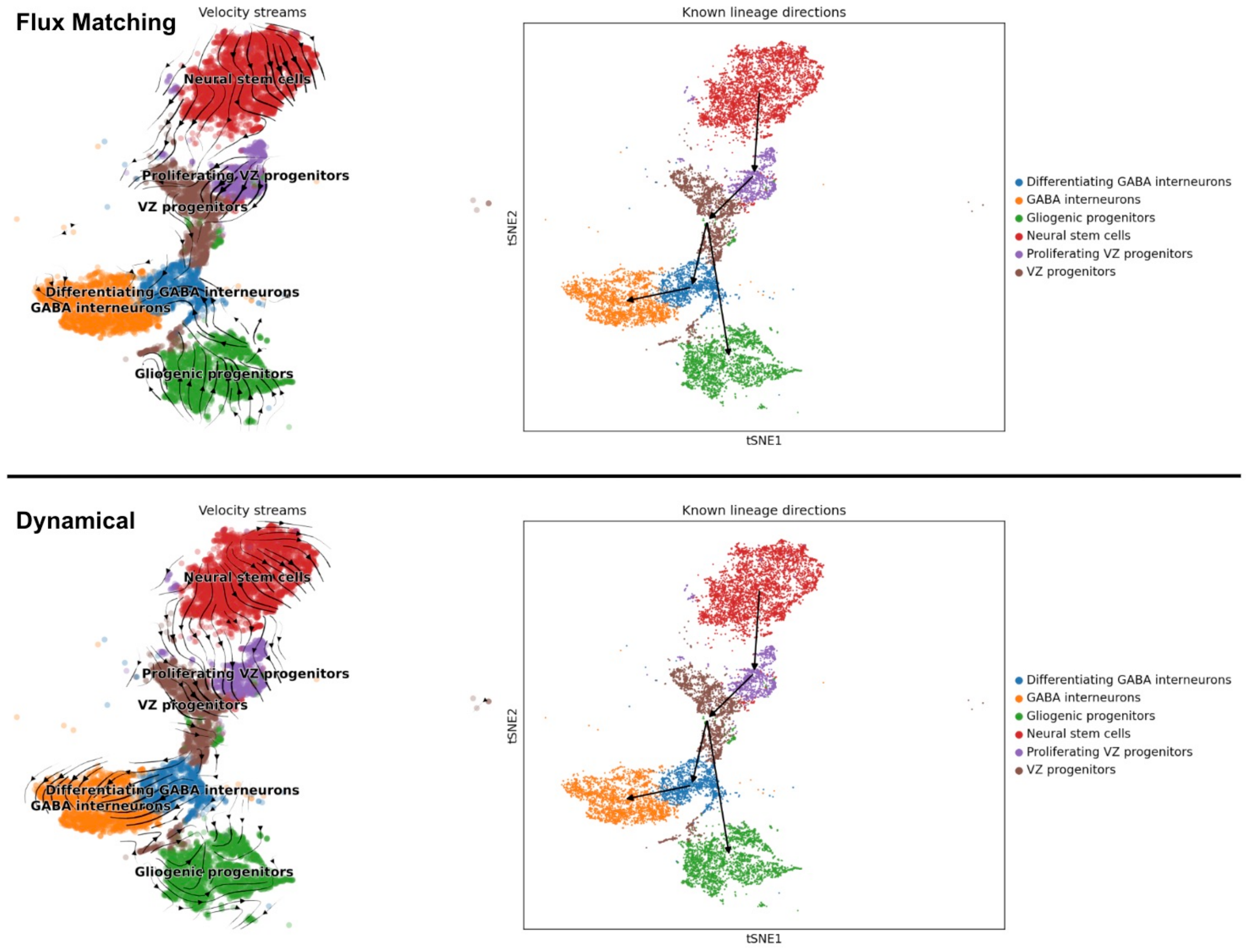}
  \caption{ \textbf{Hindbrain Dataset}. (Left half) inferred RNA velocity (Right half) ground truth biological progression given by arrows}
  \label{rnavelo_hindbrain_figure}
\end{figure}
\begin{figure}
  \centering
  \includegraphics[width=0.9\textwidth]{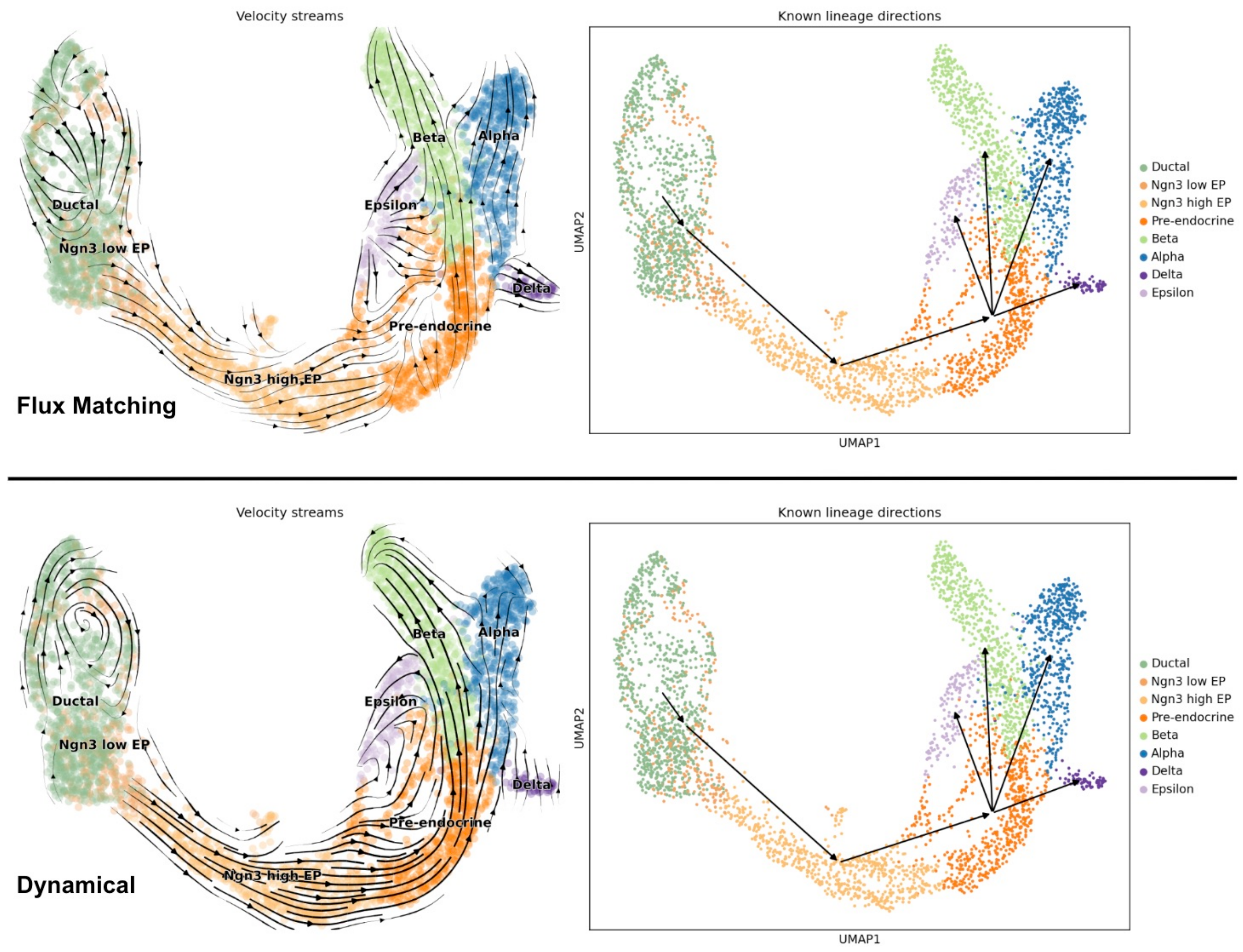}
  \caption{ \textbf{Pancreas Dataset}. (Left half) inferred RNA velocity (Right half) ground truth biological progression given by arrows}
  \label{rnavelo_pancreas_figure}
\end{figure}

\subsection{Application 3: \textit{Unrestricted} Generative Fields}
\label{app3_details_appendix}

\paragraph{Goal.}
While not the primary benefit of Flux Matching, we want to show in this experiment that Flux Matching can be used as a standalone generative model that performs well on high-dimensional, complex image distributions and scales efficiently in both runtime and memory.

\paragraph{Datasets.}
We evaluate on CIFAR-10 and CelebA. CIFAR-10 contains \(50{,}000\) training images with dimension \(3 \times 32 \times 32\). CelebA contains \(162{,}770\) training images with dimension \(3 \times 64 \times 64\).

\paragraph{Model parameterization.}
For both CIFAR-10 and CelebA, we use the standard U-Net architecture from \cite{futurexiang2023diffusion}. The architecture details are given in \Cref{tab:app3_model_hparams}.

\begin{table}[t]
\centering
\small
\setlength{\tabcolsep}{5pt}
\caption{Model parameterization for Experiment 3 \& 4.}
\label{tab:app3_model_hparams}
\begin{tabular}{@{}lll@{}}
\toprule
\textbf{Parameter} & \textbf{CIFAR-10} & \textbf{CelebA} \\
\midrule
Input shape & \(3 \times 32 \times 32\) & \(3 \times 64 \times 64\) \\
Base channels & \(128\) & \(128\) \\
Channel multipliers & \([1,2,2,2]\) & \([1,2,2,2]\) \\
Attention pattern & \([\texttt{False}, \texttt{True}, \texttt{False}, \texttt{False}]\) & \([\texttt{False}, \texttt{True}, \texttt{False}, \texttt{False}]\) \\
Dropout & \(0.1\) & \(0.1\) \\
Residual blocks per resolution & \(2\) & \(2\) \\
Number of parameters & \(35.7\)M & \(35.7\)M \\
\bottomrule
\end{tabular}
\end{table}

\paragraph{Training objective.}
We train using the EDM noise parameterization and distribution with default EDM settings on bfloat16. Since Flux Matching uses a batch KDE estimate of the score in its loss, we compare against the stable target version of diffusion models \cite{xu2023stable},which uses the same batch KDE score representation. This baseline is stronger than single sample DSM since stable targets were shown to outperform the standard single sample DSM by reducing variance. 

\paragraph{Training, sampling, and evaluation hyperparameters.}
We use the same hyperparameters for CIFAR-10 and CelebA except for the per-GPU batch size. During training, we compute \(10\)K-image FID every \(50{,}000\) iterations. For the final reported results, we select the checkpoint with the best \(10\)K-image FID and compute \(50\)K-image FID, negative log-likelihood, and Inception score (standard evaluation procedure, following \cite{song2020score}). The negative log-likelihood is computed using the same probability-flow ODE solver and tolerances used for sampling. The main training and sampling hyperparameters are given in \Cref{tab:app3_hparams}. Efficiency values reported in \Cref{tab:unrestricted_generation} were based on distributed training on $4$ NVIDIA A100 GPUs. Furthermore, to ensure a fair comparison of efficiency values for CIFAR10 and CelebA, we standardize the batch size to $32$ when computing wall-clock time and memory usage. 

\begin{table}[t]
\centering
\small
\setlength{\tabcolsep}{4pt}
\caption{Training and sampling hyperparameters for Experiments 3 \& 4. Unless otherwise noted, the same values are used for CIFAR-10 and CelebA. }
\label{tab:app3_hparams}
\begin{tabular}{@{}p{0.23\linewidth}p{0.22\linewidth}p{0.25\linewidth}p{0.20\linewidth}@{}}
\toprule
\multicolumn{2}{c}{\textbf{Training}} &
\multicolumn{2}{c}{\textbf{Sampling}} \\
\cmidrule(r){1-2}
\cmidrule(l){3-4}
\textbf{Hyperparameter} & \textbf{Value} &
\textbf{Hyperparameter} & \textbf{Value} \\
\midrule
Optimizer & AdamW &
Sampler & PF ODE \\

Learning rate & \(10^{-4}\) &
Solver & Adaptive second-order Heun \\

EMA rate & \(0.9993\) &
Implementation & \texttt{torchdiffeq.odeint} \\

Warmup steps & \(2{,}500\) &
Method & \texttt{adaptive\_heun} \\

Training iterations & \(500{,}000\) &
Relative tolerance & \(10^{-5}\) \\

Number of GPUs & \(4\) &
Absolute tolerance & \(10^{-5}\) \\

Batch size, CIFAR-10 & \(128\) per GPU &
\(\sigma_{\min}\) & \(0.002\) \\

Batch size, CelebA & \(32\) per GPU &
\(\sigma_{\max}\) & \(80\) \\

Horizontal flips & \(\texttt{True}\) &
Sampling interval & \([\sigma_{\min}, \sigma_{\max}]\) \\
\bottomrule
\end{tabular}
\end{table}

\paragraph{Extended samples.}
We show randomly generated CIFAR-10 samples in \Cref{cifar_samples} and CelebA samples in \Cref{celeba_samples}.

\begin{figure}
  \centering
  \includegraphics[width=\textwidth]{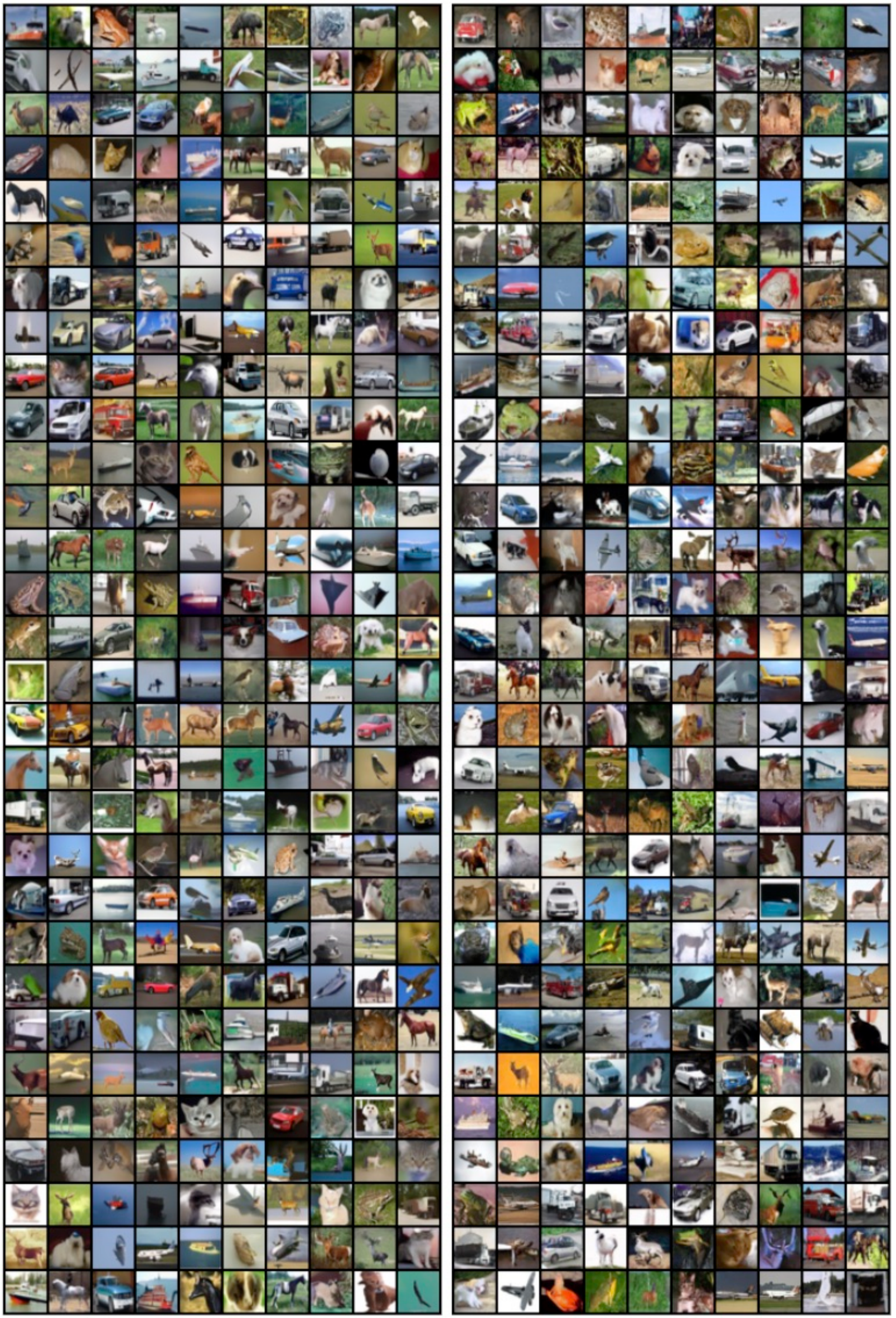}
  \caption{Random CIFAR-10 samples from Flux Matching (left) and DSM (right).}
  \label{cifar_samples}
\end{figure}

\begin{figure}
  \centering
  \includegraphics[width=\textwidth]{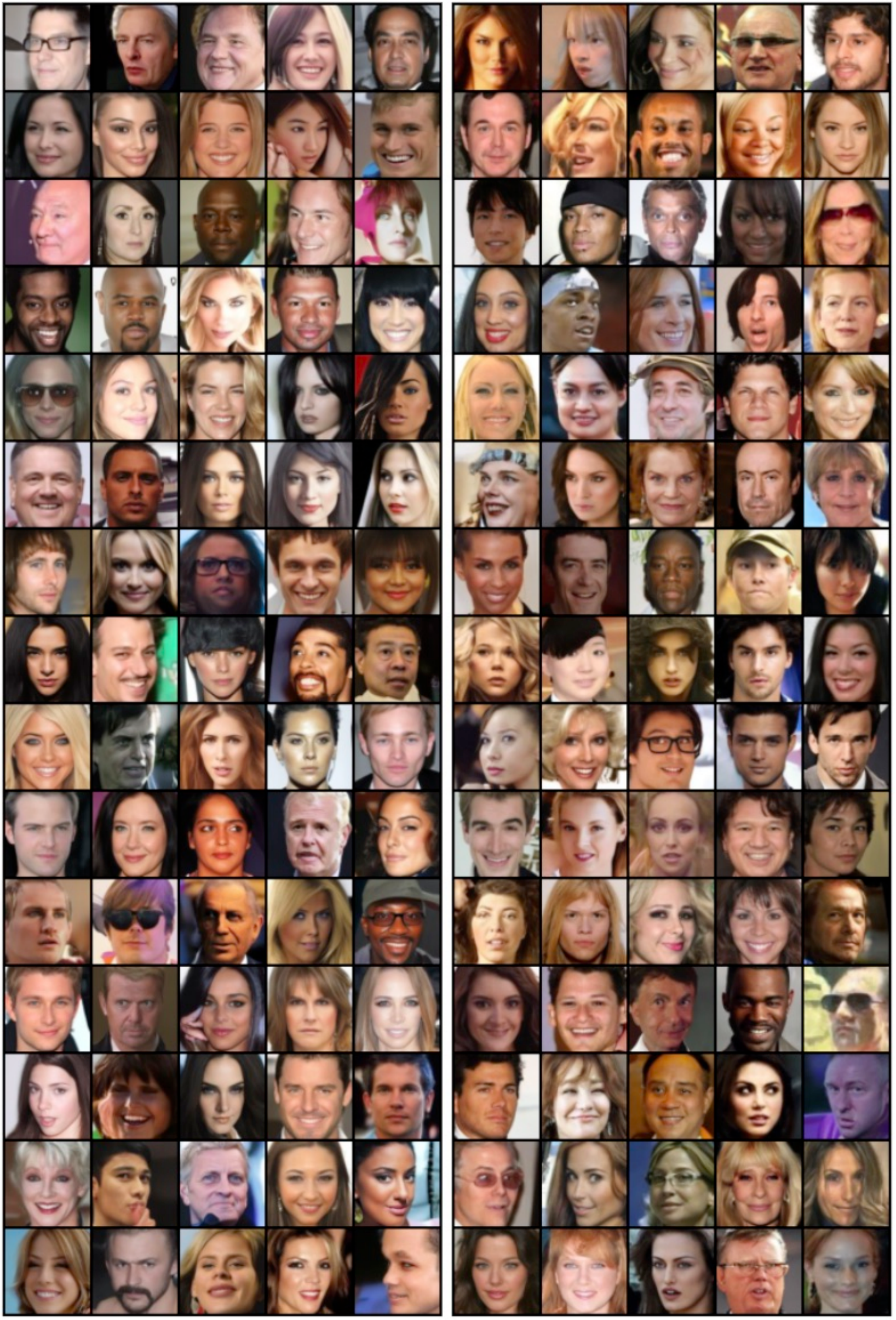}
  \caption{Random CelebA samples from Flux Matching (left) and DSM (right).}
  \label{celeba_samples}
\end{figure}

\subsection{Application 4: \textit{Fast Mixing} Generative Fields for Accelerated Sampling}
\label{app4_details_appendix}

\paragraph{Goal.} We want to show that Flux Matching is much more than just its unrestricted form (standalone objective), where optimizing different vector field attributes can have tangible benefits in purely generative modeling terms. Here, we show that we can reduce the number of sampling steps (aka number of function evalutions) by optimizing for vector fields with faster mixing properties.

\paragraph{Datasets, model parameterization, and evaluation.}
We use the same datasets, model parameterization, optimizer, training schedule, and noise distribution as in \Cref{app3_details_appendix}. For checkpoint selection, we follow the same protocol as in \Cref{app3_details_appendix}: for each method, we select the checkpoint with the best 10K FID using the adaptive Heun sampler. We then evaluate this selected checkpoint using the predictor--corrector sampler above and report 1K FID across different numbers of PC steps. These results are shown in \Cref{fast_mixing_figure}.

\paragraph{Training.}
Directly optimizing the mixing time of the sampler is intractable, so we introduce a tractable one-step proxy. Given a minibatch $\{x_0^{(i)}\}_{i=1}^{B}\sim p_\sigma$, we use the first half
$\{x_0^{(i)}\}_{i=1}^{B/2}$ as a reference batch from the target marginal.
From the second half, we construct an off-distribution batch by adding Gaussian noise $x_{\mathrm{noise}}^{(i)}
=
x_0^{(i+B/2)}+\sigma z_{0}^{(i)}$ where $z_{0}^{(i)}\sim\mathcal N(0,I)$ for $i=1,\dots,B/2$.
Starting from these off-distribution samples, we apply one Langevin step using
the learned field $f_\theta^\sigma$:
\begin{equation}
    x_{\mathrm{mix}}^{(i)}
=
x_{\mathrm{noise}}^{(i)}
+
h_\sigma f_\theta^\sigma(x_{\mathrm{noise}}^{(i)})
+
\sqrt{2h_\sigma}\,z_{\mathrm{noise}}^{(i)},
\qquad
z_{\mathrm{noise}}^{(i)}\sim\mathcal N(0,I),
\qquad
h_\sigma = 0.2\sigma^2.
\end{equation}
Our proxy for mixing speed measures whether this single step moves $x_{\mathrm{noise}}$ closer
to the target marginal:
\begin{equation}
\label{mixing_speed_proxy}
\mathcal{L}_{\mathrm{mixing}}
=
\mathbb{E}_{\sigma\sim\mathcal P}
\left[
\frac{
\mathrm{SD}
\!\left(
\{x_{\mathrm{mix}}^{(i)}\}_{i=1}^{B/2},
\{x_0^{(i)}\}_{i=1}^{B/2}
\right)
}{
\text{sg}\!\left[
\mathrm{SD}
\!\left(
\{x_{\mathrm{noise}}^{(i)}\}_{i=1}^{B/2},
\{x_0^{(i)}\}_{i=1}^{B/2}
\right)
\right]
}
\,e^{-s_{\mathrm{mixing}}(\sigma)}
+
s_{\mathrm{mixing}}(\sigma)
\right],
\end{equation}
where $\mathrm{SD}$ is the Sinkhorn divergence and $s_{\mathrm{mixing}}(\sigma)$ is a learned normalizer (similar as \Cref{noise_conditioned_section}, parameterized as a $1$-layer MLP and trained simultaneously with the main model) that keeps the mixing loss on a comparable scale across noise levels $\sigma$. We compute the Sinkhorn divergence with regularization parameter \(\varepsilon=0.05\) and \(50\) Sinkhorn iterations.

The final training objective is
\begin{equation}
\mathcal L_{\mathrm{flux\text{-}fast}}
=
\mathcal L_{\mathrm{flux\text{-}noise}}
+
\lambda_{\mathrm{mix}}\,\mathcal L_{\mathrm{mix}},
\end{equation}
with \(\lambda_{\mathrm{mix}}=0.01\).

\paragraph{Sampling.}
\begin{algorithm}[t]
\caption{Predictor--Corrector sampler \cite{song2020score}}
\label{alg:fast_mixing_pc_sampler}
\begin{algorithmic}[1]
\State Set \(\sigma_0=\sigma_{\max}>\cdots>\sigma_K=\sigma_{\min}\) on a logarithmic grid.
\State Sample \(x\sim\mathcal N(0,\sigma_{\max}^2 I)\).

\For{\(k=0,\ldots,K-1\)}
    \State \textbf{Predictor step}
    \State \(\Delta_k \gets \sigma_k^2-\sigma_{k+1}^2\).
    \If{\(k<K-1\)}
        \State Sample \(z\sim\mathcal N(0,I)\).
        \State \(x \gets x+\Delta_k f_\theta^{\sigma_k}(x)+\sqrt{\Delta_k}\,z\).
    \Else
        \State \(x \gets x+\Delta_k f_\theta^{\sigma_k}(x)\).
    \EndIf

    \If{\(k<K-1\)}
        \State \textbf{Corrector step}
        \State Sample \(z'\sim\mathcal N(0,I)\).
        \State \(\epsilon_k \gets 2\left(\rho\|z'\|_2/(\|f_\theta^{\sigma_{k+1}}(x)\|_2)\right)^2\), with \(\rho=0.16\) (Table 5 of \cite{song2020score}).
        \State \(x \gets x+\epsilon_k f_\theta^{\sigma_{k+1}}(x)+\sqrt{2\epsilon_k}\,z'\).
    \EndIf
\EndFor
\end{algorithmic}
\end{algorithm}
We evaluate fast mixing using the predictor--corrector (PC) sampler in
\Cref{alg:fast_mixing_pc_sampler}, following \cite{song2020score}. We use a
PC sampler rather than adaptive Heun because mixing speed concerns how quickly
the sampling dynamics induced by a field converge to their stationary
distribution. In the noised setting, the relevant stationary distribution at
noise level \(\sigma\) is the noised marginal \(p_\sigma\). Thus, a fast-mixing
field should be able to take a sample that is not exactly distributed according
to \(p_\sigma\) and quickly move it toward \(p_\sigma\). This is precisely
the role of the corrector step in PC sampling, while the predictor moves samples
across noise levels. When we use fewer sampling steps, these moves between noise levels become larger
and can move samples farther from the next noised marginal. The corrector then
uses the learned field at the current noise level to pull samples back toward
that marginal. Therefore, faster-mixing fields should enable fewer and larger predictor steps because even when the predictor introduces more error, the faster mixing corrector can remove that error more quickly.

For \(K\) PC steps, the sampler uses \(2K\) evaluations of
\(f_\theta^\sigma\): \(K\) predictor evaluations and \(K\) corrector
evaluations.

\subsection{Application 5: \textit{Embedding Structure} in Generative Fields}
\label{app5_details_appendix}
\begin{figure}
  \centering
  \includegraphics[width=0.8\textwidth]{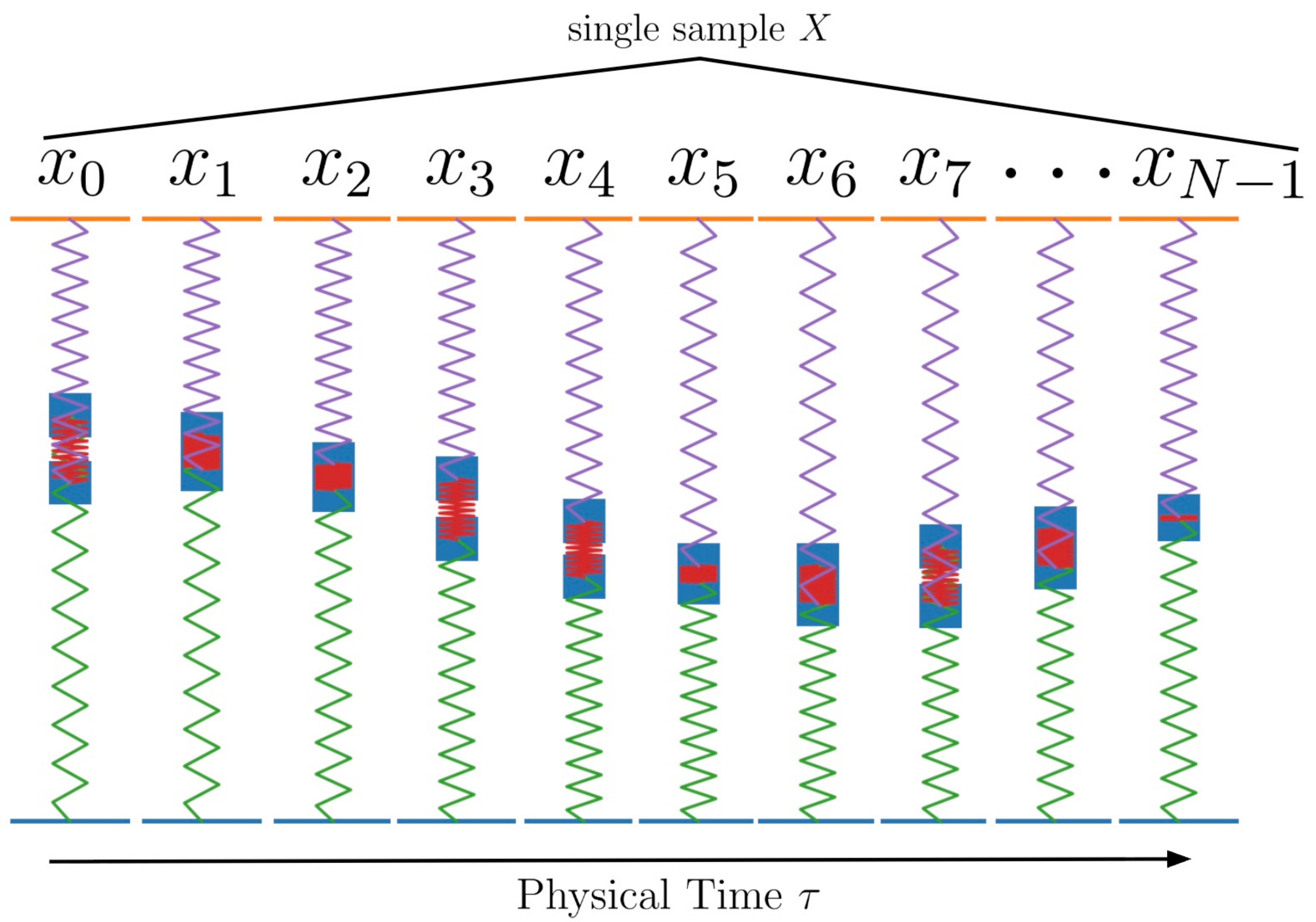}
  \caption{
  A single sample from the spring simulation. The system contains two masses, shown as blue blocks. Each mass is connected to a wall by a spring, shown in purple and green, and the two masses are connected to each other by a spring, shown in red. Each column corresponds to one physical time point and contains four features, which are the positions and velocities of the two masses. Collectively, these time points form one trajectory, which is one sample in the dataset.
  }
  \label{springs_illustration}
\end{figure}

\paragraph{Goal.} This experiment tests whether Flux Matching can incorporate directed structural relationships between variables while still generating all variables in parallel. The data is trajectories, so a natural inductive bias is temporal directionality---the field at a given physical time point should depend only on the current and previous time points. Standard diffusion samplers also generate all time points in parallel, but DSM restricts the learned field to be the score function, which have symmetric Jacobians by equality of mixed partial derivatives, making strictly directed dependencies such as temporal autoregression incompatible with the true score. Flux Matching has no such constraint, so we can impose a causal temporal mask on \(f_\theta^\sigma\), while still generating the entire trajectory in one parallel sampling procedure. This masking restricts the effective hypothesis class toward fields that respect the temporal ordering of the data, which can improve data efficiency \citep{baxter2000model}.

\paragraph{Dataset.}
\begin{table}[t]
\centering
\caption{Dataset and simulator parameters for the nonlinear spring experiment.}
\label{tab:springs_dataset_params}
\begin{tabular}{ll}
\toprule
Parameter & Value \\
\midrule
Number of training trajectories & \(2000\) \\
Trajectory length & \(N=50\) \\
State dimension per time point & \(4\) \\
Trajectory dimension after flattening & \(200\) \\
Wall spring constant & \(k_{\mathrm{wall}}=1.0\) \\
Coupling spring constant & \(k_{\mathrm{couple}}=0.7\) \\
Damping coefficient & \(\gamma=0.08\) \\
Nonlinear spring coefficient & \(c=0.08\) \\
Initial positions & \(q_i(0)\sim\mathcal N(0,3.0^2)\) \\
Initial velocities & \(v_i(0)\sim\mathcal N(0,1.5^2)\) \\
\bottomrule
\end{tabular}
\end{table}
Our dataset consists of simulated trajectories of two masses connected by nonlinear springs. Each data sample is a full trajectory over physical time. Let
\(x(\tau)=(q_1(\tau),v_1(\tau),q_2(\tau),v_2(\tau))\in\mathbb R^4\)
denote the simulator state at physical time \(\tau\), where \(q_i(\tau)\) and
\(v_i(\tau)\) are the position and velocity of mass \(i\). We record trajectories
\(X=(x_0,\ldots,x_{N-1})\in\mathbb R^{N\times 4}\), where
\(x_n:=x(n\Delta\tau)\). The trajectories are generated by integrating the ODE
below with RK4 using physical step size \(\Delta\tau=0.10\):
\begin{equation}
\label{spring_simulation_ode}
\begin{aligned}
\frac{d q_1}{d\tau} &= v_1, & \quad \frac{d v_1}{d\tau} &= -k_{\mathrm{wall}}q_1 -k_{\mathrm{couple}}(q_1-q_2) -\gamma v_1 -cq_1^3, \\
\frac{d q_2}{d\tau} &= v_2, & \quad \frac{d v_2}{d\tau} &= -k_{\mathrm{wall}}q_2 -k_{\mathrm{couple}}(q_2-q_1) -\gamma v_2 -cq_2^3 .
\end{aligned}
\end{equation}
We use the simulator and dataset parameters in \Cref{tab:springs_dataset_params}. \Cref{springs_illustration} visualizes one simulated trajectory.
\paragraph{Model parameterization.}
We use the same transformer architecture for Flux Matching and DSM. Each trajectory is represented as a sequence of physical-time tokens, where each token contains the state \((q_1,v_1,q_2,v_2)\). The model first projects each token to a hidden representation of $512$ dimensions, adds a sinusoidal positional embedding over physical time, and adds a learned embedding of the noise level \(\sigma\). The resulting sequence is processed by a stack of $8$ heads of $12$ residual self-attention blocks with LayerNorm, self-attention, and a GELU MLP. The final hidden states are normalized and projected back to the state dimension, producing one output vector per physical time point.

We compare two attention patterns. The noncausal model uses full self-attention across all physical time points. The causal model applies an upper-triangular attention mask over the physical-time dimension. In each attention block, if \(A_{ij}\) denotes the attention logit from time point \(i\) to time point \(j\), then the causal model sets
\begin{equation}
A_{ij}=-\infty
\qquad
\text{whenever}
\qquad
j>i
\end{equation}
before applying the softmax. As a result, the output at time point \(i\) can depend only on time points \(0,\ldots,i\) (visualized on the left side of \Cref{structured_fields_figure}). This enforces directed temporal dependence while preserving parallel generation, since all time points are still produced in a single forward pass.

\paragraph{Training objective.}
Since this experiment is not image-based, we use the standard variance-exploding (VE) noise distribution rather than the EDM noise distribution. Flux Matching is trained with the noised annealed Flux Matching loss in \Cref{noise_conditioned_section}. For noise annealed DSM, we use the stronger baseline of the stable-target variant from \cite{xu2023stable}, as was done in \Cref{app3_details_appendix}. 

We train four models: Flux Matching with full attention, Flux Matching with causal attention, DSM with full attention, and DSM with causal attention (all of which are noise annealed versions). This isolates the effect of adding a directed temporal mask under each training objective.

\paragraph{Training, sampling, and evaluation hyperparameters.}
All training and sampling hyperparameters are shown in \Cref{tab:springs_train_sample_eval_params}. At sampling time, we use the reverse VE sampler (Eq. 9 from \cite{song2020score}). We evaluate sample quality using Wasserstein distance \(\mathcal W_2\) between $2000$ generated trajectories and training trajectories.

\begin{table}[t]
\centering
\caption{Training and sampling hyperparameters for Experiment 5.}
\label{tab:springs_train_sample_eval_params}
\begin{tabular}{ll}
\toprule
Parameter & Value \\
\midrule
Noise sampling & \(\log\sigma\sim\mathrm{Unif}(\log\sigma_{\min},\log\sigma_{\max})\) \\
$\sigma_{\min}$ & \(0.002\) \\
$\sigma_{\max}$ & \(80.0\) \\
Optimizer & Adam \\
Training iterations & \(10,000\) \\
Batch size & \(256\) \\
Learning rate & \(3\times 10^{-4}\) \\
EMA rate & \(0.99\) \\
Sampler & Reverse VE sampler \citep{song2020score} \\
Sampling steps & \(512\) \\
\bottomrule
\end{tabular}
\end{table}

\section{Other Applications of Flux Matching}
We briefly outline additional potential applications of Flux Matching beyond those presented in the main text. These are directions we find exciting but were unable to pursue in this paper, and we hope they spark future research.
\subsection{Causality}
Vector fields, expressed as the drift of ordinary differential equations (ODEs) and stochastic differential equations (SDEs) \cite{rubenstein2016deterministic,hansen2014causal}, can be used to represent causal structure. Recent works \cite{lorch2024causal,bleile2026efficient} have explored learning generative vector fields to discover causal structures and perform inference by simulating the learned SDE. The core limitation of these works \cite{lorch2024causal,bleile2026efficient} is their inability to scale to high-dimensional data. For instance, \cite{lorch2024causal,bleile2026efficient} primarily evaluated on data with $20$ dimensions, far fewer than what is needed in settings like single-cell transcriptomics (via Perturb-seq \cite{dixit2016perturb}), which involves $\sim 20000$ dimensions. Notably, we can simply replace their loss function---KDS in \cite{lorch2024causal} and SKDS in \cite{bleile2026efficient}---with the Flux Matching loss to scale causal learning to very high dimensions, since, as shown in \Cref{unrestricted_generative_fields}, Flux Matching naturally scales to high-dimensional image datasets.

\subsection{Generative Modeling in Constrained Domains}
Prior work has studied diffusion models on constrained domains by either designing custom diffusion processes that respect the constraint set \(\Omega\) \cite{lou2023reflected} or by correcting invalid proposals through rejection or Metropolis-style steps when samples leave \(\Omega\) \cite{fishman2023metropolis}. Flux Matching could be a complementary approach. Even when the target distribution is supported on a constrained domain, there are many distribution generating vector fields that behave very differently near the boundary \(\partial \Omega\). Under a finite-step sampler, a field with large outward components near the boundary is more likely to produce invalid samples, while a field that is tangent to the boundary or points inward is more likely to remain inside the domain. Since Flux Matching does not require the learned field to equal the score, we can use the additional degrees of freedom to favor boundary-respecting dynamics. For example, the training objective could be designed so we penalize the learned field when it violates the support under the chosen discretization. 

\subsection{Spatially Structured Dependencies in Images}
Many generative problems have known structure among different parts of the sample like what we have outlined in \Cref{structured_generative_fields}. We focus here specifically on the application of images, where one example of structured dependencies mean allowing some regions of the image to influence others, while other interactions are not allowed. For example, the appearance of a human face may influence the appearance of sunglasses, but it should not directly alter the background. Flux Matching makes it possible to encode such region-to-region relationships directly in the architecture of the learned generative vector field (via, for example, masked attention). 

\subsection{Augmenting Existing Score-Based Models}
As shown in \Cref{flexibility_equation}, we can add any flux divergence-free term to $\nabla \log \pdata$ while still preserving the target distribution. Suppose we already have a trained off-the-shelf score-based model, and we want to leverage the benefits of Flux Matching (like accelerated sampling). Rather than training a new model from scratch using Flux Matching, we can train a network $v_{\phi}$ such that $\nabla \cdot (\pdata v_{\phi}) = 0$ using an analogous version of the Flux Matching loss:
\begin{equation}
\label{augmented_score_flux_loss}
    \mathcal L_{\mathrm{flux-aug}}(\theta)
:=
-\mathbb{E}_{\substack{t\sim q\\x_0\sim \pdata,\,x_t|x_0}}\!\left[
\frac{1}{q(t)}\,
v_\phi(x_0)^\top
\operatorname{sg}\!\left(
\frac{\partial x_t}{\partial x_0}^{\!\top} \nabla_{x_t} \mathcal{F}(x_t)
\right)
\right],
\end{equation}
where $\mathcal{F}(x_t) := \nabla \cdot v_\phi(x_t) + v_\phi(x_t) \cdot \nabla \log \pdata(x_t)$. Then, after $v_{\phi}$ is trained using $\mathcal L_{\mathrm{flux-aug}}(\theta)$, add $v_{\phi}$ to the already trained score model, $f_{\theta} = \nabla \log p + v_{\phi}$, and proceed to sample using $f_{\theta}$. 

This application can be useful in, for example, the case of faster mixing fields for accelerated sampling. We can simply train a fast mixing acceleration layer that we can "tack" onto an existing diffusion model we want to accelerate. 

\section{Flux Matching Details}
\label{noise_annealed_details_appendix}
\subsection{Component Calculation Specifications}
We provide an extended description of the different components in \Cref{loss_in_practice}. 
\subsubsection{Sampling MCMC horizon $t$ from importance sampler $q$}
\label{appendix_importance_sampler}
As mentioned in the main text, even though $q$ is supported on $[0,\infty)$, in practice, we find that defining the simulation horizon sampler $q$ to be either a truncated uniform or exponential on $[0,T]$ and setting $T = 4\sigma^2$ is sufficient. The bottom row of \Cref{mode_crossing_figure} supports this truncation: empirically, $\mathcal{L}_{\mathrm{flux}}$ decays approximately exponentially in $t$. Thus, after a sufficiently large cutoff, the remaining tail contribution is negligible. We find that setting $T = 4\sigma^2$ is sufficient, as shown in \Cref{mode_crossing_figure} since little mass remains after the red cutoff (set at $T = 4\sigma^2$). The simplest distribution for $q$ is the uniform density \(\mathcal{U}[0,T]\), but since the loss follows an exponential shape over $t$, a lower variance alternative is to sample from a truncated exponential distribution
\begin{equation}
q_\lambda(t)
=
\lambda e^{-\lambda t} \big/ \left(1-e^{-\lambda T}\right),
\qquad t\in[0,T],\quad \lambda>0,
\end{equation}
where $\lambda$ can be cheaply fitted during training as a single scalar parameter via

\begin{equation}
\label{q_single_aux_loss}
\mathcal L(\lambda)
:=
-
\mathbb E_{t\sim \bar q}
\left[
\operatorname{sg}\!\left(
\widehat{\mathcal L}_{\mathrm{flux}}(t)
\right)
\log q_{\lambda}(t)
\right].
\end{equation}
where \(\bar q\) is a fixed copy used to draw the current horizon $t$ and \(\widehat{\mathcal L}_{\mathrm{flux}}(t)\) is the realized Flux Matching loss. In the case of noise annealed Flux matching, we similarly learn
\(\lambda_\phi(\sigma)\), the rate of the truncated-exponential sampler $q_{\lambda_\phi(\sigma)}$ that is now dependent on $\sigma$. Let
\(\bar q_{\lambda_\phi(\sigma)}\) denote the fixed copy of this sampler used to draw
the current horizon \(t\), and let
\(\widehat{\mathcal L}_{\mathrm{flux}}^\sigma(t)\) be the realized noise annealed Flux
Matching loss, already importance reweighted by the sampling density
\(\bar q_{\lambda_\phi(\sigma)}\). We fit \(\lambda_\phi\) with 
\begin{equation}
\label{q_phi_aux_loss}
\mathcal L_q(\phi)
:=
-
\mathbb E_{\sigma\sim\mathcal P,\;t\sim \bar q_{\lambda_\phi(\sigma)}}
\left[
\operatorname{sg}\!\left(
\frac{\widehat{\mathcal L}_{\mathrm{flux}}^\sigma(t)}{\exp(s_\eta(\sigma))}
\right)
\log q_{\lambda_\phi(\sigma)}(t)
\right].
\end{equation}
where \(s_\eta(\sigma)\) is the learned normalizer (single-layer MLP) that reweighs losses from
different noise levels to be on comparable scales \cite{karras2024analyzing}. \(\lambda_\phi(\sigma)\) is also parameterized by a single-layer MLP and fitted simultaneously with the main network.

\subsubsection{Estimating $\nicefrac{\partial x_t}{\partial x_0}^{\top}\nabla_{x_t}r_\theta(x_t)$}
\label{appendix_variance_reduction_estimation}
We set weights $w_{ij}$ to be 
\begin{equation}
\label{weight_equation}
w_{ij}(t) = \operatorname{softmax}_{i}\!\left( -\nicefrac{\bigl\|x_t^{(j)} - x_0^{(i)} - \sigma^2(1-e^{-t})\nabla\log p_\sigma(x_0^{(i)})\bigr\|^2}{\bigl[2\sigma^2(1-e^{-2t})\bigr]} \right).
\end{equation}

\subsection{Variance at Intermediate Noise Levels}
\begin{figure}[t]
  \centering
  \includegraphics[width=\textwidth]{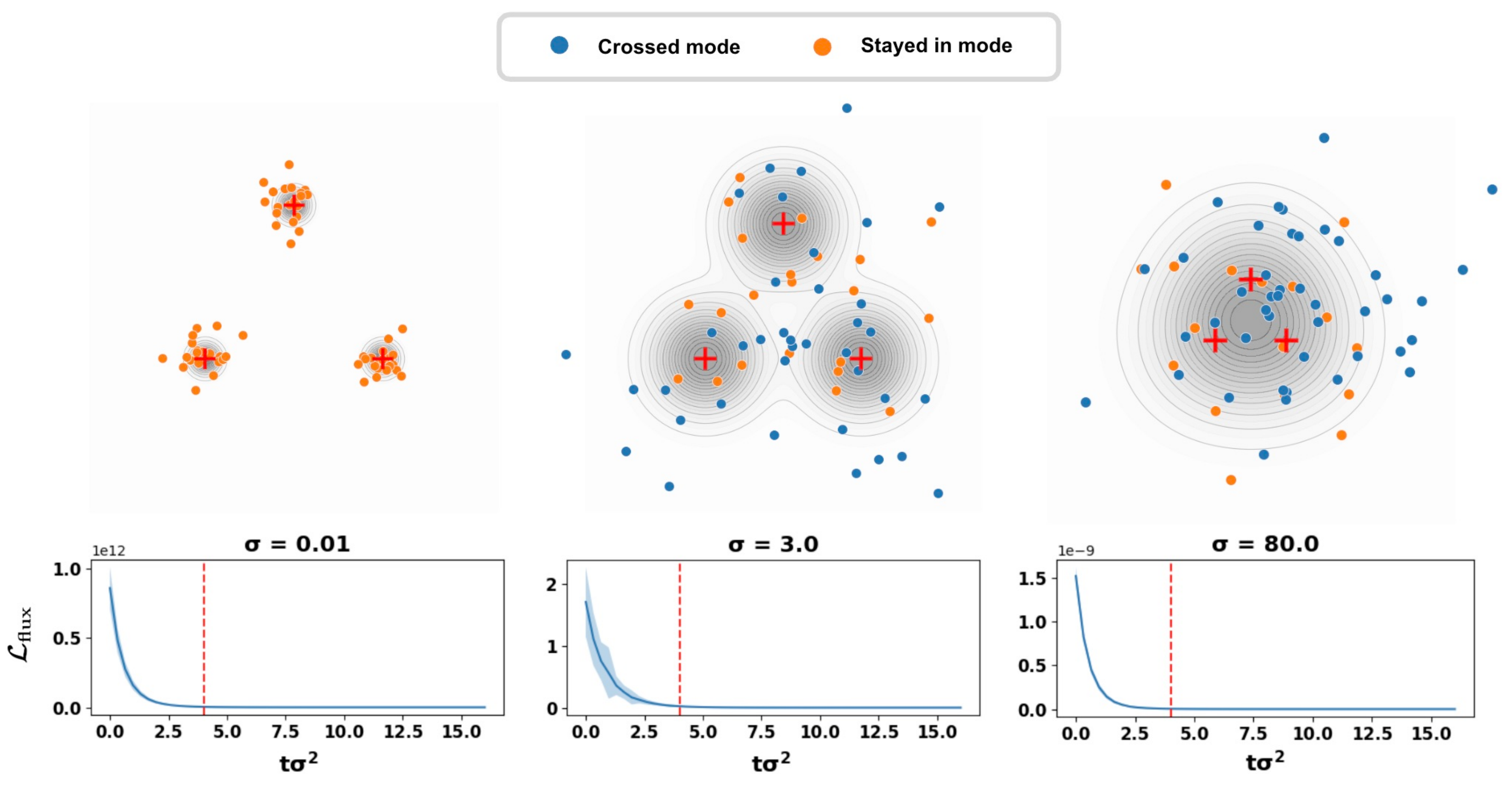}
  \caption{
  Mode crossing and loss variance in a three-component Gaussian mixture.
  The component means are fixed, while the component variance increases with
  \(\sigma\). (\textbf{Top row}) show terminal samples after running Langevin dynamics
  for \(100\) steps at different noise levels. Samples are colored blue if their
  terminal point is assigned to a different mode than their initial point, and
  orange otherwise. (\textbf{Bottom row}) shows the corresponding
  \(\mathcal{L}_{\mathrm{flux}}\) as a function of the simulation horizon, with
  shaded bands denoting standard deviation.
  }
  \label{mode_crossing_figure}
\end{figure}
In the noise annealed version of Flux Matching, we observe that the variance of
the objective can vary substantially across noise levels. The bottom row of
\Cref{mode_crossing_figure} illustrates this effect. At very low noise
\((\sigma=0.01)\) and very high noise \((\sigma=80.0)\), the empirical
\(\mathcal{L}_{\mathrm{flux}}\) has relatively small variance, as indicated by
the narrow standard-deviation bands. In contrast, at intermediate noise levels
\((\sigma \approx 3.0)\), the standard-deviation bands are much larger across
simulation horizons, suggesting that this regime produces a substantially
higher-variance estimator.

The top row of \Cref{mode_crossing_figure} provides intuition for this phenomenon. At low noise, the modes are well separated, and Langevin chains tend to remain in the same mode even after many simulation steps. At high noise, although chains mix more readily, the noised distribution is already close to a single Gaussian, so mode crossing is less informative and less variable. The intermediate regime is a difficult sweet spot where the effective supports of the modes begin to overlap, but the modes still remain distinct. As a result, some chains cross between modes while others do not, producing large sample-to-sample variation in the loss estimate. A related observation was also made by \cite{xu2023stable}, who found that intermediate noise levels can have higher variance in their learning objective.

This intermediate regime is also where the cross-chain minibatch estimator in
\Cref{flux_loss_learning_objective}, used to estimate
\(\nicefrac{\partial x_t}{\partial x_0}^{\top}
\nabla_{x_t} r_\theta(x_t)\), is most useful. By averaging information across
chains in the minibatch, the estimator is less sensitive to whether any single
chain crosses modes. In contrast, we found that the single-chain estimator
\(\nabla_{x_0} r_\theta(x_t)\) is often sufficient in the low- and high-noise
regimes, where the loss variance is much smaller.

\subsection{Architectures Need to Be Gradient Friendly}

The Flux Matching loss in \Cref{flux_matching_loss} requires differentiating
\(r_\theta\) with respect to the input. Since \(r_\theta\) itself contains a
divergence of the learned vector field, training depends on input derivatives of
\(f_\theta\) beyond the vector-field level. Thus, architectures used to
parameterize \(f_\theta\) should have well-behaved input gradients and
divergences. We did not study this issue systematically, but we found that using the specific architecture of ViT \cite{bao2023all} to parameterize $f_{\theta}$ does not work well (while UNet did) in Flux Matching. We hypothesize that the patching procedure used in ViT may be problematic, but this remains an open question for future work.

\section{\texorpdfstring{$v$}{v}-Flux Matching for Distribution Flows}
\label{v_flux_matching_appendix}

\begin{figure}
  \centering
  \includegraphics[width=\textwidth]{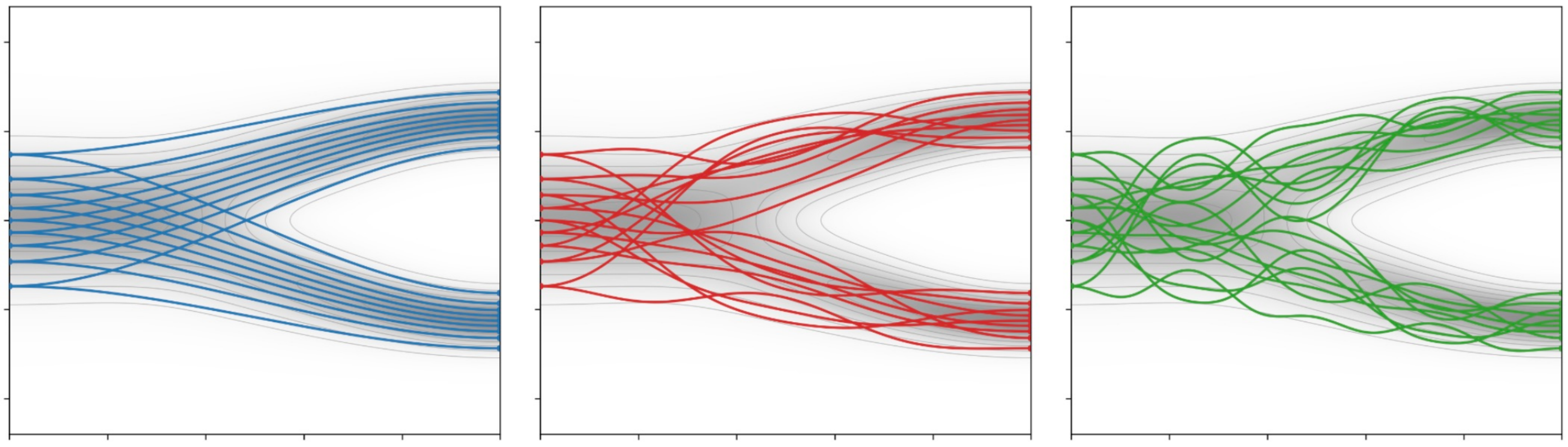}
  \caption{
  Distribution flows can have the same marginal evolution but different particle
  trajectories. The gray background shows the same path of marginals, while the
  solid colored curves show different individual transports that realize this
  path.
  }
  \label{distribution_flow_figure}
\end{figure}

Flux Matching is not restricted to matching the score. More generally, it can
match the flux divergence induced by any vector field. This allows us to start
from a distribution flow model, such as flow matching
\cite{lipman2022flow,tong2023improving} or related constructions
\cite{neklyudov2023action,albergo2022building,liu2022flow}, and learn
alternative particle dynamics that preserve the same marginal path.

\begin{blueemphbox}
\textbf{New Capability: \textit{Same} Marginal Path, \textit{Many} Particle
Paths}. Flux Matching can learn different individual transports while preserving
the same evolution of distributions.
\end{blueemphbox}

\subsection{Distribution Flows and Equivalent Vector Fields}
\label{distribution_flow_equivalence}

Let \(a\in[0,1]\) denote distribution flow time, and let
\(\{p_a\}_{a\in[0,1]}\) be a path of densities. A velocity field \(v_a\)
induces this path if it satisfies the continuity equation
\begin{equation}
\label{continuity_equation}
\frac{\partial p_a(x)}{\partial a}
=
-\nabla\cdot\left(p_a(x)v_a(x)\right).
\end{equation}
The velocity field that realizes a given marginal path is not unique: any two vector fields \(v_a\) and \(f_a\) induce the same instantaneous
distributional change whenever
\begin{equation}
\label{same_flux_divergence_condition}
\nabla\cdot\left(p_a f_a\right)
=
\nabla\cdot\left(p_a v_a\right).
\end{equation}
Assume $\nabla\cdot\left(p_a f_a\right) = \nabla\cdot\left(p_a v_a\right)$ for all $a \in [0,1]$. If \(v_a\) and \(f_a\) are initialized with the same density \(p_0\), they induce the same marginal path \(\{p_a\}_{a\in[0,1]}\), and in particular the
same terminal distribution \(p_1\).

\subsection{\texorpdfstring{$v$}{v}-Flux Matching at a Single Marginal}
\label{single_v_flux_matching}

For clarity, we first fix a distribution flow time \(a\) and a reference velocity field \(v_a\). Let \((x_t)_{t\ge0}\) denote the diffusion $dx_t=\nabla\log p_a(x_t)\,dt + \sqrt{2}\,dW_t$ with stationary density \(p_a\).  To emphasize, the diffusion is only used to obtain the correct geometry on the loss; the target vector field being flux matched is \(v_a\), not \(\nabla\log p_a\). Let
\begin{equation}
\label{v_flux_residual}
u_{\theta,a}(x)
=
f_\theta(x,a)-v_a(x),
\qquad
r_{\theta,a}(x)
=
\nabla\cdot u_{\theta,a}(x)
+
u_{\theta,a}(x)\cdot\nabla\log p_a(x).
\end{equation}
Define
\begin{equation}
\label{v_flux_matching_loss}
\mathcal L_{\mathrm{v\text{-}flux}}^a(\theta)
:=
-
\mathbb{E}_{\substack{t\sim q\\x_0\sim p_a,\;x_t|x_0}}
\left[
\frac{1}{q(t)}
u_{\theta,a}(x_0)^\top
\operatorname{sg}\left(
\frac{\partial x_t}{\partial x_0}^{\top}
\nabla_{x_t}r_{\theta,a}(x_t)
\right)
\right].
\end{equation}
This is the same objective as \Cref{flux_matching_loss}, with \(\pdata\) replaced by
\(p_a\) and the score target replaced by \(v_a\).

\begin{graypropbox}
\begin{restatable}[\(v\)-Flux Matching]{corollary}{vfluxcorollary}
\label{cor:v_flux_matching}
Fix \(a\in[0,1]\). Assume \(p_a>0\) on \(\mathbb R^d\) and boundary terms in
integration-by-parts arguments vanish. Let
\(\Pi_{\mathrm{flux}}^{p_a}\) follow the same definition as \Cref{grad_proj_fisher_div} with respect to \(p_a\). Let
\begin{equation}
\label{v_projected_fisher_divergence}
\widetilde{\mathcal J}_{\mathrm{v\text{-}flux}}^a(\theta)
:=
\mathbb E_{x\sim p_a}
\left[
\left\|
\Pi_{\mathrm{flux}}^{p_a} f_\theta(\cdot,a)(x)
-
\Pi_{\mathrm{flux}}^{p_a} v_a(x)
\right\|^2
\right].
\end{equation}
Then
\begin{equation}
\label{v_flux_matching_gradient_equivalence}
\nabla_\theta
\widetilde{\mathcal J}_{\mathrm{v\text{-}flux}}^a(\theta)
=
2\nabla_\theta
\mathcal L_{\mathrm{v\text{-}flux}}^a(\theta).
\end{equation}
\end{restatable}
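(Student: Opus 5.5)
The plan is to reduce the corollary to \Cref{theorem1}, which was proved through \Cref{lemma1}, \Cref{lemma2}, and \Cref{lemma3}. Only \Cref{lemma1} used that the target was the score, and it used only two facts: the target is a gradient field, and it is fixed by $\Pi_{\mathrm{flux}}$. I will re-run that chain with $\pdata$ replaced by $p_a$, $\nabla\log\pdata$ replaced by $v_a$, and the Langevin diffusion $dx_t=\nabla\log p_a(x_t)\,dt+\sqrt2\,dW_t$ with generator $L_a=\Delta+\nabla\log p_a\cdot\nabla$. This diffusion is reversible with respect to $p_a$.

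\textbf{Step 1 (analogue of \Cref{lemma1}).} Set
\[
g_{\theta,a}:=\Pi^{p_a}_{\mathrm{flux}}f_\theta(\cdot,a)-\Pi^{p_a}_{\mathrm{flux}}v_a .
\]
This is a difference of gradient fields, so $g_{\theta,a}=\nabla\phi_{\theta,a}$, and I normalize $\phi_{\theta,a}\in L^2_0(p_a)$. By the defining property of $\Pi^{p_a}_{\mathrm{flux}}$ and linearity of $w\mapsto p_a^{-1}\nabla\cdot(p_aw)$,
\[
\frac{1}{p_a}\nabla\cdot(p_a g_{\theta,a})=\frac{1}{p_a}\nabla\cdot\bigl(p_a(f_\theta(\cdot,a)-v_a)\bigr)=r_{\theta,a}.
\]
Hence $r_{\theta,a}=L_a\phi_{\theta,a}$. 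From here the proof of \Cref{lemma1} goes through verbatim: integrate by parts to get $\mathbb E_{p_a}\|g_{\theta,a}\|^2=-\langle\phi_{\theta,a},r_{\theta,a}\rangle_{p_a}$, write this as $\langle r_{\theta,a},(-L_a)^{-1}r_{\theta,a}\rangle_{p_a}$, and use $(-L_a)^{-1}=\int_0^\infty e^{tL_a}\,dt$ with $e^{tL_a}r(x)=\mathbb E[r(x_t)\mid x_0=x]$. The result is
\[
\widetilde{\mathcal J}^a_{\mathrm{v\text{-}flux}}(\theta)=\int_0^\infty\mathbb E\bigl[r_{\theta,a}(x_0)\,r_{\theta,a}(x_t)\bigr]\,dt .
\]
This step also needs $r_{\theta,a}\in L^2_0(p_a)$. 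It has mean zero automatically, because $\int\nabla\cdot(p_a u_{\theta,a})\,dx=0$ under the vanishing-boundary assumption. Unlike the score case, $u_{\theta,a}$ is no longer a perturbation of $\nabla\log p_a$, but nothing in the argument needed that.

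\textbf{Step 2 (analogue of \Cref{lemma2}).} \Cref{lemma2} only used $r=p^{-1}\nabla\cdot(pu)$ for an arbitrary regular field $u$, one integration by parts against $p$, and $\nabla_{x_0}\mathbb E[r(x_t)\mid x_0]=\mathbb E[\partial x_t/\partial x_0^\top\nabla r(x_t)\mid x_0]$. It therefore applies directly with $u=u_{\theta,a}$ and $p=p_a$, converting each autocorrelation into
\[
-\mathbb E\bigl[u_{\theta,a}(x_0)^\top\,\partial x_t/\partial x_0^\top\,\nabla_{x_t}r_{\theta,a}(x_t)\bigr].
\]
Integrating over $t$ and importance-weighting by $q(t)^{-1}$ with $t\sim q$ gives $\widetilde{\mathcal J}^a_{\mathrm{v\text{-}flux}}$ as the loss without stop-gradient.

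\textbf{Step 3 (analogue of \Cref{lemma3}).} Apply \Cref{lemma3}. Its proof needs three things: $\partial_{\theta_j}r_{\theta,a}=p_a^{-1}\nabla\cdot(p_a\,\partial_{\theta_j}u_{\theta,a})$, Step 2 applied to $\partial_{\theta_j}u_{\theta,a}$, and reversibility of the $p_a$-Langevin diffusion to swap $x_0$ and $x_t$. The first holds because $\partial_{\theta_j}u_{\theta,a}=\partial_{\theta_j}f_\theta(\cdot,a)$, since $v_a$ does not depend on $\theta$. With these, the two product-rule terms coincide, giving $\nabla_\theta\widetilde{\mathcal J}^a_{\mathrm{v\text{-}flux}}=2\nabla_\theta\mathcal L^a_{\mathrm{v\text{-}flux}}$.

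\textbf{Main obstacle.} Step 1 is the only place needing care. The issue is that $v_a$ itself is generally not a gradient field, unlike the score. The fix is that the projected divergence compares $\Pi^{p_a}_{\mathrm{flux}}v_a$ rather than $v_a$, and $p_a^{-1}\nabla\cdot(p_a\,\cdot)$ cannot distinguish these two. I would therefore state explicitly the existence and uniqueness of $\Pi^{p_a}_{\mathrm{flux}}v_a$, which amounts to solvability of the weighted Poisson equation $L_a\psi=p_a^{-1}\nabla\cdot(p_av_a)$ with mean-zero right-hand side. I would also state the integrability assumptions $\phi_{\theta,a},r_{\theta,a}\in L^2_0(p_a)$, which play the role of those assumed in \Cref{lemma1}.
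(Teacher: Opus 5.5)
Your proposal is correct and takes the paper's route: the paper proves the corollary in one line, by substituting $p_a$ for $\pdata$ and $u_{\theta,a}=f_\theta(\cdot,a)-v_a$ for $u_\theta$ in the proof of \Cref{theorem1}, which is exactly the chain of lemmas you re-run. Your explicit treatment of the \Cref{lemma1} step goes beyond the paper's reduction, which leaves that step implicit: you use $\Pi^{p_a}_{\mathrm{flux}}v_a$ as the gradient-field target, since $v_a$ itself is generally not a gradient field, and you observe that $p_a^{-1}\nabla\cdot(p_a\,\cdot)$ cannot distinguish $\Pi^{p_a}_{\mathrm{flux}}v_a$ from $v_a$.
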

\end{graypropbox}

\begin{proof}
This is a direct application of \Cref{theorem1}. The proof of
\Cref{theorem1} only depends on the density \(\pdata\) and the mismatch field
\(u_\theta\). Setting \(\pdata=p_a\) and
\(u_\theta=u_{\theta,a}=f_\theta(\cdot,a)-v_a\) gives exactly
\Cref{v_flux_matching_loss} and
\Cref{v_projected_fisher_divergence}.
\end{proof}

\textbf{Marginal Velocity for Flow Matching Paths}. The marginal velocity depends on the path used to define the interpolating
marginals \(p_a\). As a simple example, consider the conditional flow matching
path of \cite{lipman2022flow} with \(x_1\sim p_1\) and
\begin{equation}
p_a(x\mid x_1)
=
\mathcal N\!\left(x;\,a x_1,\,(1-a)^2 I\right).
\end{equation}
Equivalently, \(x_a=a x_1+(1-a)\epsilon\) with \(\epsilon\sim\mathcal N(0,I)\).
The marginal velocity can be approximated by the minibatch
\(\{x_1^{(i)}\}_{i=1}^B\), which gives
\begin{equation}
v_a(x)
\approx
\sum_{i=1}^B
w_i(x,a)\,
\frac{x_1^{(i)}-x}{1-a},
\qquad
w_i(x,a)
=
\frac{p_a(x\mid x_1^{(i)})}
{\sum_{j=1}^B p_a(x\mid x_1^{(j)})}.
\end{equation}
which is a common approximation used in few-step generative models (e.g. \cite{geng2025mean}).

\subsection{Pathwise \texorpdfstring{$v$}{v}-Flux Matching}
\label{pathwise_v_flux_matching}

The previous subsection defined \(v\)-Flux Matching at a fixed marginal \(p_a\).
To match an entire distribution flow, we apply the same objective independently
along the path \(\{p_a\}_{a\in[0,1]}\). The pathwise \(v\)-Flux Matching
objective is
\begin{equation}
\label{pathwise_v_flux_matching_loss}
\mathcal L_{\mathrm{path\text{-}v\text{-}flux}}(\theta,\phi,\eta)
:=
\mathbb E_{a\sim \nu}
\left[
\mathcal L_{\mathrm{v\text{-}flux}}^a(\theta,\phi)
/ \exp(s_\eta(a))
+
s_\eta(a)
\right],
\end{equation}
where \(\nu\) is the sampling distribution over flow times \(a\in[0,1]\), and
\(\mathcal L_{\mathrm{v\text{-}flux}}^a(\theta,\phi)\) denotes the fixed-\(a\)
loss in \Cref{v_flux_matching_loss} with \(f_\theta^a(x):=f_\theta(x,a)\) and
diffusion simulation horizon distribution \(q_{\lambda_\phi(a)}\). All diffusion simulations are
performed with the score \(\nabla\log p_a\). The learned normalizer
\(s_\eta(a)\) reweighs losses from different marginals so that they remain on
comparable scales \cite{karras2024analyzing}, while \(\lambda_\phi(a)\)
parameterizes the diffusion horizon sampler at each point along the path.

\section{Related Work}

\subsection{\textit{Analyzing} Non-Conservative Generative Vector Fields}
A long line of work from the MCMC and statistical physics literature has analyzed the space of vector fields that preserve a given stationary distribution and characterized the benefits of departing from reversibility. Augmenting the score with a divergence-free flux component has been shown to accelerate convergence to the stationary distribution \cite{hwang2005accelerating, rey2015irreversible} (which we empirically corroborate in \Cref{fast_mixing_generative_fields}) and to reduce the asymptotic variance of resulting estimators \cite{duncan2016variance, duncan2017nonreversible}. \cite{ma2015complete} gives a complete parameterization of all continuous Markov processes admitting a prescribed stationary distribution. Crucially, these works assume a closed-form target distribution and prescribe the non-conservative drift analytically rather than learning it from data. Flux Matching is, to our knowledge, the first objective to operationalize this body of theory into a learning objective for non-conservative generative dynamics that scales to high-dimensional distributions.

\subsection{\textit{Learning} Non-Conservative Generative Vector Fields}
A handful of prior works have explored learning non-score generative vector fields. \cite{zhang2025equilibrium} learns a non-conservative vector field to assign dynamics to snapshot data, similar to our RNA velocity experiment, but their method is restricted to $2$D. The objectives of \cite{lorch2024causal,bleile2026efficient}, while originally proposed for causal SDEs, could in principle learn arbitrary generative vector fields. However, neither scales beyond $20$D. Flux Matching is novel not by the goal of learning non-gradient fields, but by being the first objective to do so while scaling to high-dimensional, complex distributions.

\cite{petrovic2025curly} frame their method as learning non-gradient field dynamics, yet it fundamentally matches a prior to a terminal distribution via a learned (rather than predefined) interpolation, a special case of \cite{neklyudov2023computational}. Crucially, their method cannot learn non-gradient fields given a single distribution. The two approaches are in fact complementary, with \cite{petrovic2025curly} producing a bridge of distributions and Flux Matching providing flexibility over the individual particle trajectories that realize the given bridge.

\subsection{Enforcing the Fokker--Planck (or Continuity) Equation}
\cite{lai2023fp} and \cite{huang2026improving} enforce the Fokker--Planck equation (respectively, the continuity equation) as a regularizer on top of a primary score matching or flow matching objective, providing tighter control over the induced PDE. In contrast, Flux Matching is a standalone generative objective: \cite{lai2023fp,huang2026improving} add a regularizer to a generative loss, whereas Flux Matching \emph{is} the generative loss. Moreover, both prior methods are restricted to score matching/flow matching-style trajectories, while Flux Matching admits arbitrary trajectories realizing the same marginals.

Via the Fokker--Planck, \cite{horvat2024gauge} observe that diffusion models possess a gauge degree of freedom (as we formalize in \Cref{prop1}), but treat this purely as an observation with no associated learning procedure. 

\section{Limitations}
\label{limitations}
Flux Matching is most useful when the goal is not only to model a distribution, but also to learn a generative vector field with additional desired structure. If one only cares about unrestricted generative modeling, then standard DSM already provides a highly optimized objective. Flux Matching should be viewed less as a replacement for DSM and more as a paradigm that enables use cases where the vector field itself matters. The main practical limitation is computational cost, with our implementation being roughly $2\text{--}4\times$ more expensive than DSM in both runtime and memory. Reducing this overhead is an important direction for future work.

Further, the flexibility that Flux Matching provides is only useful when paired with a meaningful inductive bias or auxiliary objective. Flux Matching expands the class of learnable generative vector fields but does not automatically identify the best member for a given application, and designing architectures or regularizers that exploit this extra freedom remains problem dependent. Finding clever and elegant ways of constraining the learned vector field to exhibit a desired attribute is itself the central task facing a practitioner using Flux Matching. We expect that individual instantiations of such constructions---each tailored to a particular structural or application-specific goal---can constitute a substantial contribution in their own right.


\end{document}